\newcolumntype{L}{>{\raggedright\arraybackslash}X}
\numberwithin{equation}{section}
\newtheorem{theorem}{Theorem}[section]
\newtheorem{lemma}[theorem]{Lemma}
\newtheorem{remark}[theorem]{Remark}
\newtheorem{prop}[theorem]{Proposition}
\newtheorem{obs}[theorem]{Observation}
\newtheorem{question}[theorem]{Question}
\newtheorem{example}[theorem]{Example}
\newtheorem{defin}[theorem]{Definition}
\newcommand{\cA}{\mathcal{A}}
\newcommand{\cD}{\mathcal{D}}
\newcommand{\cK}{\mathcal{K}}
\newcommand{\cN}{\mathcal{N}}
\newcommand{\cP}{\mathcal{P}}
\newcommand{\cW}{\mathcal{W}}
\newcommand{\cX}{\mathcal{X}}
\newcommand{\R}{\mathbb{R}}
\newcommand{\N}{\mathbb{N}}
\newcommand{\Prob}{\mathbb{P}}
\newcommand*{\E}{\mathbb{E}}
\newcommand{\supp}{\operatorname{supp}}
\newcommand*{\Otilde}{\tilde{O}}
\DeclareMathOperator{\logdel}{\log\tfrac{1}{\delta}}
\newcommand*{\eps}{\varepsilon}
\newcommand*{\del}{\delta}
\newcommand{\plusminus}{\raisebox{.2ex}{$\scriptstyle\pm$}}
\renewcommand{\leq}{\leqslant}
\renewcommand{\geq}{\geqslant}
\DeclareMathOperator{\half}{\frac{1}{2}}
\providecommand{\norm}[1]{\lVert{#1}\rVert}
\newcommand{\proj}{\Pi_{\cK}}
\newcommand{\sig}{\sigma}
\newcommand{\Tthresh}{\bar{T}}
\newcommand{\gradf}{\nabla f}
\newcommand{\Dalp}[2]{\cD_{\alpha}(#1 \; \| \; #2)}
\newcommand{\Dalplr}[2]{\cD_{\alpha}\left(#1 \; \| \; #2\right)}
\newcommand{\Dalpshiftlr}[3]{\cD_{\alpha}^{(#1)}\left(#2 \; \| \; #3\right)}
\newcommand{\CNI}{\mathrm{CNI}}
\newcommand{\SGD}{\textsf{SGD}}
\newcommand{\SGLD}{\textsf{SGLD}}
\newcommand{\IGD}{\textsf{Noisy-GD}}
\newcommand{\ISGD}{\textsf{Noisy-SGD}}
\newcommand{\CSGD}{\textsf{Noisy-CGD}}
\newcommand*\circled[1]{\tikz[baseline=(char.base)]{
		\node[shape=circle,draw,inner sep=2pt] (char) {#1};}}
\def\blfootnote{\gdef\@thefnmark{}\@footnotetext}
\begin{document}

	\title{Privacy of Noisy Stochastic Gradient Descent: \\ More Iterations without More Privacy Loss}\blfootnote{
		A preliminary conference version of this paper appeared in NeurIPS 2022~\citep{AltTal22privacyneurips}.
	}
	
	\author{
		Jason M. Altschuler\footnote{This work was done while JA was an intern at Apple during Summer 2021. JA was also supported in part by NSF Graduate Research Fellowship 1122374 and a TwoSigma PhD Fellowship.}\\
		MIT\\
		\texttt{jasonalt@mit.edu}
		\and
		Kunal Talwar \\
		Apple \\
		\texttt{ktalwar@apple.com}
	}
	\date{}
	\maketitle


\begin{abstract}
A central issue in machine learning is how to train models on sensitive user data. Industry has widely adopted a simple algorithm: Stochastic Gradient Descent with noise (a.k.a. Stochastic Gradient Langevin Dynamics). However, foundational theoretical questions about this algorithm's privacy loss remain open---even in the seemingly simple setting of smooth convex losses over a bounded domain. Our main result resolves these questions: for a large range of parameters, we characterize the differential privacy up to a constant factor. This result reveals that all previous analyses for this setting have the wrong qualitative behavior. Specifically, while previous privacy analyses increase ad infinitum in the number of iterations, we show that after a small burn-in period, running SGD longer leaks no further privacy. 

Our analysis departs from previous approaches based on fast mixing, instead using techniques based on optimal transport (namely, Privacy Amplification by Iteration) and the Sampled Gaussian Mechanism (namely, Privacy Amplification by Sampling). Our techniques readily extend to other settings, e.g., strongly convex losses, non-uniform stepsizes, arbitrary batch sizes, and random or cyclic choice of batches.
\end{abstract}

	\setcounter{tocdepth}{2}	
	\tableofcontents
	

\section{Introduction}

Convex optimization is a fundamental task in machine learning. When models are learnt on sensitive data, privacy becomes a major concern---motivating a large body of work on differentially private convex optimization~\citep{ChaudhuriMoSa11,bassily2014private,BassilyFeTaTh19,feldman2020private, KiferSmTh12,JainTh14,TalwarThZh15,BassilyGuNa21}. In practice, the most common approach for training private models is $\ISGD$, i.e., Stochastic Gradient Descent with noise added each iteration. This algorithm is simple and natural, has optimal utility bounds~\citep{bassily2014private,BassilyFeTaTh19}, and is implemented on mainstream machine learning platforms such as Tensorflow (TF Privacy)~\citep{abadi2016tensorflow}, PyTorch (Opacus)~\citep{yousefpour2021opacus}, and JAX~\citep{jax2018github}. 

\par Yet, despite the simplicity and ubiquity of this $\ISGD$ algorithm, we do not understand basic questions about its \textit{privacy loss}\footnote{We follow the literature by writing ``privacy loss'' to refer to the (R\'enyi) differential privacy parameters.}---i.e., how sensitive the output of $\ISGD$ is with respect to the training data. Specifically:
\begin{question}\label{q:full}
	What is the privacy loss of $\ISGD$ as a function of the number of iterations?
\end{question}
Even in the seemingly simple setting of smooth convex losses over a bounded domain, this fundamental question has remained wide open. In fact, even more basic questions are open: 

\begin{question}\label{q:conv}
	Does the privacy loss of $\ISGD$ increase ad infinitum in the number of iterations? 
\end{question}

The purpose of this paper is to understand these fundamental theoretical questions. Specifically, we resolve Questions~\ref{q:full} and~\ref{q:conv} by characterizing the privacy loss of $\ISGD$ up to a constant factor in this (and other) settings for a large range of parameters. Below, we first provide context by describing previous analyses in \S\ref{ssec:intro:prev-1}, and then describe our result in \S\ref{ssec:intro:cont} and techniques in \S\ref{ssec:intro:tech}.

\subsection{Previous approaches and limitations}\label{ssec:intro:prev-1}
Although there is a large body of research devoted to understanding the privacy loss of $\ISGD$, all existing analyses have (at least) one of the following two drawbacks. 

\paragraph*{Privacy bounds that increase ad infinitum.} One type of analysis approach yields upper bounds on the DP loss (resp., R\'enyi DP loss) that scale in the number of iterations $T$ as $\sqrt{T}$ (resp., $T$). This includes the original analyses of $\ISGD$, which were based on the techniques of Privacy Amplification by Sampling and Advanced Composition~\citep{bassily2014private, abadi2016deep}, as well as alternative analyses based on the technique of Privacy Amplification by Iteration~\citep{pabi}. A key issue with these analyses is that they increase unboundedly in $T$. This limits the number of iterations that $\ISGD$ can be run given a reasonable privacy budget, typically leading to suboptimal optimization error in practice. Is this a failure of existing analysis techniques or an inherent fact about the privacy loss of $\ISGD$?

\paragraph*{Privacy bounds that apply for large $T$ and require strong additional assumptions.} 
The second type of analysis approach yields convergent upper bounds on the privacy loss, but requires strong additional assumptions. This approach is based on connections to sampling. The high-level intuition is that $\ISGD$ is a discretization of a continuous-time algorithm with bounded privacy loss. Specifically, $\ISGD$ can be interpreted as the Stochastic Gradient Langevin Dynamics ($\SGLD$) algorithm~\citep{WellingT11}, which is a discretization of a continuous-time Markov process whose stationary distribution is equivalent to the exponential mechanism~\citep{McSherryT07} and thus is differentially private under certain assumptions.  

\par However, making this connection precise requires strong additional assumptions and/or the resolution of longstanding open questions about the mixing time of $\SGLD$ (see \S\ref{ssec:intro:rel} for details). Only recently did a large effort in this direction culminate in the breakthrough work by~\citet{ChourasiaYS21}, which proves that {\em full batch}\footnote{Recently, \citet{YeS22} and \citet{RyffelBP22}, in works concurrent to the present paper, extended this result of~\citep{ChourasiaYS21} to $\SGLD$ by removing the full batch assumption; we also obtain the same result by a direct extension of our (completely different) techniques, see \S\ref{app:sc}. Note that both these papers~\citep{YeS22,RyffelBP22} still require strongly convex losses, and in fact state in their conclusions that the removal of this assumption is an open problem that ``would pave the way for wide adoption by data scientists.'' Our main result resolves this question.\label{fn:sc}} Langevin dynamics (a.k.a., $\IGD$ rather than $\ISGD$) has a privacy loss that converges as $T \to \infty$ in this setting where the smooth losses are additionally assumed to be \emph{strongly convex}.

Unfortunately, the assumption of strong convexity seems unavoidable with current techniques. Indeed, in the absence of strong convexity, it is not even known if $\ISGD$ converges to a private stationary distribution, let alone if this convergence occurs in a reasonable amount of time. (The tour-de-force work~\citep{bubeck2018sampling} shows mixing in (large) polynomial time, but only in total variation distance which does not have implications for privacy.)
There are fundamental challenges for proving such a result. In short, $\SGD$ is only a weakly contractive process without strong convexity, which means that its instability increases with the number of iterations~\citep{HardtRS16}---or in other words, it is plausible that $\ISGD$ could run for a long time while memorizing training data, which would of course mean it is not a privacy-preserving algorithm. As such, given state-of-the-art analyses in both sampling and optimization, it is unclear if the privacy loss of $\ISGD$ should even remain bounded; i.e., it is unclear what answer one should even expect for Question~\ref{q:conv}, let alone Question~\ref{q:full}.

\subsection{Contributions}\label{ssec:intro:cont}

The purpose of this paper is to resolve Questions~\ref{q:full} and~\ref{q:conv}. To state our result requires first recalling the parameters of the problem. Throughout, we prefer to state our results in terms of R\'enyi Differential Privacy (RDP); these RDP bounds are easily translated to DP bounds, as mentioned below in Remark~\ref{rem:iid-dp}. See the preliminaries section \S\ref{sec:prelim} for definitions of DP and RDP. 

\par We consider the basic $\ISGD$ algorithm run on a dataset $\cX = \{x_1, \dots, x_n\}$, where each $x_i$ defines a convex, $L$-Lipschitz, and $M$-smooth loss function $f_i(\cdot)$ on a convex set $\cK$ of diameter $D$. For any stepsize $\eta \leq 2/M$, batch size $b$, and initialization $\omega_0 \in \cK$, we iterate $T$ times the update 
\[
\omega_{t+1} \gets \proj\big[\omega_t -\eta ( G_t  + Z_t)\big],
\]
where $G_t$ denotes the average gradient vector on a random batch of size $b$, $Z_t \sim \cN(0, \sigma^2 I_d)$ is an isotropic Gaussian\footnote{Note that different papers have different notational conventions for the total noise $\eta Z$: the variance is $\eta^2 \sig^2$ here and, e.g.,~\citep{pabi}; but is $\eta \sig^2$ in other papers like~\citep{ChourasiaYS21}; and is $\sig^2$ in others like~\citep{bassily2014private}. To translate results, simply rescale $\sig$.}, and $\proj$ denotes the Euclidean projection onto $\cK$.

\par Known privacy analyses of $\ISGD$ give an ($\alpha,\eps)$-RDP upper bound of
\begin{align}
	\eps \lesssim \frac{\alpha L^2}{n^2\sigma^2} T
	\label{eq:simple-bound}
\end{align}
which increases ad infinitum as the number of iterations $T \to \infty$~\citep{bassily2014private, abadi2016deep,pabi}. Our main result is a tight characterization of the privacy loss, answering Question~\ref{q:full}. This result also answers Question~\ref{q:conv} and shows that previous analyses have the wrong qualitative behavior: after a small burn-in period of $\Tthresh \asymp \tfrac{nD}{L\eta}$ iterations, $\ISGD$ leaks no further privacy. See Figure~\ref{fig:main} for an illustration.

\begin{figure}
	\centering
	\includegraphics[width=0.7\linewidth]{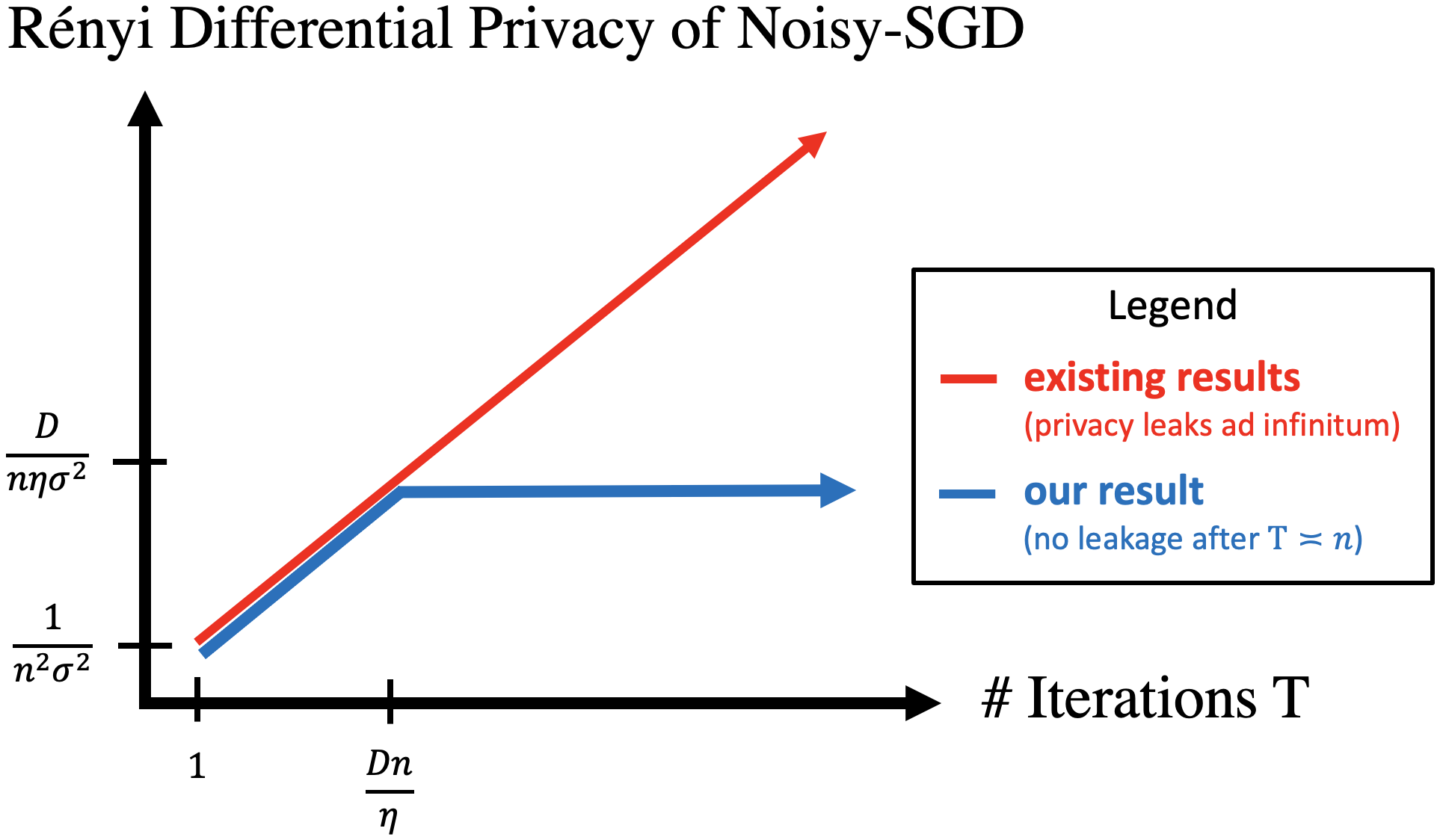}
	\caption{\footnotesize Even in the basic setting of smooth convex optimization over a bounded domain, existing implementations and theoretical analyses of $\ISGD$---the standard algorithm for private optimization---leak privacy ad infinitum as the number of iterations $T$ increases. Our main result Theorem~\ref{thm:iid} establishes that after a small burn-in period of $\Tthresh \asymp n$ iterations, there is no further privacy loss. For simplicity, this plot considers Lipschitz parameter $L = 1$ (wlog by rescaling), R\'enyi parameter $\alpha$ of constant size (the regime in practice), and omits logarithmic factors; see the main text for our precise (and optimal) dependence on all parameters.} \label{fig:main}
\end{figure}

\begin{theorem}[Informal statement of main result: tight characterization of the privacy of \ISGD]\label{thm:iid}
	For a large range of parameters, $\ISGD$ satisfies $(\alpha,\eps)$-RDP for
	\begin{align}
		\eps \lesssim
		\frac{\alpha L^2}{n^2\sig^2} \min\left\{T, \frac{Dn}{L\eta}\right\},
		\label{eq:iid:eps-rdp}
	\end{align}
	and moreover this bound is tight up to a constant factor.
\end{theorem}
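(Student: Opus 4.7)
The target bound~\eqref{eq:iid:eps-rdp} has two regimes separated by the burn-in time $\Tthresh \asymp nD/(L\eta)$. The $T \leq \Tthresh$ regime is the classical bound $\alpha L^2 T/(n^2\sigma^2)$, obtainable by additive composition of the Sampled Gaussian Mechanism (each iteration has subsampling rate $q \asymp b/n$ and per-iteration gradient sensitivity $2L/b$). The novel content lies in the saturated $T > \Tthresh$ regime: proving that the privacy loss never exceeds $\alpha LD/(n\eta\sigma^2)$, no matter how large $T$ is. My plan is to combine Privacy Amplification by Iteration (PABI) with Privacy Amplification by Sampling, structured through the shifted R\'enyi divergence framework.

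For the upper bound in the saturated regime, I would fix a burn-in length $\tau$ (to be optimized) and analyze only the last $\tau$ iterations of $\ISGD$ on a pair of neighboring datasets, coupling the two executions so that the noise samples and batch indices agree whenever possible. The pivotal observation is that, at time $T-\tau$, both coupled iterates lie in the convex set $\cK$ of diameter $D$, so their laws are within $W_\infty$ distance $D$; equivalently, the shifted R\'enyi divergence $\Dalpshift{D}{\omega_{T-\tau}}{\omega_{T-\tau}'}$ vanishes. Over the last $\tau$ iterations I would then combine two primitives. First, at each iteration the Sampled Gaussian Mechanism contributes at most $O(\alpha L^2/(n^2\sigma^2))$ to the R\'enyi divergence after amplification by the subsampling probability $q \asymp b/n$; the fact that the projected gradient step is $1$-Lipschitz for convex, $M$-smooth losses with $\eta \leq 2/M$ ensures that previously incurred shifts do not expand. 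Second, to absorb the initial shift of size $D$, I would iteratively apply the PABI noise-absorption lemma, reducing the shift by $z_t$ at iteration $t$ at cost $\alpha z_t^2/(2\eta^2\sigma^2)$, and choose $z_t = D/\tau$ uniformly, so the total absorption cost is $\alpha D^2/(2\tau \eta^2\sigma^2)$. Summing gives
\[
\eps \lesssim \tau \cdot \frac{\alpha L^2}{n^2 \sigma^2} + \frac{\alpha D^2}{\tau \eta^2 \sigma^2},
\]
which is minimized (via AM-GM) at $\tau^\star \asymp nD/(L\eta) = \Tthresh$, yielding $\eps \lesssim \alpha LD/(n\eta\sigma^2)$. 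Combined with the classical composition bound, this establishes the $\min\{T, \Tthresh\}$ upper bound in~\eqref{eq:iid:eps-rdp}.

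For the matching lower bound, I would exhibit a one-dimensional hard instance: take $\cK = [0,D]$ with linear losses $f_i(\omega) = g_i \omega$, where each $g_i \in \{-L, +L\}$. For an appropriately chosen neighboring pair of datasets (differing in a single $g_i$), the two Markov chains induced by $\ISGD$ converge to near-Gaussian stationary distributions truncated to $[0,D]$, whose means differ by $\Theta(L\eta/n)$ divided by the contraction rate. A direct computation of the R\'enyi divergence between these near-stationary laws, carefully handling the boundary projection effects, recovers $\eps \gtrsim \alpha LD/(n\eta\sigma^2)$ for $T \gtrsim \Tthresh$; the transient $T \ll \Tthresh$ regime admits a similar and simpler lower bound at rate $\alpha L^2 T/(n^2\sigma^2)$.

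The main technical obstacle will be rigorously combining PABI with subsampling amplification, since the ``bad'' iterations (those where the sampled batch contains the differing data point) occur at random times, while the shift-reduction schedule $\{z_t\}$ must be planned in advance. Handling this cleanly will require a mixture/convexity argument in the spirit of the Sampled Gaussian Mechanism analysis, where the one-step R\'enyi divergence is bounded conditionally on the batch choice and then averaged. A secondary challenge is threading the projection operator $\proj$ through the shifted-divergence calculus, which is manageable because $\proj$ is $1$-Lipschitz but must be carefully interleaved with the Gaussian noise convolutions at each step. The same skeleton should also yield the extensions advertised in the abstract---strongly convex losses, non-uniform stepsizes, and cyclic rather than random batch selection---via minor modifications of the contraction and noise-absorption calculus.
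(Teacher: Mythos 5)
Your upper-bound skeleton matches the paper's strategy at a high level---burn-in to time $T-\tau$, use diameter to start with a shifted divergence of $D$, then spend $\tau$ iterations paying both an SGM cost and a PABI shift-absorption cost, optimize $\tau$ by AM--GM---and you even arrive at the correct tradeoff $\eps \lesssim \tau\,\alpha L^2/(n^2\sigma^2) + \alpha D^2/(\tau\eta^2\sigma^2)$. However, you flag the combination of PABI with randomly-timed subsampling as your ``main technical obstacle'' and propose a vague mixture/conditioning fix; this is exactly where the paper does something sharper that you have not reconstructed. The paper \emph{splits} the per-step Gaussian noise into two independent pieces with variances $\sigma_1^2 + \sigma_2^2 = \sigma^2$, absorbs the gradient-difference bias (when the differing datapoint is sampled) into the second piece $Z_t$, and then \emph{conditions} on the realization of $Z_{\tau:T-1}$. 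After conditioning, the two coupled trajectories become identical projected CNIs driven only by the first noise piece $Y_t$, so PABI applies cleanly; meanwhile the divergence between $Z_{\tau:T-1}$ and $Z'_{\tau:T-1}$ is handled by the Sampled Gaussian Mechanism. Strong composition (Lemma~\ref{lem:comp-rdp-prob}) then adds the two contributions. Without this explicit noise-splitting you would be reusing the same Gaussian noise for both the shift-absorption step and the subsampling amplification, which double-counts; and without the conditioning you cannot invoke PABI, since the two processes do not share the same update maps. So the upper bound is not yet a proof.

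The lower bound has a more fundamental gap. You propose to directly compute R\'enyi divergences between the ``near-Gaussian stationary distributions truncated to $[0,D]$'' while ``carefully handling the boundary projection effects.'' The paper identifies precisely this as the intractable route: the laws of the projected random walks have no useful closed form, and a direct computation does not go through. The paper's key device is to compare against an auxiliary process $W_t''$ that is re-initialized at the bottom of $\cK$ at time $T-\Tthresh$, updates with the same biased increments but \emph{without} lower-boundary projections, and is shown (a) to be stochastically dominated by the true iterate $W_t'$ and (b) to avoid projections with high probability, so that $W_T''$ is essentially an explicit Gaussian. A one-sided test ($W_T\ge 0$ vs.\ $W_T'\ge 0$) then gives a clean $(\eps,\delta)$-DP separation, which converts to the RDP lower bound. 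Also note: in the merely convex setting the gradient map has contraction coefficient $1$, so there is no meaningful ``contraction rate'' for your claimed mean-gap $\Theta(L\eta/n)$ \emph{divided by the contraction rate} to divide by---that reasoning belongs to the strongly convex case, not this one.
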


Observe that the privacy bound in Theorem~\ref{thm:iid} is identical to the previous bound~\eqref{eq:simple-bound} when the number of iterations $T$ is small, but stops increasing when $T \geq \Tthresh$. Intuitively, $\Tthresh$ can be interpreted as the smallest number of iterations required for two $\SGD$ processes run on adjacent datasets to, with reasonable probability, reach opposite ends of the constraint set.\footnote{Details: $\ISGD$ updates on adjacent datasets differ by at most $\eta L/b$ if the different datapoint is in the current batch (i.e., with probability $b/n$), and otherwise are identical. Thus, in expectation and with high probability, it takes $\Tthresh \asymp (Dn)/(\eta L)$ iterations for two $\ISGD$ processes to differ by distance $D$.} In other words, $\Tthresh$ is effectively the smallest number of iterations before the final iterates could be maximally distinguishable.

To prove Theorem~\ref{thm:iid}, we show a matching upper bound (in \S\ref{sec:iid}) and lower bound (in \S\ref{sec:lb}). These two bounds are formally stated as Theorems~\ref{thm:iid-ub} and~\ref{thm:iid-lb}, respectively. See \S\ref{ssec:intro:tech} for an overview of our techniques and how they depart from previous approaches.

\par We conclude this section with several remarks about Theorem~\ref{thm:iid}.

\begin{remark}[Tight DP characterization]\label{rem:iid-dp}
	While Theorem~\ref{thm:iid} characterizes the privacy loss of $\ISGD$ in terms of RDP, our results can be restated in terms of standard DP bounds. Specifically, by a standard conversion from RDP to DP (Lemma~\ref{lem:rdp-to-dp}), if $\delta$ is not smaller than exponentially small in $-b\sigma/n$ and if the resulting bound is $\eps \lesssim 1$, 
	it follows that $\ISGD$ is $(\eps,\del)$-DP for 
	\begin{align}
		\eps \lesssim
		\frac{L}{n\sig}
		\sqrt{ 
			\min\left\{T, \frac{D n}{L\eta}\right\} \log 1/\delta}.
		\label{eq:iid:eps-dp}
	\end{align}
	A matching DP lower bound, for the same large range of parameters as in Theorem~\ref{thm:iid}, is proved along the way when we establish our RDP lower bound in \S\ref{sec:lb}.
\end{remark}

\begin{figure}
	\hspace*{\fill} 
	\begin{subfigure}[t]{0.45\textwidth}
		\includegraphics[width=\linewidth]{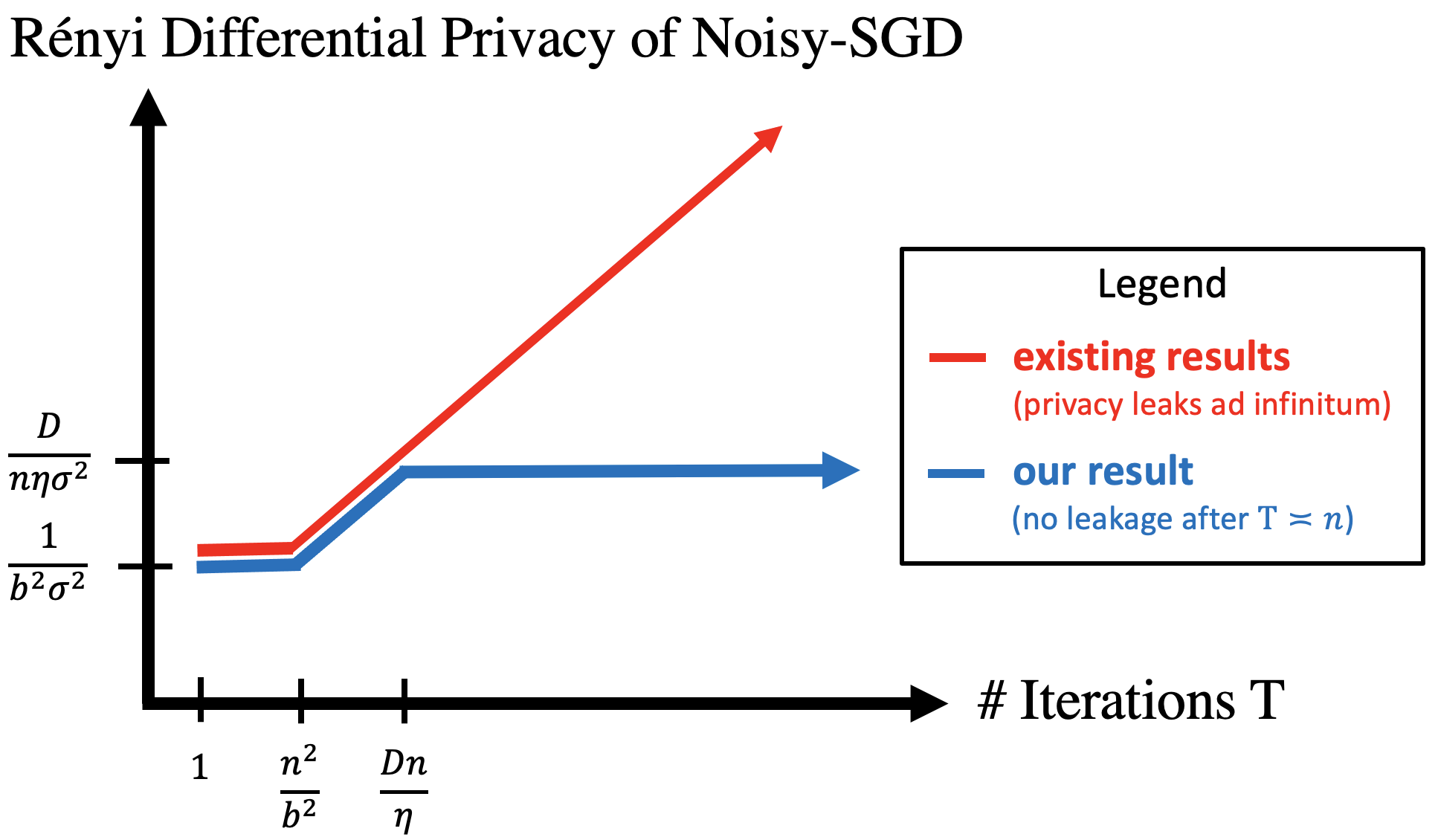}
		\caption{\footnotesize In the setting of \emph{cyclic batch updates}, the privacy of $\SGD$ is identical except for an unavoidable worsening in an initial phase (Theorem~\ref{thm:diam-rdp}). 
		} \label{fig:cyc}
	\end{subfigure}
	\hspace*{\fill} 
	\begin{subfigure}[t]{0.45\textwidth}
		\includegraphics[width=\linewidth]{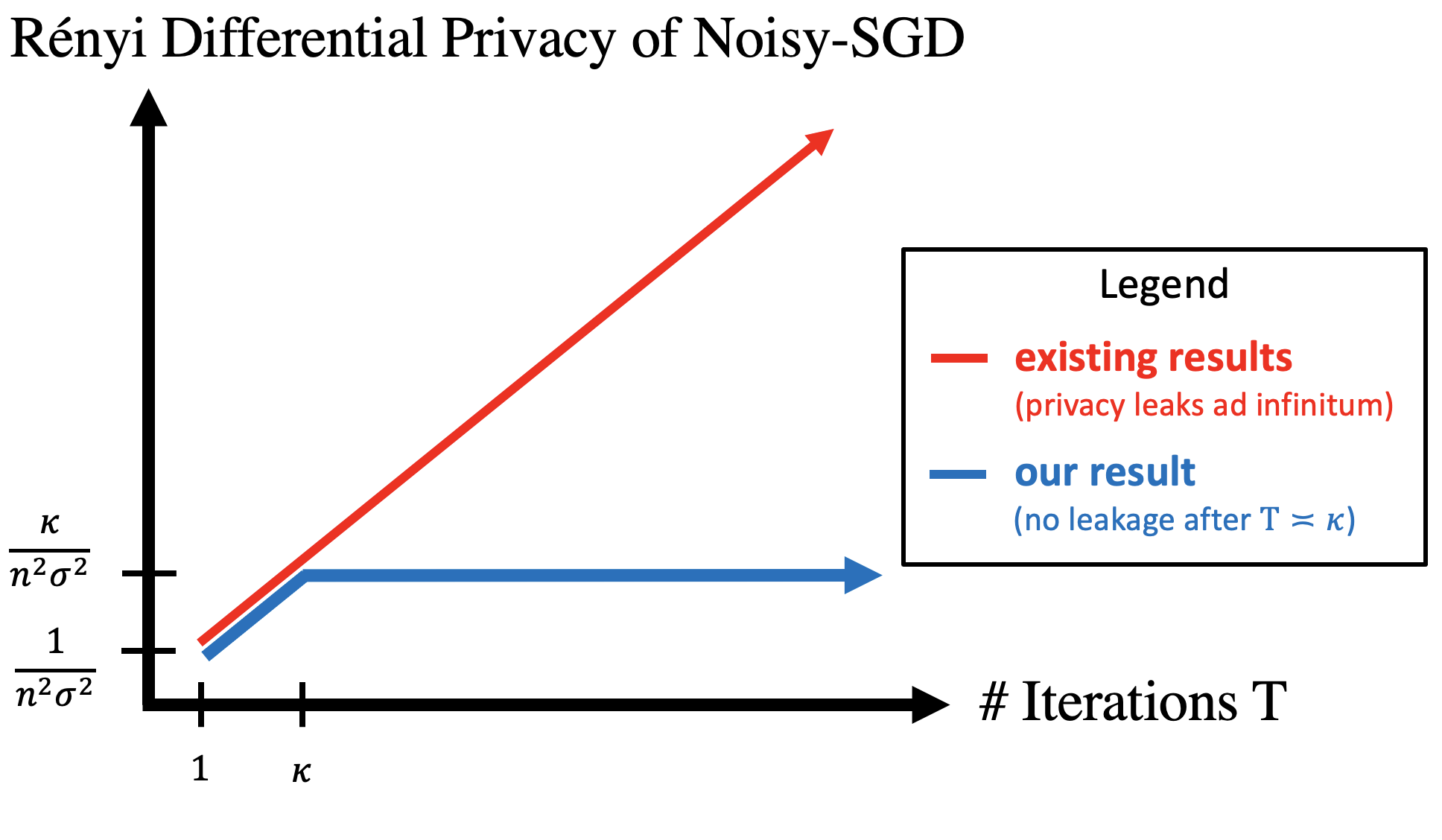}
		\caption{\footnotesize In the setting of \emph{strongly convex losses}, the privacy of $\ISGD$ is identical except the threshold $\Tthresh$ improves (Theorem~\ref{thm:iid-sc}). Plotted here for $\eta = 1/M$.} \label{fig:scs}
	\end{subfigure}
	\caption{
		\footnotesize
		Analog of Figure~\ref{fig:main} for the extensions of our main result to other settings.
	}
	\label{fig:extensions}
\end{figure}

\paragraph*{Discussion of extensions.} Our analysis techniques immediately extend to other settings:
\begin{itemize}
	\item \underline{Strongly convex losses.} If the convexity assumption on the losses is replaced by $m$-strong convexity, then $\ISGD$ enjoys better privacy. Specifically, the bound in Theorem~\ref{thm:iid} extends identially except that the threshold $\Tthresh$ for no further privacy loss improves from linear to \emph{logarithmic} in $n$, namely $\Otilde(1/(\eta m))$. For example, if the stepsize $\eta = 1/M$, then $\Tthresh = \Otilde(\kappa)$, where $\kappa = M/m$ denotes the condition number of the losses. This matches the independent results of~\citep{RyffelBP22,YeS22} (see Footnote~\ref{fn:sc}) and uses completely different techniques from them. Details in \S\ref{app:sc}; see also Figure~\ref{fig:scs} for an illustration of the privacy bound.
	\item \underline{Cyclic batch updates.} $\ISGD$ is sometimes run with batches that are chosen cyclicallly rather than randomly. The bound in Theorem~\ref{thm:iid} extends identically, except for an initial transient phase in which there is unavoidably a large privacy loss since the algorithm may choose to update using a sensitive datapoint in an early batch. Details in \S\ref{app:cyclic}; see also Figure~\ref{fig:cyc} for an illustration of the privacy bound.
	\item \underline{Non-uniform stepsizes.} $\ISGD$ is sometimes run with decaying stepsizes $\eta_t = t^{-c}$ for $c \in (0,1)$. The bound in Theorem~\ref{thm:iid} extends identically, with $\eta$ replaced by $\eta_T$. This means that the privacy increases after the threshold, but at the slower rate of $T^c \ll T$ for RDP (resp., $T^{c/2} \ll T^{1/2}$ for DP). Details in \S\ref{app:nonuniform}.
\end{itemize}

\paragraph*{Discussion of assumptions.}
\begin{itemize}
		\item \underline{Boundedness.} All previous privacy analyses are oblivious to any sort of diameter bound and therefore unavoidably increase ad infinitum\footnote{For strongly convex losses, boundedness is unnecessary for convergent privacy; see \S\ref{app:sc}.}, c.f.\,our lower bound in Theorem~\ref{thm:iid}. Our analysis is the first to exploit boundedness: whereas previous analyses can only argue that having a smaller constraint set does not worsen the privacy loss, we show that this strictly improves the privacy loss, in fact making it convergent as $T \to \infty$. See the techniques section \S\ref{ssec:intro:tech}.
	\par We emphasize that this diameter bound is a mild constraint and is in common with all existing theoretical bounds for utility/optimization. Indeed, every utility guarantee for (non-strongly) convex losses also has a dependence on some similar diameter bound; this is inevitable simply due to the difference between the initialization and optimum. We also mention that one can solve an unconstrained problem by solving constrained problems with norm bounds, paying only a small logarithmic overhead on the number of solves and a constant overhead in the privacy loss using known techniques~\citep{liu2019private}. Moreover, in many optimization problems, the solution set is naturally constrained either from the problem formulation or application. 
	\item \underline{Smoothness.} The smoothness assumption on the losses can be relaxed if one runs $\ISGD$ on a smoothed version of the objective. This can be done using standard smoothing techniques, e.g., Gaussian convolution smoothing~\citep[\S5.5]{pabi}, or Moreau-Yousida smoothing by replacing gradient steps with proximal steps~\citep[\S4]{BassilyFeTaTh19}.
	\item \underline{Convexity.}
	Convexity appears to be essential for privacy bounds that do not increase ad infinitum in $T$.
	Our analysis exploits convexity to ensure that $\ISGD$ is a contractive process. However, convexity is too restrictive when training deep networks, and it is an interesting question if the assumption of convexity can be relaxed. Any such result appears to require new techniques---if even true. The key challenge is that for any iterative process whose set of reachable fixed points is non-convex, there will be non-contractivity at the boundary between basins of attraction---and this appears to preclude arguments based on Privacy Amplification by Iteration, which at present is the only non-sampling-based technique with a hope of establishing convergent privacy bounds (see the techniques section \S\ref{ssec:intro:tech}).
	\item \underline{Lipschitzness.} This assumption can be safely removed as the (necessary) assumptions of smoothness and bounded diameter already imply Lipschitzness (with parameter $L=MD$). We write our result in this way in order to isolate where this dependence comes from more simply, and also because the Lipschitz parameter $L$ may be much better than $MD$.
	\par Our analysis only uses Lipschitzness to bound gradient sensitivity, just like all previous analyses that use Privacy Amplification by Sampling. For details on this, as well as an example where gradient sensitivity is more useful in practice than Lipschitzness, see Section \S\ref{app:reg}.
	%
	%
	%
	\item \underline{Mild assumptions on parameters.} The lower bound in Theorem~\ref{thm:iid} makes the mild assumption that the diameter $D$ of the decision set is asymptotically larger than both the movement size from one gradient step, and the standard deviation from $\Tthresh$ random increments of Gaussian noise $\cN(0,\eta^2\sig^2)$ so that the noise does not overwhelm the learning.
	\par The upper bound uses the technique of Privacy Amplification by Sampling, and thus inherits two mild assumptions on the parameters that are also inherent in all previous analyses of $\ISGD$. One is an upper bound assumption on $\alpha$, and the other is a lower bound assumption on the noise $\sig$ so that one step of $\ISGD$ provides at least constant privacy to any user. These restrictions neither affect numerical bounds which can be computed for any $\alpha$ and $\sig$ (see \S\ref{ssec:prelim:pabs}), nor do they affect the asymptotic $(\eps,\delta)$-DP bounds in most parameter regimes of interest. We also remark that our results require no restriction if the batches are chosen cyclically, and thus also if full-batch gradients are used (see Theorem~\ref{thm:diam-rdp}).
\end{itemize}

\subsection{Techniques}\label{ssec:intro:tech}

\paragraph*{Upper bound on privacy.} Our analysis technique departs from recent approaches based on fast mixing. This allows us to bypass the many longstanding technical challenges discussed in \S\ref{ssec:intro:prev-1}.

\par Instead, our analysis combines the techniques of Privacy Amplification by Iteration and Privacy Amplification by Sampling. As discussed in \S\ref{ssec:intro:prev-1}, previous analyses based on these techniques yield loose privacy bounds that increase ad infinitum in $T$. Indeed, bounds based on Privacy Amplification by Sampling inevitably diverge since they ``pay'' for releasing the entire sequence of $T$ iterates, each of which leaks more information about the private data~\citep{bassily2014private, abadi2016deep}. Privacy Amplification by Iteration avoids releasing the entire sequence by directly arguing about the final iterate; however, previous arguments based on this technique are unable to exploit a diameter bound on the constraint set, and therefore inevitably lead to privacy bounds which grow unboundedly in $T$~\citep{pabi}.

\par Our analysis hinges on the observation that when the iterates are constrained to a bounded domain, we can combine the techniques of Privacy Amplification by Iteration and Privacy Amplification by Sampling in order to only pay for the privacy loss from the final $\Tthresh \asymp (Dn)/(L\eta)$ iterations. To explain this, first recall that by definition, differential privacy of $\ISGD$ means that the final iterate of $\ISGD$ is similar when run on two (adjacent but) different datasets from the same initialization. 
At an intuitive level, we establish this by arguing about the privacy of the following three scenarios:
\begin{itemize}
	\item[(i)] Run $\ISGD$ for $T$ iterations on different datasets from same initialization.
	\item[(ii)]  Run $\ISGD$ for $\Tthresh$ iterations on different datasets from different initializations.
	\item[(iii)] Run $\ISGD$ for $\Tthresh$ iterations on same datasets from different initializations.
\end{itemize}
In order to analyze (i), which as mentioned is the definition of $\ISGD$ being differentially private, we argue that no matter how different the two input datasets are, the two $\ISGD$ iterates at iteration $T - \Tthresh$ are within distance $D$ of each other since they lie within the constraint set $\cK$ of diameter $D$. Thus in particular, the privacy of scenario (i) is at most the privacy of scenario (ii), which has initializations that can be arbitrarily different so long as they are within distance $D$ from each other. In this way, we arrive at a scenario which is independent of $T$. This is crucial for obtaining privacy bounds which do not diverge in $T$; however, previously privacy analyses could not proceed in this way because they could not argue about different initialization.

\par In order to analyze (ii), we use the technique of Privacy Amplification by Sampling---but only on these final $\Tthresh \ll T$ iterations. 
In words, this enables us to argue that the noisy gradients that $\ISGD$ uses in the last $\Tthresh$ iterations are indistinguishable up to a certain privacy loss (to be precise the RDP scales linearly in $\Tthresh$) despite the fact that they are computed on different input datasets. This enables us to reduce analyzing the privacy of scenario (ii) to scenario (iii). 

\par In order to analyze (iii), we use a new diameter-aware version of Privacy Amplification by Iteration (Proposition~\ref{prop:pabi-new}). In words, this shows that running the \emph{same} $\ISGD$ updates on two different initializations masks their initial difference due to both the noise and contraction maps that define each $\ISGD$ update. Of course, this indistinguishability improves the longer that $\ISGD$ is run (to be precise, we show that the RDP scales inversely in $\Tthresh$).\footnote{This step of the analysis has a natural interpretation in terms of the mixing time of the Langevin Algorithm to its biased stationary distribution; this connection and its implications are explored in detail in~\citep{AltTal22mix}.}

\par We arrive, therefore, at a privacy loss which is the sum of the bounds from the Privacy Amplification by Sampling argument in step (ii) and the Privacy Amplification by Iteration argument in step (iii). This privacy loss has a natural tradeoff in the parameter $\Tthresh$: the former bound leads to a privacy loss that increases in $\Tthresh$ (since it pays for the release of $\Tthresh$ noisy gradients), whereas the latter bound leads to a privacy loss that decreases in $\Tthresh$ (since more iterations of the same $\ISGD$ process enable better masking of different initializations). Balancing these two privacy bounds leads to the final choice $\Tthresh \asymp \tfrac{Dn}{L\eta}$. (We remark that this quantity is not just an artefact of our analysis, but in fact is the true answer for when the privacy loss stops increasing, as established by our matching lower bound.)

\par At a high-level, our analysis proceeds by carefully running these arguments in parallel.

\par We emphasize that this analysis is the first to show that the privacy loss of $\ISGD$ can strictly improve if the constraint set is made smaller. In contrast, previous analyses only argue that restricting the constraint set \emph{cannot worsen} the privacy loss (e.g., by using a post-processing inequality to analyze the projection step). The key technical challenge in exploiting a diameter bound is dealing with the complicated non-linearities that arise when interleaving projections with noisy gradient updates. Our techniques enable such an analysis.

\paragraph*{Lower bound on privacy.} We construct two adjacent datasets for which the corresponding $\ISGD$ processes are random walks---one symmetric, one biased---that are confined to an interval of length $D$. In this way, we reduce the question of how large is the privacy loss of $\ISGD$, to the question of how distinguishable is a constrained symmetric random walk from a constrained biased random walk. The core technical challenge is that the distributions of the iterates of these random walks are intractable to reason about explicitly---due to the highly non-linear interactions between the projections and random increments. Briefly, our key technique here is a way to modify these processes so that on one hand, their distinguishability is essentially the same, and the other hand, no projections occur with high probability---allowing us to explicitly compute their distributions and thus also their distinguishability. Details in \S\ref{sec:lb}.

\subsection{Other related work}\label{ssec:intro:rel}

\paragraph*{Private sampling.} The mixing time of (stochastic) Langevin Dynamics has been extensively studied in recent years starting with~\citep{Dalalyan16,DurmusM19}. A recent focus in this vein is analyzing mixing in more stringent notions of distance, such as the R\'enyi divergence~\citep{VempalaW19,GaneshT20}, in part because this is necessary for proving privacy bounds. In addition to the aforementioned results, several other works focus on sampling from the distribution $\exp(\eps \sum_i f_i(\omega)$ or its regularized versions from a privacy viewpoint.~\citep{MinamiASN16} proposed a mechanism of this kind that works for unbounded $f$ and showed $(\eps,\delta)$-DP. Recently,~\citep{GopiLL22} gave better privacy bounds for such a regularized exponential mechanism, and designed an efficient sampler based only on function evaluation. Also,~\citep{GaneshTU22} showed that the continuous Langevin Diffusion has optimal utility bounds for various private optimization problems.


\par As alluded to in \S\ref{ssec:intro:prev-1}, there are several core issues with trying to prove DP bounds for $\ISGD$ by directly combining ``fast mixing'' bounds with ``private once mixed'' bounds. First, mixing results typically do not apply, e.g., since DP requires mixing in stringent divergences like R\'enyi, or because realistic settings with constraints, stochasticity, and lack of strong convexity are difficult to analyze---indeed, understanding the mixing time for such settings is a challenging open problem. Second, even when fast mixing bounds do apply, directly combining them with ``private once mixed'' bounds unavoidably leads to DP bounds that are loose to the point of being essentially useless (e.g., the inevitable dimension dependence in mixing bounds to the stationary distribution of the continuous-time Langevin Diffusion would lead to dimension dependence in DP bounds, which should not occur---as we show). Third, even if a Markov chain were private after mixing, one cannot conclude from this that it is private beforehand---indeed, there are simple Markov chains which are private after mixing, yet are exponentially non-private beforehand~\citep{GT22markov}.

\paragraph*{Utility bounds.} In the field of private optimization, one separately analyzes two properties of algorithms: (i) the privacy loss as a function of the number of iterations, and (ii) the utility (a.k.a., optimization error) as a function of the number of iterations. These two properties can then be combined to obtain privacy-utility tradeoffs. The purpose of this paper is to completely resolve (i); this result can then be combined with any bound on (ii).

\par Utility bounds for $\SGD$ are well understood~\citep{ShwartzShSrSr09}, and these analyses have enabled understanding utility bounds for $\ISGD$ in empirical~\citep{bassily2014private} and population~\citep{BassilyFeTaTh19} settings. However, there is a big difference between the minimax-optimal utility bounds in theory versus what is desired in practice. Indeed, while in theory a single pass of $\ISGD$ achieves the minimax-optimal population risk~\citep{feldman2020private}, in practice $\ISGD$ benefits from running longer to get more accurate training. In fact, this divergence is even true and well-documented for non-private $\SGD$ as well, where one epoch is minimax-optimal in theory, but in practice more epochs help. Said simply, this is because typical problems are not worst-case problems (i.e., minimax-optimal theoretical bounds are typically not representative of practice). For these practical settings, in order to run $\ISGD$ longer, it is essential to have privacy bounds which do not increase ad infinitum. Our paper resolves this.

\subsection{Organization}\label{ssec:intro:org}

\cref{sec:prelim} recalls relevant preliminaries. Our main result Theorem~\ref{thm:iid} is proved in \cref{sec:iid} (upper bound) and \cref{sec:lb} (lower bound). \cref{app:extensions} contains extensions to strongly convex losses, cyclic batch updates, and non-uniform stepsizes. \S\ref{sec:discussion} concludes with future research directions motivated by these results. For brevity, several proofs are deferred to~\cref{app:deferred}. For simplicity, we do not optimize constants in the main text; this is done in~\cref{app:numerics}.


\section{Preliminaries}\label{sec:prelim}

In this section, we recall relevant preliminaries about convex optimization (\S\ref{ssec:prelim:convex}), differential privacy (\S\ref{ssec:prelim:rdp}), and two by-now-standard techniques for analyzing the privacy of optimization algorithms---namely, Privacy Amplification by Sampling (\S\ref{ssec:prelim:pabs}) and Privacy Amplification by Iteration (\S\ref{ssec:prelim:pabi}). 


\paragraph*{Notation.} We write $\Prob_X$ to denote the law of a random variable $X$, and $\Prob_{X|Y=y}$ to denote the law of $X$ given the event $Y=y$. We write $Z_{S:T}$ as shorthand for the vector concatenating $Z_S, \dots, Z_T$. We write $f_{\#} \mu$ to denote the pushforward of a distribution $\mu$ under a function $f$, i.e., the law of $f(X)$ where $X \sim \mu$. We abuse notation slightly by writing $f_{\#} \mu$ even when $f$ is a ``random function'', by which we mean a Markov transition kernel. We write $\mu \ast \nu$ to denote the convolution of probability distributions $\mu$ and $\nu$, i.e., the law of $X+Y$ where $X \sim \mu$ and $Y \sim \nu$ are independently distributed. We write $\lambda \mu + (1-\lambda) \nu$ to denote the mixture distribution that is $\mu$ with probability $\lambda$ and $\nu$ with probability $1 - \lambda$. We write $\cA(\cX)$ to denote the output of an algorithm $\cA$ run on input $\cX$; this is a probability distribution if $\cA$ is a randomized algorithm.

\subsection{Convex optimization}\label{ssec:prelim:convex}

Throughout, the loss function corresponding to a data point $x_i$ or $x_i'$ is denoted by $f_i$ or $f_i'$, respectively. While the dependence on the data point is arbitrary, the loss functions are assumed to be convex in the argument $\omega \in \cK$ (a.k.a., the machine learning model we seek to train). Throughout, the set $\cK$ of possible models is a convex subset of $\R^d$. In order to establish both optimization and privacy guarantees, two additional assumptions are required on the loss functions. Recall that a differentiable function $g : \cK \to \R^d$ is said to be:
\begin{itemize}
	\item $L$-Lipschitz if $\|g(\omega) - g(\omega')\| \leq L\|\omega - \omega'\|$ for all $\omega,\omega' \in \cK$.
	\item $M$-smooth if $\|\nabla g(\omega) - \nabla g(\omega')\| \leq M\|\omega - \omega'\|$ for all $\omega,\omega' \in \cK$. 
\end{itemize}
An important implication of smoothness is the following well-known fact; for a proof, see e.g.,~\citep{nesterov2003introductory}.

\begin{lemma}[Small gradient steps on smooth convex losses are contractions]\label{lem:gd-contraction}
	Suppose $f : \R^d \to \R$ is a convex, $M$-smooth function. For any $\eta \leq 2/M$, the mapping $\omega \mapsto \omega - \eta \nabla f(\omega)$ is a contraction.
\end{lemma}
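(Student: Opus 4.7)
The plan is to prove this by the standard co-coercivity argument. Write $T(\omega) \defeq \omega - \eta \nabla f(\omega)$ and expand
\[
\|T(\omega) - T(\omega')\|^2 = \|\omega - \omega'\|^2 - 2\eta \angs{\nabla f(\omega) - \nabla f(\omega')}{\omega - \omega'} + \eta^2 \|\nabla f(\omega) - \nabla f(\omega')\|^2.
\]
The goal is to show that for $\eta \in (0, 2/M]$ the sum of the last two terms is non-positive, which would yield $\|T(\omega)-T(\omega')\| \leq \|\omega-\omega'\|$ as required.

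The key ingredient I would use is co-coercivity of the gradient of a convex $M$-smooth function (the Baillon--Haddad theorem), namely
\[
\angs{\nabla f(\omega) - \nabla f(\omega')}{\omega - \omega'} \;\geq\; \tfrac{1}{M}\,\|\nabla f(\omega) - \nabla f(\omega')\|^2.
\]
This is a textbook consequence of convexity plus smoothness (see, e.g., Nesterov, which is already cited above); I would either quote it directly or give the short derivation by applying the ``descent lemma'' to the convex function $\omega \mapsto f(\omega) - \angs{\nabla f(\omega_0)}{\omega}$ at its minimizer.

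Plugging co-coercivity into the expansion gives
\[
\|T(\omega) - T(\omega')\|^2 \;\leq\; \|\omega - \omega'\|^2 - \eta\bigl(\tfrac{2}{M} - \eta\bigr) \|\nabla f(\omega) - \nabla f(\omega')\|^2,
\]
and the coefficient $\eta(2/M - \eta)$ is non-negative exactly when $\eta \leq 2/M$, finishing the proof.

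There is no real obstacle here; the only step that is not a one-line calculation is co-coercivity itself, and since smoothness and convexity are already assumed (and Nesterov is cited in the surrounding text), it is fine to invoke it as a known fact. I would keep the proof to a few lines and cite Nesterov for co-coercivity rather than reproving it.
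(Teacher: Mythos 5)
Your proof is correct and is exactly the standard co-coercivity argument that the paper defers to by citing Nesterov rather than reproving the lemma. Nothing to add.
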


\subsection{Differential privacy and R\'enyi differential privacy}\label{ssec:prelim:rdp}

Over the past two decades, differential privacy (DP) has become a standard approach for quantifying how much sensitive information an algorithm leaks about the dataset it is run upon. Differential privacy was first proposed in~\citep{DworkMcNiSm06}, and is now widely used in industry (e.g., at Apple~\citep{apple-dp}, Google~\citep{google-dp}, Microsoft~\citep{microsoft-dp}, and LinkedIn~\citep{linkedin-dp}), as well as in government data collection such as the 2020 US Census~\citep{censusprivacy}. In words, DP measures how distinguishable the output of an algorithm is when run on two ``adjacent'' datasets---i.e., two datasets which differ on at most one datapoint.\footnote{This is sometimes called ``replace adjacency''. The other canonical notion of dataset adjacency is ``remove adjacency'', which means that one dataset is the other plus an additional datapoint. All results in this paper extend immediately to both settings; for details see Remark~\ref{rem:adjacency}.} 

\begin{defin}[Differential Privacy]\label{def:dp}
	A randomized algorithm $\cA$ satisfies $(\eps,\del)$-DP if for any two adjacent datasets $\cX,\cX'$ and any measurable event $S$,
	\[
		\Prob\left[ \cA(\cX) \in S \right] \leq e^{\eps} \Prob\left[ \cA(\cX') \in S \right] + \del.
	\]
\end{defin}

In order to prove DP guarantees, we work with the related notion of R\'enyi Differential Privacy (RDP) introduced by~\citep{mironov2017renyi}, since RDP is more amenable to our analysis techniques and is readily converted to DP guarantees. To define RDP, we first recall the definition of R\'enyi divergence. 

\begin{defin}[R\'enyi divergence]
	The R\'enyi divergence between two probability measures $\mu$ and $\nu$ of order $\alpha \in (1,\infty)$ is
	\[
	\Dalplr{\mu}{\nu}
	=
	\frac{1}{\alpha - 1} \log \int \left( \frac{\mu(x)}{\nu(x)}\right)^{\alpha} \nu(x) dx,
	\]
	if $\mu \ll \nu$, and is $\infty$ otherwise. Here we adopt the standard convention that $0/0 = 0$ and $x/0 = \infty$ for $x > 0$. The R\'enyi divergences of order $\alpha \in  \{1,\infty\}$ are defined by continuity.
\end{defin}

\begin{defin}[R\'enyi Differential Privacy]\label{def:rdp}
	A randomized algorithm $\cA$ satisfies $(\alpha,\eps)$-RDP if for any two adjacent datasets $\cX,\cX'$,
	\[
		\Dalplr{ \cA(\cX) }{ \cA(\cX') } \leq \eps.
	\]
\end{defin}

It is straightforward to convert an RDP bound to a DP bound as follows~\citep[Proposition 1]{mironov2017renyi}.

\begin{lemma}[RDP-to-DP conversion]\label{lem:rdp-to-dp}
	Suppose that an algorithm satisfies $(\alpha,\eps_{\alpha})$-RDP for some $\alpha > 1$. Then the algorithm satisfies $(\eps_{\del},\del)$-DP for any $\del \in (0,1)$ and $\eps_{\del} = \eps_{\alpha} + (\logdel)/(\alpha - 1)$.
\end{lemma}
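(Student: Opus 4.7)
The plan is to use the standard argument via the ``privacy loss random variable'' and Markov's inequality. Fix adjacent datasets $\cX, \cX'$ and let $\mu, \nu$ denote the laws of $\cA(\cX), \cA(\cX')$. We may assume $\mu \ll \nu$, for otherwise $\Dalp{\mu}{\nu} = \infty$ and the RDP hypothesis is vacuous. The key object is the privacy loss $Z := \log (d\mu/d\nu)(Y)$ for $Y \sim \mu$, whose moment generating function is controlled by the R\'enyi divergence: unfolding definitions,
\[
\E\bigl[e^{(\alpha-1) Z}\bigr] = \int (d\mu/d\nu)^{\alpha}\, d\nu = e^{(\alpha-1) \Dalp{\mu}{\nu}} \leq e^{(\alpha-1)\eps_{\alpha}}.
\]

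First I would apply Markov's inequality to the nonnegative random variable $e^{(\alpha-1)Z}$ to turn the RDP hypothesis into a tail bound on the privacy loss: for any $t \in \R$,
\[
\Prob_{Y \sim \mu}[Z \geq t] \leq e^{(\alpha-1)(\eps_{\alpha} - t)}.
\]
Plugging in $t = \eps_{\del} = \eps_{\alpha} + (\logdel)/(\alpha - 1)$ makes the right-hand side exactly $\del$.

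Second, I would convert this tail bound into the DP inequality of Definition~\ref{def:dp}. For any measurable event $S$, I would split
\[
\mu(S) = \mu\bigl(S \cap \{Z < \eps_{\del}\}\bigr) + \mu\bigl(S \cap \{Z \geq \eps_{\del}\}\bigr).
\]
The second piece is at most $\del$ by the tail bound above. For the first piece, note that on the event $\{Z < \eps_{\del}\}$ the Radon-Nikodym derivative satisfies $d\mu/d\nu < e^{\eps_{\del}}$, so integrating against $d\nu$ gives $\mu(S \cap \{Z < \eps_{\del}\}) \leq e^{\eps_{\del}} \nu(S \cap \{Z < \eps_{\del}\}) \leq e^{\eps_{\del}} \nu(S)$. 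Summing the two pieces yields $\mu(S) \leq e^{\eps_{\del}} \nu(S) + \del$, which is precisely the $(\eps_{\del}, \del)$-DP condition.

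There is no substantial obstacle here: the only mild care needed is the upfront reduction to the absolutely continuous case, and interpreting the ``density ratio'' as a Radon-Nikodym derivative when $\mu$ and $\nu$ need not admit Lebesgue densities. Both steps (a Chernoff-style Markov bound and an event-split conversion) are one-liners once the privacy-loss formulation is set up, which is why this conversion is essentially a calculation rather than a theorem requiring new ideas.
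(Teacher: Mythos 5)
Your proof is correct and is essentially the standard argument of~\citet{mironov2017renyi} that the paper cites for this lemma (the paper does not reprove it): apply Markov's inequality to $e^{(\alpha-1)Z}$ where $Z$ is the privacy loss, obtain the tail bound $\Prob_{Y\sim\mu}[Z\geq\eps_\delta]\leq\delta$, and split any event $S$ into the pieces where $Z<\eps_\delta$ and $Z\geq\eps_\delta$. The only caveat worth keeping is the one you already flag---the reduction to $\mu\ll\nu$ and reading the density ratio as a Radon--Nikodym derivative---which you handle correctly.
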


Following, we recall three basic properties of RDP that we use repeatedly in our analysis. The first property regards convexity. While the KL divergence (the case $\alpha = 1$) is jointly convex in its two arguments, for $\alpha \neq 1$ the R\'enyi divergence is only jointly \emph{quasi}-convex. See e.g.,~\citep[Theorem 13]{van2014renyi} for a proof and a further discussion of partial convexity properties of the R\'enyi divergence.

\begin{lemma}[Joint quasi-convexity of R\'enyi divergence]\label{lem:qconvex-rdp}
	For any R\'enyi parameter $\alpha  \geq 1$, any mixing probability $\lambda \in [0,1]$, and any two pairs of probability distributions $\mu,\nu$ and $\mu',\nu'$,
	\[
		\Dalplr{\lambda \mu + (1-\lambda) \mu'}{ \lambda \nu + (1-\lambda) \nu' } 
		\leq 
		\max\left\{ \Dalplr{\mu}{\nu}, \Dalplr{\mu'}{\nu'} \right\}.
	\]
\end{lemma}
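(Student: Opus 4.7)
The plan is to derive the quasi-convexity bound from a stronger statement: that the ``Hellinger-type'' integrand $R_\alpha(\mu,\nu) := \int \mu(x)^\alpha \nu(x)^{1-\alpha}\,dx$ is \emph{jointly convex} in $(\mu,\nu)$ for $\alpha>1$. Since $D_\alpha(\mu\|\nu) = \tfrac{1}{\alpha-1}\log R_\alpha(\mu,\nu)$, and the logarithm is monotone, joint convexity of $R_\alpha$ immediately yields joint quasi-convexity of $D_\alpha$ by the standard ``max trick'' for the log-of-a-convex-function.

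First I would prove the pointwise convexity claim: for $\alpha > 1$, the function $\phi(a,b) := a^\alpha b^{1-\alpha}$ on $\Rp^2$ is convex. The cleanest way is the perspective-function argument: for $b>0$ we have $\phi(a,b) = b \cdot (a/b)^\alpha$, which is the perspective of the convex function $x \mapsto x^\alpha$ on $\Rp$; perspectives of convex functions are convex, and one extends to $b=0$ by lower semicontinuity (with the convention $0 \cdot \infty$ handled via the $0/0=0$ convention in the definition of $D_\alpha$). Applying this pointwise at every $x$ to the convex combinations $\lambda \mu(x) + (1-\lambda)\mu'(x)$ and $\lambda \nu(x) + (1-\lambda)\nu'(x)$, and integrating, gives the joint convexity
\begin{equation*}
R_\alpha\bigl(\lambda\mu + (1-\lambda)\mu',\, \lambda\nu + (1-\lambda)\nu'\bigr) \;\leq\; \lambda R_\alpha(\mu,\nu) + (1-\lambda) R_\alpha(\mu',\nu').
\end{equation*}

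Next I would convert this to quasi-convexity of $D_\alpha$. Let $M := \max\{D_\alpha(\mu\|\nu), D_\alpha(\mu'\|\nu')\}$. By the definition of $D_\alpha$, both $R_\alpha(\mu,\nu)$ and $R_\alpha(\mu',\nu')$ are bounded above by $e^{(\alpha-1)M}$. Plugging into the joint convexity display above, the right-hand side is at most $e^{(\alpha-1)M}$, hence so is the left-hand side. Taking $\tfrac{1}{\alpha-1}\log$ of both sides (using $\alpha>1$ so the logarithm is monotone in the correct direction) recovers exactly the desired inequality
\begin{equation*}
D_\alpha\bigl(\lambda\mu + (1-\lambda)\mu' \,\|\, \lambda\nu + (1-\lambda)\nu'\bigr) \;\leq\; M.
\end{equation*}

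Finally, I would dispose of the boundary cases. For $\alpha = 1$ (the KL divergence), joint convexity in $(\mu,\nu)$ is the classical log-sum inequality, which is strictly stronger than the quasi-convexity claimed; for $\alpha = \infty$, quasi-convexity follows by taking $\alpha \to \infty$ in the bound just established, using monotonicity of $\alpha \mapsto D_\alpha$ and lower semicontinuity. I do not anticipate a serious obstacle here: the whole argument is a three-line consequence of the perspective-function convexity of $(a,b)\mapsto a^\alpha b^{1-\alpha}$, and the only subtlety worth flagging is the handling of points where $\nu(x) = 0$ or $\nu'(x) = 0$, which is standard and dealt with by the $0/0 = 0$ and $x/0 = \infty$ conventions already recorded in the definition of R\'enyi divergence in the paper.
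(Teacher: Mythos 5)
Your proof is correct and follows essentially the same route as the reference the paper cites for this lemma (van Erven and Harremoës, Theorem 13): deduce quasi-convexity of $D_\alpha$ from joint convexity of the integral $\int \mu^\alpha \nu^{1-\alpha}$, which in turn comes from the perspective-function convexity of $(a,b)\mapsto a^\alpha b^{1-\alpha}$. The ``max trick'' conversion and the boundary cases $\alpha=1$ and $\alpha=\infty$ are handled correctly.
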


The second property states that pushing two measures forward through the same (possibly random) function cannot increase the R\'enyi divergence. This is the direct analog of the classic ``data-processing inequality'' for the KL divergence. See e.g.,~\citep[Theorem 9]{van2014renyi} for a proof.

\begin{lemma}[Post-processing property of R\'enyi divergence]\label{lem:process-rdp}
	For any R\'enyi parameter $\alpha \geq 1$, any (possibly random) function $h$, and any probability distributions $\mu,\nu$, 
	\[
		\Dalplr{h_{\#} \mu}{h_{\#} \nu}
		\leq
		\Dalplr{\mu}{\nu}.
	\]
\end{lemma}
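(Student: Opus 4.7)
The plan is to reduce to the case of a deterministic $h$ and then apply Jensen's inequality to the convex function $t \mapsto t^{\alpha}$. First, I would realize a ``random function'' $h$ as a deterministic function $\tilde h(x,\omega) = h_\omega(x)$ on an enlarged space $\cX \times \Omega$, where $\Omega$ carries the external randomness according to some distribution $\rho$ independent of the input. Then $h_{\#}\mu = \tilde h_{\#}(\mu \otimes \rho)$ and likewise for $\nu$, and a direct calculation from the definition of R\'enyi divergence shows $\Dalplr{\mu \otimes \rho}{\nu \otimes \rho} = \Dalplr{\mu}{\nu}$ since the $\rho$-factors in the Radon--Nikodym derivative cancel. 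Thus it suffices to prove the lemma for deterministic measurable $h$.

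For deterministic $h$, let $X \sim \nu$ and $Y = h(X)$. The key identity is that the Radon--Nikodym derivative of the pushforwards coincides with a conditional expectation under $\nu$:
\[
\frac{d(h_{\#}\mu)}{d(h_{\#}\nu)}(Y) \;=\; \E_{\nu}\!\left[\,\frac{d\mu}{d\nu}(X)\,\Big|\,Y\,\right].
\]
This can be verified by checking the defining property of conditional expectation against indicators $\mathbf{1}_A(Y)$ for measurable sets $A$ in the target, exactly as in the standard disintegration argument. Applying Jensen's inequality to the convex function $t \mapsto t^{\alpha}$ (convex because $\alpha \geq 1$) conditional on $Y$ yields
\[
\left(\frac{d(h_{\#}\mu)}{d(h_{\#}\nu)}(Y)\right)^{\!\alpha} \;\leq\; \E_{\nu}\!\left[\left(\frac{d\mu}{d\nu}(X)\right)^{\!\alpha}\,\Big|\,Y\right].
\]
Taking expectation over $Y \sim h_{\#}\nu$ on the left and using the tower property on the right, then taking $\log$ and dividing by $\alpha - 1 > 0$, gives $\Dalplr{h_{\#}\mu}{h_{\#}\nu} \leq \Dalplr{\mu}{\nu}$. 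The boundary cases $\alpha = 1$ and $\alpha = \infty$ follow by continuity in $\alpha$.

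The only real obstacle is the pushforward-density identity above in non-discrete settings, which formally requires a regular conditional probability / disintegration argument; since this lemma is classical (see the cited reference), I would either invoke disintegration as a black box or simply cite \citep[Theorem 9]{van2014renyi} after giving the Jensen-based sketch above. An alternative route that avoids disintegration entirely is to use the variational representation of R\'enyi divergence, but the Jensen approach is more self-contained given the tools already introduced in the preliminaries (notably Lemma~\ref{lem:qconvex-rdp}, which can be combined with the product-space reduction to handle the random-$h$ case as a mixture over deterministic $h_\omega$).
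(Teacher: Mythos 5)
Your proof is correct, but note that the paper does not actually supply a proof of this lemma: it simply cites \citep[Theorem 9]{van2014renyi} and moves on. Your Jensen-plus-conditional-expectation argument is the standard route to the data-processing inequality for R\'enyi divergence and is essentially what the cited reference establishes, so there is no substantive divergence to report. Two small remarks. First, the key identity $\tfrac{d(h_{\#}\mu)}{d(h_{\#}\nu)}(Y) = \E_\nu[\tfrac{d\mu}{d\nu}(X)\mid Y]$ does not really require a disintegration theorem: it is immediate from checking the defining property of conditional expectation against sets of the form $\{h(X)\in A\}$, exactly as you indicate, since $\int_{h^{-1}(A)}\tfrac{d\mu}{d\nu}\,d\nu = \mu(h^{-1}(A)) = (h_{\#}\mu)(A) = \int_{h^{-1}(A)}\tfrac{d(h_{\#}\mu)}{d(h_{\#}\nu)}(h(x))\,d\nu(x)$; so the argument is fully self-contained and you need not hedge with a black-box citation. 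Second, your product-space reduction for random $h$ is the cleaner of the two options you mention: the alternative via Lemma~\ref{lem:qconvex-rdp} would require a general-mixture version of quasi-convexity, whereas the lemma as stated in the paper only covers two-component mixtures, so it would need a separate extension step (e.g.\ by a limiting argument over finite mixtures). Sticking with the $\mu\otimes\rho$ reduction, as your main line of argument does, avoids that detour entirely.
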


The third property is the appropriate analog of the chain rule for the KL divergence (the direct analog of the chain rule does not hold for R\'enyi divergences, $\alpha \neq 1$). This bound has appeared in various forms in previous work (e.g.,~\citep{abadi2016deep,mironov2017renyi,dwork2016concentrated}); we state the following two equivalent versions of this bound since at various points in our analysis it will be more convenient to reference one or the other. A proof of the former version is in~\citep[Proposition 3]{mironov2017renyi}. The proof of the latter is similar to~\citep[Theorem 2]{abadi2016deep}; for the convenience of the reader, we provide a brief proof in~\cref{app:deferred:comp-rdp}.

\begin{lemma}[Strong composition for RDP, v1]\label{lem:comp-rdp-alg}
	Suppose algorithms $\cA_1$ and $\cA_2$ satisfy $(\alpha,\eps_1)$-RDP and $(\alpha,\eps_2)$-RDP, respectively. Let $\cA$ denote the algorithm which, given a dataset $\cX$ as input, outputs the composition $\cA(\cX) = (\cA_1(\cX), \cA_2(\cX, \cA_1(\cX)))$. Then $\cA$ satisfies $(\alpha,\eps_1+\eps_2)$-RDP.
\end{lemma}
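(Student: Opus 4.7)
The plan is to fix adjacent datasets $\cX, \cX'$, let $\mu := \cA(\cX)$ and $\nu := \cA(\cX')$ denote the joint output laws, and directly show $\Dalplr{\mu}{\nu} \leq \eps_1 + \eps_2$. First I would decompose each joint law into its first marginal and the conditional distribution of the second component: writing $\mu_1$ for the law of $\cA_1(\cX)$ and $\mu_{2\mid 1}(\,\cdot\mid y_1)$ for the law of $\cA_2(\cX, y_1)$ (and analogously for $\nu_1, \nu_{2\mid 1}$), the chain rule for probability yields $\mu(y_1, y_2) = \mu_1(y_1)\, \mu_{2\mid 1}(y_2 \mid y_1)$ and similarly for $\nu$.

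The main computation is to expand
\[
e^{(\alpha - 1)\,\Dalplr{\mu}{\nu}} \;=\; \iint \mu(y_1,y_2)^{\alpha}\, \nu(y_1,y_2)^{1-\alpha}\, dy_2\, dy_1,
\]
substitute the factorizations, and integrate over $y_2$ first. For every fixed $y_1$, the inner $y_2$-integral equals $\exp\!\bigl((\alpha-1)\,\Dalplr{\mu_{2\mid 1}(\cdot\mid y_1)}{\nu_{2\mid 1}(\cdot\mid y_1)}\bigr)$. Since the two conditional laws are $\cA_2$ run on adjacent datasets $\cX, \cX'$ with the same auxiliary input $y_1$, the $(\alpha,\eps_2)$-RDP hypothesis on $\cA_2$ bounds this inner factor by $e^{(\alpha-1)\eps_2}$ uniformly in $y_1$. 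Pulling that uniform bound out of the outer integral leaves $e^{(\alpha-1)\,\Dalplr{\mu_1}{\nu_1}} \leq e^{(\alpha-1)\eps_1}$ by the RDP of $\cA_1$; multiplying and taking logarithms gives $\Dalplr{\mu}{\nu} \leq \eps_1 + \eps_2$.

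The only delicate point—and the reason this requires slightly more care than a direct port of the KL chain rule—is that the R\'enyi divergence for $\alpha \neq 1$ does not admit an expectation-form chain rule; one genuinely needs a \emph{uniform} bound on the conditional divergence over $y_1$, rather than an average one. This is exactly what $(\alpha, \eps_2)$-RDP supplies: the guarantee holds for every fixed value of $\cA_2$'s auxiliary argument, so uniformity in $y_1$ is automatic. Absolute continuity is not an issue either: if $\mu \ll \nu$ then the factorizations transfer this property to both the marginals and the conditionals, and otherwise both $\Dalplr{\mu_1}{\nu_1}$ and the conditional divergence on the offending fiber would already be infinite, making the hypothesis vacuously fail on one of the two algorithms.
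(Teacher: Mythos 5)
Your argument is correct and mirrors the paper's own derivation: the paper proves the equivalent v2 (Lemma~\ref{lem:comp-rdp-prob}) in Appendix~\ref{app:deferred:comp-rdp} by exactly this factorize-the-joint-law, integrate-the-inner-coordinate-first calculation (generalized to $k$ marginals), and for v1 it cites~\citep[Proposition~3]{mironov2017renyi}, which is the same computation. Your emphasis that the bound on the conditional R\'enyi divergence must be uniform in $y_1$ rather than merely averaged is precisely the feature that distinguishes this composition rule from the KL chain rule, and it is supplied by the fact that $(\alpha,\eps_2)$-RDP of $\cA_2$ holds for every fixed auxiliary input.
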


\begin{lemma}[Strong composition for RDP, v2]\label{lem:comp-rdp-prob}
	For any R\'enyi parameter $\alpha \geq 1$ and any two sequences of random variables $X_1, \dots, X_k$ and $Y_1,\dots,Y_k$,
	\[
	\cD_{\alpha}\left( \Prob_{X_{1:k}} \, \big\| \, \Prob_{Y_{1:k}} \right)
	\leq
	\sum_{i=1}^k \sup_{x_{1:i-1}} \Dalplr{
 \Prob_{X_i | X_{1:i-1} = x_{1:i-1}}}{ \Prob_{Y_i | Y_{1:i-1} = x_{1:i-1}} }.
	\]
\end{lemma}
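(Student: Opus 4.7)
The plan is to prove this by induction on the number of components $k$, using the standard integral representation $\exp\bigl((\alpha-1)\Dalplr{\mu}{\nu}\bigr) = \int \mu(x)^{\alpha}\,\nu(x)^{1-\alpha}\,dx$ (for $\alpha \in (1,\infty)$; the cases $\alpha \in \{1,\infty\}$ then follow by continuity). The base case $k=1$ is immediate from the definition of conditional distributions, since with no conditioning $\Prob_{X_1 \mid X_{1:0}} = \Prob_{X_1}$ and similarly for $Y_1$, so the bound holds with equality. The inductive step is where the real work lies.

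For the inductive step, I would fix $k \geq 2$ and assume the result holds for $k-1$. The key identity is the chain-rule factorization of joint densities, namely
\[
\Prob_{X_{1:k}}(x_{1:k}) = \Prob_{X_{1:k-1}}(x_{1:k-1}) \cdot \Prob_{X_k \mid X_{1:k-1} = x_{1:k-1}}(x_k),
\]
and likewise for $Y$. Substituting this into the integral representation, I would split the joint integral over $x_{1:k}$ as an iterated integral, first integrating over $x_k$ (with $x_{1:k-1}$ held fixed) and then over $x_{1:k-1}$. The inner integral is exactly $\exp\bigl((\alpha-1)\Dalplr{\Prob_{X_k \mid X_{1:k-1} = x_{1:k-1}}}{\Prob_{Y_k \mid Y_{1:k-1} = x_{1:k-1}}}\bigr)$, which I can upper bound uniformly by its supremum over $x_{1:k-1}$. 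Pulling this supremum out of the outer integral leaves $\exp\bigl((\alpha-1)\Dalplr{\Prob_{X_{1:k-1}}}{\Prob_{Y_{1:k-1}}}\bigr)$ in the remaining factor. Taking logarithms and dividing by $\alpha - 1$ converts the product into the desired sum, and the inductive hypothesis bounds the remaining $k-1$ term by the partial sum, completing the step.

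Essentially the only subtlety I anticipate is the bookkeeping with densities: to apply the integral representation cleanly, I need absolute continuity of $\Prob_{X_{1:k}}$ with respect to $\Prob_{Y_{1:k}}$ (else both sides are $\infty$ and the inequality is trivial), and I should note that the chain-rule factorization works at the level of regular conditional distributions. A second minor point is that the supremum is over $x_{1:i-1}$ drawn from the support of $\Prob_{Y_{1:i-1}}$ (points outside this support contribute nothing to the outer integral), which is the natural reading; this matches the statement since outside this support the bound is vacuous. Apart from these measure-theoretic caveats, no part of the argument looks hard, and the $\alpha=1$ case reduces to the classical KL chain rule (where the sup can even be replaced by an expectation, yielding a stronger bound, but that is not needed here).
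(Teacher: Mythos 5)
Your proof is correct and follows essentially the same argument as the paper: both use the integral representation of the R\'enyi divergence, factor the joint densities via the chain rule, and bound the iterated integral by pulling out a supremum at each level. The only difference is that you organize this as an explicit induction on $k$ while the paper does the iterated-integral bound all at once; this is purely cosmetic.
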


\subsection{Privacy Amplification by Sampling}\label{ssec:prelim:pabs}

A core technique in the DP literature is Privacy Amplification by Sampling~\citep{Kasiviswanathan11} which quantifies the idea that a private algorithm, run on a small random sample of the input, becomes more private. There are several ways of formalizing this. For our purpose of analyzing $\ISGD$, we must understand how distinguishable a noisy stochastic gradient update is when run on two adjacent datasets. This is precisely captured by the ``Sampled Gaussian Mechanism'', which is a composition of two operations: subsampling and additive Gaussian noise. Below we recall a convenient statement of this in terms of RDP from~\citep{rdp_sgm}. We start with a preliminary definition in one dimension.

\begin{defin}[R\'enyi Divergence of the Sampled Gaussian Mechanism]\label{def:sgm}
	For R\'enyi parameter $\alpha \geq 1$, mixing probability $q \in (0,1)$, and noise parameter $\sig > 0$, define
	\[
		S_{\alpha}(q,\sig) := \cD_{\alpha} \left( \cN(0,\sig^2)  \, \big\| \, (1-q) \cN(0,\sig^2) + q \cN(1,\sig^2) \right).
	\]
\end{defin}

Next, we provide a simple lemma that extends this notion to higher dimensions. In words, the proof simply argues that the worst-case distribution $\mu$ is a Dirac supported on a point of maximal norm, and then reduces the multivariate setting to the univariate setting via rotational invariance. The proof is similar to~\citep[Theorem 4]{rdp_sgm}; for convenience, details are provided in~\cref{app:deferred:sgm-extrema}.

\begin{lemma}[Extrema for R\'enyi Divergence of the Sampled Gaussian Mechanism]\label{lem:sgm-extrema}
	For any R\'enyi parameter $\alpha \geq 1$, mixing probability $q \in (0,1)$, noise level $\sigma > 0$, dimension $d \in \N$, and radius $R > 0$,
	\[
	\sup_{\mu \in \cP(R\mathbb{B}_d)} \cD_{\alpha}\left( \cN(0,\sig^2 I_d) \, \big\| \, (1-q) \cN(0,\sig^2 I_d) + q \left( \cN(0,\sig^2 I_d) \ast \mu\right) \right)
	=
	S_{\alpha}(q,\sig/R),
	\]
	where above $\cP(R\mathbb{B}_d)$ denotes the set of Borel probability distributions that are supported on the ball of radius $R$ in $\R^d$.
\end{lemma}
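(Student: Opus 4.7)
The plan is to reduce the supremum to an explicit Dirac at a boundary point, and then express the resulting multivariate R\'enyi divergence in terms of the one-dimensional object $S_\alpha$. The argument proceeds in three steps: quasi-convexity, tensorization/rotational invariance, and monotonicity via post-processing.

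First, I would apply joint quasi-convexity of the R\'enyi divergence (Lemma~\ref{lem:qconvex-rdp}), extended from two-fold mixtures to an arbitrary mixing measure $\mu$. Writing $\cN(0, \sig^2 I_d) \ast \mu = \int \cN(v, \sig^2 I_d) \, d\mu(v)$ and regarding the first distribution as the trivial mixture $\int \cN(0, \sig^2 I_d) \, d\mu(v)$, quasi-convexity gives the upper bound $\sup_{v \in \supp(\mu)} \cD_\alpha(\cN(0, \sig^2 I_d) \,\|\, (1-q)\cN(0, \sig^2 I_d) + q\cN(v, \sig^2 I_d))$, reducing the problem to Diracs $\delta_v$ with $\|v\| \leq R$.

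Second, for a fixed $v$ with $\|v\| = r$, I would use rotational invariance of the isotropic Gaussian to assume WLOG that $v = r e_1$. The two distributions then factor across coordinates: in the first coordinate they are $\cN(0, \sig^2)$ and $(1-q)\cN(0, \sig^2) + q\cN(r, \sig^2)$, while in the remaining $d-1$ coordinates both are $\cN(0, \sig^2 I_{d-1})$. Tensorization of R\'enyi divergence across independent factors collapses the $d$-dimensional divergence to the one-dimensional one, and rescaling by $1/r$ identifies it with $S_\alpha(q, \sig/r)$.

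Finally, I need to show the supremum of $S_\alpha(q, \sig/r)$ over $r \in (0, R]$ is attained at $r = R$, equivalently that $\tau \mapsto S_\alpha(q, \tau)$ is non-increasing. For $\tau_1 \leq \tau_2$, convolving both distributions in the definition of $S_\alpha(q, \tau_1)$ with an independent $\cN(0, \tau_2^2 - \tau_1^2)$ produces exactly the two distributions defining $S_\alpha(q, \tau_2)$ (using that convolution distributes over mixtures). The post-processing property of R\'enyi divergence (Lemma~\ref{lem:process-rdp}) then yields $S_\alpha(q, \tau_2) \leq S_\alpha(q, \tau_1)$, completing the upper bound. The matching lower bound is immediate by taking $\mu = \delta_{R e_1}$, which attains $S_\alpha(q, \sig/R)$ by the tensorization argument. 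No single step is a serious obstacle; the only mildly subtle point is extending the two-mixture quasi-convexity lemma to general mixing measures, which is standard but worth noting.
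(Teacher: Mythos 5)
Your proposal is correct and follows essentially the same route as the paper's proof in Appendix B.2: rewrite the second argument as an integral of two-point mixtures, invoke quasi-convexity of $\cD_\alpha$ to pass to Diracs, use rotational invariance to drop to one dimension, and observe that the Dirac at radius $R$ saturates every inequality. The only place where you go beyond the paper is the monotonicity step: the paper simply asserts that $\sup_{r \in [0,R]} \cD_\alpha\bigl(\cN(0,\sig^2) \,\|\, (1-q)\cN(0,\sig^2) + q\cN(r,\sig^2)\bigr)$ is attained at $r = R$ without justification, whereas you supply a clean proof by convolving both distributions in $S_\alpha(q,\tau_1)$ with $\cN(0,\tau_2^2 - \tau_1^2)$ and invoking post-processing (Lemma~\ref{lem:process-rdp}). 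That argument is valid since convolution with an independent fixed distribution is a Markov kernel, and it does distribute over the mixture as you claim, so $S_\alpha(q,\cdot)$ is indeed non-increasing. You also rightly flag that Lemma~\ref{lem:qconvex-rdp} is stated only for two-point mixtures and needs to be extended to general mixing measures; this is implicitly used by the paper as well, and the extension is standard (e.g., by approximating $\mu$ by finitely supported measures and using lower semicontinuity of $\cD_\alpha$, or by directly repeating the quasi-convexity computation with an integral in place of a finite sum). In short, the proof is sound, matches the paper's approach, and is in one respect more complete than the paper's own argument.
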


Finally, we recall the tight bound~\citep[Theorem 11]{rdp_sgm} on these quantities. This bound restricts to R\'enyi parameter at most $\alpha^*(q,\sigma)$, which is defined to be the largest $\alpha$ satisfying 
$\alpha \leq M \sig^2/2 - \log (\sig^2)$ and $\alpha \leq (M^2\sig^2/2 - \log(5\sig^2))/(M + \log(q\alpha) + 1/(2\sig^2))$, where $M = \log(1 + 1/(q(\alpha-1)))$. While we use Lemma~\ref{lem:sgm} to prove the asymptotics in our theoretical results, we emphasize that i) our bounds can be computed numerically for \emph{any} $\alpha \geq 1$, see Remark~\ref{rem:alpha}; and ii) this upper bound does not preclude $\alpha$ from the typical parameter regime of interest, see the discussion in \S\ref{ssec:intro:cont}.

\begin{lemma}[Bound on R\'enyi Divergence of the Sampled Gaussian Mechanism]\label{lem:sgm}
	Consider R\'enyi parameter $\alpha > 1$, mixing probability $q \in (0,1/5)$, and noise level $\sigma \geq 4$. If $\alpha \leq \alpha^*(q,\sigma)$, then
	\[
	S_{\alpha}(q, \sig) \leq  2\alpha q^2 / \sig^2.
	\]
\end{lemma}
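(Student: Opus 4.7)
The plan is to follow the strategy of~\citep[Theorem 11]{rdp_sgm}, reducing $S_\alpha(q,\sig)$ to a tractable moment computation for the one-dimensional Gaussian likelihood ratio $L(z) \defeq \exp((2z-1)/(2\sig^2))$ between $\mu_0 \defeq \cN(0,\sig^2)$ and $\mu_1 \defeq \cN(1,\sig^2)$. Using the Radon--Nikodym identity $d\mu/d\mu_0 = 1 + q(L - 1)$ for the mixture $\mu \defeq (1-q)\mu_0 + q\mu_1$, the divergence can be rewritten as
\[
(\alpha-1)\, S_\alpha(q,\sig) \;=\; \log \E_{z \sim \mu_0}\bigl[(1 + q(L(z) - 1))^{-(\alpha-1)}\bigr].
\]
All integer moments of $L$ are available in closed form from the Gaussian MGF, namely $\E_{\mu_0}[L^j] = \exp(j(j-1)/(2\sig^2))$, and this explicit structural input is what makes the rest of the argument go through.

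Next, I would expand the integrand via the generalized binomial series $(1+q(L-1))^{-(\alpha-1)} = \sum_{k \geq 0} \binom{-(\alpha-1)}{k} q^k (L-1)^k$, take the $\mu_0$-expectation term-by-term, and apply $\log(1+x) \leq x$ on the outside. The $k=0$ term equals $1$ and is cancelled by the logarithm; the $k=1$ term vanishes because $\E_{\mu_0}[L] = 1$; and the $k=2$ term contributes $\tfrac{\alpha(\alpha-1)}{2} q^2 (e^{1/\sig^2} - 1) \leq \alpha(\alpha-1) q^2/\sig^2$ after using $e^{1/\sig^2} - 1 \leq 2/\sig^2$ for $\sig \geq 4$. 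Dividing by $\alpha-1$ yields $\alpha q^2/\sig^2$, which is already half of the target bound; the remaining half of the budget must absorb all tail terms $k \geq 3$.

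The main obstacle is precisely this tail control. Each absolute moment $\E_{\mu_0}[|L-1|^k]$ can be expanded binomially in terms of $\E_{\mu_0}[L^j] = \exp(j(j-1)/(2\sig^2))$, which grows super-exponentially in $j$. The key quantitative task is to show that, under the hypothesis $\alpha \leq \alpha^*(q,\sig)$, the ratio of consecutive terms in the series is uniformly bounded by a constant strictly less than one, so the tail becomes a geometric series controlled by a constant multiple of the leading $k=2$ term. This is the reason for the two-part definition of $\alpha^*(q,\sig)$: the first inequality $\alpha \leq M\sig^2/2 - \log(\sig^2)$ dampens the low-order tail terms (where the $\binom{-(\alpha-1)}{k}$ factors are most dangerous), while the second inequality, involving $\log(q\alpha)$, balances the high-order terms, where the super-exponential growth $\exp(k^2/(2\sig^2))$ of the moments eventually begins to dominate. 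Combining the leading $k=2$ contribution with the geometric tail bound finally yields $S_\alpha(q,\sig) \leq 2\alpha q^2/\sig^2$, completing the argument.
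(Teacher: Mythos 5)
The paper does not prove this lemma; it simply cites \citep[Theorem~11]{rdp_sgm} and uses it as a black box. So there is no ``paper proof'' to compare against, and I am evaluating your sketch on its own merits.

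Your setup is correct: the reduction to the univariate likelihood ratio $L(z) = \exp((2z-1)/(2\sig^2))$, the Radon--Nikodym identity $d\nu/d\mu_0 = 1 + q(L-1)$, the moment identity $\E_{\mu_0}[L^j] = \exp(j(j-1)/(2\sig^2))$, and the $k=2$ term $\tfrac{\alpha(\alpha-1)}{2}q^2(e^{1/\sig^2}-1) \leq \alpha(\alpha-1)q^2/\sig^2$ are all right and do recover half of the target. However, the tail argument has two genuine gaps. First, the generalized binomial series $(1+q(L-1))^{-(\alpha-1)} = \sum_{k\geq 0}\binom{-(\alpha-1)}{k}q^k(L-1)^k$ converges only when $|q(L-1)|<1$; since $L$ is unbounded under $\mu_0$, the expansion fails pointwise on the positive-probability event $\{L > 1 + 1/q\}$, so term-by-term integration is not licensed by dominated convergence or Fubini without substantial extra work (e.g.\ truncation of the domain plus separate handling of the far tail in $z$). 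Second, the claim that ``the ratio of consecutive terms is uniformly bounded by a constant strictly less than one'' is false: $\E_{\mu_0}[|L-1|^k]$ grows like $\exp(k(k-1)/(2\sig^2))$, so the term ratio eventually exceeds any fixed constant no matter how $\alpha^*(q,\sig)$ is chosen. What the $\alpha^*$ conditions actually buy is control up to a finite $k$-threshold, after which a qualitatively different argument is needed for the super-exponential remainder; this is precisely what the three-part/multi-regime analysis in \citep{rdp_sgm} does (they work with the exact finite sum for integer $\alpha$ and interpolate, rather than a formal infinite series). Your sketch names the difficulty but does not resolve it, and as written the proposed geometric-tail mechanism cannot be made to work.
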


\subsection{Privacy Amplification by Iteration}\label{ssec:prelim:pabi}

The Privacy Amplification by Iteration technique of~\citep{pabi} bounds the privacy loss of an iterative algorithm without ``releasing'' the entire sequence of iterates---unlike arguments based on Privacy Amplification by Sampling, c.f., the discussion in \S\ref{ssec:intro:tech}. This technique applies to processes which are generated by a Contractive Noisy Iteration (CNI). We begin by recalling this definition, albeit in a slightly more general form that allows for two differences. The first difference is allowing the contractions to be random; albeit simple, this generalization is critical for analyzing $\ISGD$ because a stochastic gradient update is random. The second difference is that we project each iterate; this generalization is solely for convenience as it simplifies the exposition.\footnote{Since projections are contractive, these processes could be dubbed Contractive Noisy Contractive Processes (CNCI); however, we use Definition~\ref{def:cni} as it more closely mirrors previous usages of CNI. Alternatively, since the composition of contractions is a contraction, the projection can be combined with $\phi_t$ in order to obtain a bona fide CNI; however, this requires defining auxiliary shifted processes which leads to a more complicated analysis (this was done in the original arXiv version v1).}

\begin{defin}[Contractive Noisy Iteration]\label{def:cni}
	Consider a (random) initial state $X_0 \in \R^d$, a sequence of (random) contractions $\phi_t : \R^d \to \R^d$, a sequence of noise distributions $\xi_t$, and a convex set $\cK$. The Projected Contractive Noisy Iteration $\CNI(X_0,\{\phi_t\}, \{\xi_t\}, \cK)$ is the law of the final iterate $X_T$ of the process
	\[
	X_{t+1} = \proj\left[ \phi_{t+1}(X_t) + Z_{t+1} \right],
	\]
	where $Z_{t+1}$ is drawn independently from $\xi_{t+1}$. 
\end{defin}

Privacy Amplification by Iteration is based on two key lemmas. We recall both below in the case of Gaussian noise since this suffices for our purposes. We begin with a preliminary definition. 

\begin{defin}[Shifted R\'enyi Divergence; Definition 8 of~\citep{pabi}]\label{def:shifted-rdp}
	Let $\mu,\nu$ be probability distributions over $\R^d$. For parameters $z \geq 0$ and $\alpha \geq 1$, the shifted R\'enyi divergence is
	\[
	\cD_{\alpha}^{(z)}\big( \mu \;\|\; \nu \big)
	=
	\inf_{\mu' \; : \; W_{\infty}(\mu,\mu') \leq z} \cD_{\alpha}\big( \mu' \; \|\; \nu \big). 
	\]
\end{defin}

(Recall that the $\infty$-Wasserstein metric $\cW_{\infty}(\mu,\mu')$ between two distributions $\mu$ and $\mu'$ is the smallest real number $w$ for which the following holds: there exists a joint distribution $\Prob_{X,X'}$ with first marginal $X \sim \mu$ and second marginal $X' \sim \mu'$, under which $\|X- X'\| \leq w$ almost surely.)

The shift-reduction lemma~\citep[Lemma 20]{pabi} bounds the shifted R\'enyi divergence between two distributions that are convolved with Gaussian noise. 

\begin{lemma}[Shift-reduction lemma]\label{lem:pabi-shift}
	Let $\mu,\nu$ be probability distributions on $\R^d$. For any $a \geq 0$, 
	\[
	\cD_{\alpha}^{(z)} \left( \mu \ast \cN(0,\sig^2 I_d) \; \big\| \; \nu \ast \cN(0,\sig^2 I_d) \right)
	\leq
	\cD_{\alpha}^{(z+a)} \left( \mu \; \big\| \; \nu \right) + \frac{\alpha a^2}{2\sig^2}.
	\]
\end{lemma}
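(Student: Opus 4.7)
The plan is to exhibit, for each near-optimal $\mu^{*}$ in the infimum defining $\cD_{\alpha}^{(z+a)}(\mu \| \nu)$, a distribution $\tilde P$ on $\R^d$ satisfying $W_{\infty}(\mu \ast \cN(0,\sig^{2} I_d), \tilde P) \leq z$ and $\cD_{\alpha}(\tilde P \| \nu \ast \cN(0, \sig^{2} I_d)) \leq \cD_{\alpha}(\mu^{*} \| \nu) + \alpha a^{2}/(2\sig^{2})$; taking the infimum over $\mu^{*}$ and invoking the definition of $\cD_{\alpha}^{(z)}$ then yields the claim. Concretely, I would fix a near-optimal $\mu^{*}$, realize it as a coupling $(X, X^{*})$ with $X \sim \mu$, $X^{*} \sim \mu^{*}$, and $\|X - X^{*}\| \leq z + a$ almost surely, and then convex-combine by setting $\tilde X := X + \tfrac{z}{z+a}(X^{*} - X)$ (and $\tilde X := X$ if $z + a = 0$). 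By construction $\|\tilde X - X\| \leq z$ and $\|\tilde X - X^{*}\| \leq a$ almost surely. Letting $\mu' := \Prob_{\tilde X}$ and $\tilde P := \mu' \ast \cN(0, \sig^{2} I_d)$, coupling through a common fresh noise $N$ via $(X + N, \tilde X + N)$ gives $W_{\infty}(\mu \ast \cN(0, \sig^{2} I_d), \tilde P) \leq z$, which reduces the goal to bounding $\cD_{\alpha}(\tilde P \| \nu \ast \cN(0, \sig^{2} I_d))$.

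For this core estimate I would combine the post-processing property, the strong composition lemma (Lemma~\ref{lem:comp-rdp-prob}), and joint quasi-convexity (Lemma~\ref{lem:qconvex-rdp}). Define $U := \tilde X - X^{*}$; then $U$ is a deterministic function of $(X, X^{*})$ with $\|U\| \leq a$ a.s. Draw $N \sim \cN(0, \sig^{2} I_d)$ independently of $(X, X^{*})$, and observe that the pushforward of $\Prob_{X^{*},\, U + N}$ under the map $(x_{1}, x_{2}) \mapsto x_{1} + x_{2}$ equals $\Prob_{\tilde X + N} = \tilde P$, while the pushforward of $\Prob_{Y, N}$ (with $Y \sim \nu$ and $N$ independent) under the same map equals $\nu \ast \cN(0, \sig^{2} I_d)$. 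Hence by post-processing it suffices to bound $\cD_{\alpha}(\Prob_{X^{*}, U + N} \| \Prob_{Y, N})$, and strong composition (Lemma~\ref{lem:comp-rdp-prob}) bounds this by $\cD_{\alpha}(\mu^{*} \| \nu) + \sup_{y} \cD_{\alpha}(\Prob_{U + N \mid X^{*} = y} \| \cN(0, \sig^{2} I_d))$. Because $N$ is independent of $(X, X^{*})$, each conditional law $\Prob_{U + N \mid X^{*} = y}$ is a mixture of Gaussians $\cN(u, \sig^{2} I_d)$ indexed by $u$ in the (radius-$a$) support of $U$ given $X^{*} = y$. Writing $\cN(0, \sig^{2} I_d)$ as the trivial same-mixing-weight mixture and applying joint quasi-convexity dominates this supremum by $\sup_{\|u\| \leq a} \cD_{\alpha}(\cN(u, \sig^{2} I_d) \| \cN(0, \sig^{2} I_d)) = \alpha a^{2}/(2 \sig^{2})$, which closes the bound.

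The main obstacle to anticipate is allocating the total $W_{\infty}$-budget of $z + a$ cleanly. The convex interpolation $\tilde X = X + \tfrac{z}{z+a}(X^{*} - X)$ is exactly what simultaneously produces $\|X - \tilde X\| \leq z$ (required for the $W_{\infty}$-$z$ coupling against $\mu \ast \cN$) and $\|\tilde X - X^{*}\| \leq a$ (required to cap the mixture components at radius $a$ in the strong-composition step), and it is this balance that makes the two error contributions add up to exactly $\cD_{\alpha}^{(z+a)}(\mu \| \nu) + \alpha a^{2}/(2\sig^{2})$ after taking the infimum over $\mu^{*}$. A minor technical point is that Lemma~\ref{lem:qconvex-rdp} is stated for two-component mixtures but extends to general common-weight mixtures by the standard inductive or continuous limit, which is what the last step above invokes.
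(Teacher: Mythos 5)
The paper does not prove this lemma itself; it cites it directly from Feldman et al.\ (\citep[Lemma~20]{pabi}), so there is no in-paper proof to compare against. Your argument is a correct self-contained proof and reconstructs the approach in that cited source: fix an $\epsilon$-optimal witness $\mu^*$ for $\cD_{\alpha}^{(z+a)}(\mu\|\nu)$, couple $X\sim\mu$ and $X^*\sim\mu^*$ at distance at most $z+a$, and use the convex interpolation $\tilde X = X + \tfrac{z}{z+a}(X^* - X)$ to split the budget so that $\|\tilde X - X\|\leq z$ (providing the $W_\infty$ coupling of the convolved measures through shared noise) while $U:=\tilde X - X^*$ has $\|U\|\leq a$ almost surely; then bound $\cD_\alpha(\tilde P\|\nu\ast\cN)$ via post-processing of $(X^*,U+N)\mapsto X^*+U+N$, strong composition (Lemma~\ref{lem:comp-rdp-prob}), quasi-convexity over the mixture $\Prob_{U+N\mid X^*=y}=\int\cN(u,\sigma^2 I_d)\,d\Prob_{U\mid X^*=y}(u)$, and the formula $\cD_\alpha(\cN(u,\sigma^2 I_d)\|\cN(0,\sigma^2 I_d))=\alpha\|u\|^2/(2\sigma^2)$. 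Two details worth flagging for a full writeup: take $\mu^*$ only $\epsilon$-optimal and let $\epsilon\downarrow 0$ rather than assuming the infimum in the shifted divergence is attained, and note that Lemma~\ref{lem:qconvex-rdp}, stated for two-point mixtures, is applied to a general mixture---the same extension the paper itself invokes implicitly in its proof of Lemma~\ref{lem:sgm-extrema}. With those noted, the proof is sound.
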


The contraction-reduction lemma~\citep[Lemma 21]{pabi} bounds the shifted R\'enyi divergence between the pushforwards of two distributions through similar contraction maps. Below we state a slight generalization of~\citep[Lemma 21]{pabi} that allows for \emph{random} contraction maps. The proof of this generalization is similar, except that here we exploit the quasi-convexity of the R\'enyi divergence to handle the additional randomness; details in~\cref{app:deferred:pabi-contract}.

\begin{lemma}[Contraction-reduction lemma, for random contractions]\label{lem:pabi-contract}
	Suppose $\phi, \phi'$ are random functions from $\R^d$ to $\R^d$ such that (i) each is a contraction almost surely; and (ii) there exists a coupling of $(\phi,\phi')$ under which
	$\sup_{z} \|\phi(z) - \phi'(z)\| \leq s$ almost surely. Then for any probability distributions $\mu$ and $\mu'$ on $\R^d$, 
	\[
	\cD_{\alpha}^{(z+s)} \left( \phi_{\#} \mu \| \phi_{\#}' \mu' \right)
	\leq
	\cD_{\alpha}^{(z)}\left( \mu \| \mu' \right).
	\]
\end{lemma}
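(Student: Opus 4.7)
The plan is to mimic the proof of the deterministic contraction-reduction lemma (Lemma 21 of~\citep{pabi}): introduce the intermediate distribution $\phi'_{\#}\tilde\mu$, where $\tilde\mu$ is a near-minimizer for $\cD_{\alpha}^{(z)}(\mu \| \mu')$, and bound the shifted R\'enyi divergence through it in two pieces---a coupling bound giving $W_{\infty}(\phi_{\#}\mu, \phi'_{\#}\tilde\mu) \leq z+s$, and a post-processing bound giving $\cD_{\alpha}(\phi'_{\#}\tilde\mu \| \phi'_{\#}\mu') \leq \cD_{\alpha}(\tilde\mu \| \mu')$. Only the second step deviates nontrivially from the deterministic case, and this is precisely where joint quasi-convexity of R\'enyi divergence enters.

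For the coupling step, I would fix any $\tilde\mu$ with $W_{\infty}(\mu,\tilde\mu) \leq z$ and $\cD_{\alpha}(\tilde\mu \| \mu')$ arbitrarily close to $\cD_{\alpha}^{(z)}(\mu \| \mu')$, pick a coupling $(X, \tilde X)$ of $(\mu, \tilde\mu)$ with $\|X - \tilde X\| \leq z$ almost surely, and independently take a coupling $(\phi, \phi')$ from hypothesis (ii). Then $(\phi(X), \phi'(\tilde X))$ is a valid coupling of $(\phi_{\#}\mu, \phi'_{\#}\tilde\mu)$, and the triangle inequality gives
\[
\|\phi(X) - \phi'(\tilde X)\| \leq \|\phi(X) - \phi(\tilde X)\| + \|\phi(\tilde X) - \phi'(\tilde X)\| \leq z + s
\]
almost surely, using hypothesis (i) that $\phi$ is a contraction a.s.\ for the first term and hypothesis (ii) for the second. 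This yields $W_{\infty}(\phi_{\#}\mu, \phi'_{\#}\tilde\mu) \leq z + s$.

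For the post-processing step, I would invoke Lemma~\ref{lem:process-rdp} applied to the random function $\phi'$, obtaining $\cD_{\alpha}(\phi'_{\#}\tilde\mu \| \phi'_{\#}\mu') \leq \cD_{\alpha}(\tilde\mu \| \mu')$. The randomness in $\phi'$ is absorbed by writing both pushforwards as integral mixtures $\int (\phi_0')_{\#}\tilde\mu \, d\Prob_{\phi'}(\phi_0')$ and $\int (\phi_0')_{\#}\mu' \, d\Prob_{\phi'}(\phi_0')$ over the common base measure $\Prob_{\phi'}$, and then invoking joint quasi-convexity of R\'enyi divergence (Lemma~\ref{lem:qconvex-rdp}, extended to arbitrary mixtures by a standard approximation argument) to reduce to the ordinary deterministic post-processing inequality applied realization-by-realization. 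Chaining the two bounds via the definition of the shifted R\'enyi divergence yields
\[
\cD_{\alpha}^{(z+s)}(\phi_{\#}\mu \| \phi'_{\#}\mu') \leq \cD_{\alpha}(\phi'_{\#}\tilde\mu \| \phi'_{\#}\mu') \leq \cD_{\alpha}(\tilde\mu \| \mu'),
\]
and passing to the infimum over admissible $\tilde\mu$ completes the argument. The main, and essentially only, new obstacle compared to the deterministic proof is this quasi-convexity maneuver for handling the random $\phi'$; the coupling and chaining steps are identical to the original.
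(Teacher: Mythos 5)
Your proof is correct and follows essentially the same route as the paper's: introduce the intermediate distribution obtained by pushing the $W_\infty$-certifying measure forward under $\phi'$, bound $W_\infty(\phi_\#\mu,\phi'_\#\tilde\mu)\le z+s$ via the triangle inequality plus the hypotheses, and then use quasi-convexity to reduce the post-processing step for random $\phi'$ to its deterministic case. The only cosmetic differences are that you phrase the $W_\infty$ bound pointwise rather than as a triangle inequality on Wasserstein distances and that you take a near-minimizer $\tilde\mu$ rather than an exact minimizer; both are harmless.
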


The original\footnote{Strictly speaking, Proposition~\ref{prop:pabi-original} is a generalization of~\citep[Theorem 22]{pabi} since it allows for randomized contractions and projections in the CNI (c.f., Definition~\ref{def:cni}). However, the proof is identical, modulo replacing the original Contraction-Reduction Lemma with its randomized generalization (Lemma~\ref{lem:pabi-contract}) and analyzing the projection step again using the Contraction-Reduction Lemma.} Privacy Amplification by Iteration argument combines these two lemmas to establish the following bound.

\begin{prop}[Original PABI bound]\label{prop:pabi-original}
	Let $X_T$ and $X_T'$ denote the outputs of $\CNI(X_0,\{\phi_t\},\{\xi_t\},\cK)$ and $\CNI(X_0,\{\phi_t'\},\{\xi_t\},\cK)$ where $\xi_t = \cN(0,\sig_t^2 I_d)$. Let $s_t := \sup_x \|\phi_t(x) - \phi_t'(x)\|$, and consider any sequence $a_1, \dots, a_T$ such that $z_t := \sum_{i = 1}^t (s_i - a_i)$ is non-negative for all $t$ and satisfies $z_T = 0$. Then
	\[
	\Dalplr{\Prob_{X_T}}{\Prob_{X_T'}}
	\leq 
	\frac{\alpha}{2}\sum_{t=1}^T \frac{a_t^2}{\sigma_t^2}.
	\]
\end{prop}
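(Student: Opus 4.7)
The plan is to prove the bound by induction on $t$, tracking the shifted R\'enyi divergence $R_t := \Dalpshiftlr{z_t}{\Prob_{X_t}}{\Prob_{X_t'}}$ at each step, where $z_t = \sum_{i=1}^t (s_i - a_i)$ is the sequence in the statement. The base case is immediate: since both CNI processes share the same initial state $X_0$ and $z_0 = 0$, we have $R_0 = 0$. The inductive goal is $R_{t+1} \leq R_t + \tfrac{\alpha a_{t+1}^2}{2\sigma_{t+1}^2}$; telescoping this over $t = 0,\dots,T-1$ and using $z_T = 0$ (so that $R_T$ is the ordinary R\'enyi divergence between $\Prob_{X_T}$ and $\Prob_{X_T'}$) then yields the claim.

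For the inductive step, I would decompose one CNI update into three substeps and apply the two key lemmas to each. First, push $\Prob_{X_t}$ and $\Prob_{X_t'}$ through the (possibly random) contractions $\phi_{t+1}$ and $\phi_{t+1}'$; the randomized contraction-reduction lemma (Lemma~\ref{lem:pabi-contract}) absorbs the contraction gap $s_{t+1}$ into the shift budget, changing the shift from $z_{t+1} + a_{t+1} - s_{t+1}$ to $z_{t+1} + a_{t+1}$ without increasing the divergence. Second, handle the additive Gaussian noise $Z_{t+1}$ using the shift-reduction lemma (Lemma~\ref{lem:pabi-shift}) with parameter $a_{t+1}$: this trades $a_{t+1}$ units of shift budget for an additive cost of $\tfrac{\alpha a_{t+1}^2}{2\sigma_{t+1}^2}$, yielding a divergence between the pre-projection random variables at shift $z_{t+1}$. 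Third, apply the contraction-reduction lemma once more with $\phi = \phi' = \proj$ and $s = 0$, using that Euclidean projection onto the convex set $\cK$ is deterministic and $1$-Lipschitz; this step preserves the shift and does not increase the divergence, bringing us to $R_{t+1}$. Chaining these three inequalities gives exactly $R_{t+1} \leq R_t + \tfrac{\alpha a_{t+1}^2}{2\sigma_{t+1}^2}$.

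Throughout the induction I need the shift parameters fed into the reduction lemmas to be non-negative, which is why the hypothesis $z_t \geq 0$ for all $t$ is assumed in the statement: the intermediate shifts $z_{t+1}$ and $z_{t+1} + a_{t+1}$ are then automatically non-negative (the latter since $a_{t+1} \geq 0$ can be chosen WLOG, as negative $a_{t+1}$ would only hurt the bound).

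The main subtlety is the randomness of $\phi_{t+1}$ and $\phi_{t+1}'$. To invoke Lemma~\ref{lem:pabi-contract}, one must interpret $s_{t+1} = \sup_x \|\phi_{t+1}(x) - \phi_{t+1}'(x)\|$ under a suitable coupling of $(\phi_{t+1}, \phi_{t+1}')$ on which this bound holds almost surely; in the applications to $\ISGD$ such a coupling will be evident (e.g., using the same mini-batch indices and the same randomness for both processes). The projection step, although it introduces a second contractive map into each update, causes no additional difficulty precisely because applying the same $1$-Lipschitz $\proj$ on both sides costs nothing in either shift or divergence via Lemma~\ref{lem:pabi-contract} with $s = 0$.
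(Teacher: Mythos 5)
Your proof is correct and follows essentially the same route the paper indicates: the paper proves Proposition~\ref{prop:pabi-original} by citing the original argument of Feldman et al., modified to use the randomized contraction-reduction lemma and to handle the projection via an extra application of that same lemma with $s=0$ — and the paper's explicit proof of the closely related Proposition~\ref{prop:pabi-new} carries out precisely your three-substep unrolling (projection, then shift-reduction for the noise, then contraction-reduction for $\phi_t,\phi_t'$), just written in the backward direction from $T$ down rather than as a forward induction. One tiny nitpick: you state that $z_{t+1}+a_{t+1}$ is non-negative ``since $a_{t+1}\geq 0$ can be chosen WLOG,'' but in fact $z_{t+1}+a_{t+1}=z_t+s_{t+1}\geq 0$ automatically; the real reason to take $a_{t+1}\geq 0$ is that Lemma~\ref{lem:pabi-shift} requires it — a harmless conflation that does not affect the argument.
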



\section{Upper bound on privacy}\label{sec:iid}

In this section, we prove the upper bound in Theorem~\ref{thm:iid}. The formal statement of this result is as follows; see \S\ref{ssec:intro:cont} for a discussion of the mild assumptions on $\sigma$ and $\alpha$. 

\begin{theorem}[Privacy upper bound for \ISGD]\label{thm:iid-ub}
	Let $\cK \subset \R^d$ be a convex set of diameter $D$, and consider optimizing convex losses over $\cK$ that are $L$-Lipschitz and $M$-smooth. For any number of iterations $T$, dataset size $n \in \N$, batch size $b \leq n$, stepsize $\eta \leq 2/M$, noise parameter $\sigma > 8\sqrt{2} L/b$, and initialization $\omega_0 \in \cK$, $\ISGD$ satisfies $(\alpha,\eps)$-RDP for $1 < \alpha \leq \alpha^*(\tfrac{b}{n}, \tfrac{b\sig}{2\sqrt{2}L} )$ and
	\begin{align*}
		\eps \lesssim 
		\frac{\alpha L^2}{n^2\sig^2} \min\left\{T, \Tthresh\right\}\,,
	\end{align*}
	where $\Tthresh := \lceil \frac{Dn}{L\eta} \rceil$.
\end{theorem}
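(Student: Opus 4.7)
The bound naturally splits by which branch of the $\min$ is active. For $T \leq \Tthresh$ we recover the standard privacy bound via Privacy Amplification by Sampling and strong composition: conditional on $\omega_t$, the Noisy-SGD update is a Sampled Gaussian Mechanism with subsampling rate $b/n$ and noise-to-sensitivity ratio $b\sig/(2L)$ (the stochastic gradient has $\ell_2$-sensitivity $2L/b$ across adjacent datasets), so Lemmas~\ref{lem:sgm-extrema} and~\ref{lem:sgm} give per-step RDP at most $S_\alpha(b/n, b\sig/(2L)) \lesssim \alpha L^2/(n^2\sig^2)$, and strong composition via Lemma~\ref{lem:comp-rdp-prob} yields $\eps \lesssim \alpha L^2 T/(n^2\sig^2)$.

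For the interesting case $T > \Tthresh$, we exploit the bounded diameter of $\cK$. Let $T_0 := T - \Tthresh$, and let $K_{\cX}^{\Tthresh}(x,\cdot)$ denote the $\Tthresh$-step Markov kernel of Noisy-SGD on dataset $\cX$ started at $x$. Both $\omega_{T_0}$ and $\omega_{T_0}'$ are supported in $\cK$, so they admit a coupling with $\|\omega_{T_0}-\omega_{T_0}'\|\leq D$ almost surely; expressing the two final outputs as mixtures over this coupling and invoking joint quasi-convexity (Lemma~\ref{lem:qconvex-rdp}) gives
\[
\Dalplr{\omega_T}{\omega_T'} \;\leq\; \sup_{\substack{x,x'\in\cK,\\ \|x-x'\|\leq D}} \Dalplr{K_{\cX}^{\Tthresh}(x,\cdot)}{K_{\cX'}^{\Tthresh}(x',\cdot)}.
\]
Crucially the right-hand side is independent of $T$, which is what yields a bound that does not grow ad infinitum.

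It remains to bound this scenario~(ii) quantity, namely $\Tthresh$ steps of Noisy-SGD on adjacent datasets from two starting points at distance $\leq D$. We do so by running Privacy Amplification by Iteration (to absorb the initial shift of size $D$) and Privacy Amplification by Sampling (to handle the per-step dataset discrepancy) in parallel within the shifted R\'enyi framework (Definition~\ref{def:shifted-rdp}). Conceptually we split the Gaussian noise at each step into independent pieces of variance $\sig_1^2 + \sig_2^2 = \sig^2$. The $Z_t^{(1)}$ piece, together with the stochastic gradient step, forms a Sampled Gaussian Mechanism contributing per-step RDP $\lesssim \alpha L^2/(n^2\sig_1^2)$ by Lemmas~\ref{lem:sgm-extrema} and~\ref{lem:sgm}; the $Z_t^{(2)}$ piece enables shift-reduction (Lemma~\ref{lem:pabi-shift}) to uniformly chip away at the initial shift by $a_t = D/\Tthresh$ per step, at total cost $\lesssim \alpha D^2/(\Tthresh\,\eta^2\sig_2^2)$; and the projection $\proj$ is a contraction, handled at each step by Lemma~\ref{lem:pabi-contract}. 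Summing gives a scenario~(ii) bound of
\[
\Tthresh\cdot\frac{\alpha L^2}{n^2\sig_1^2} \;+\; \frac{\alpha D^2}{\Tthresh\,\eta^2\sig_2^2},
\]
which upon setting $\sig_1^2\asymp\sig_2^2\asymp\sig^2$ and $\Tthresh = \lceil Dn/(L\eta)\rceil$ collapses to $\eps \lesssim \alpha L^2 \Tthresh/(n^2\sig^2)$, as required.

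The main obstacle is executing the last step coherently: pure PABI cannot leverage the $b/n$ probability that the differing datapoint lies in the batch, so it must pay the worst-case per-step contraction shift $2\eta L/b$, which inflates the bound by a factor $\Theta(n^2/b^2)$ for small $b$; pure PABS has no shift-reduction mechanism and therefore cannot absorb a $D$-scale discrepancy in the starting points at all. The technical crux is thus to carry out both reductions---per-step SGM for dataset discrepancy and per-step shift-reduction for initialization discrepancy---simultaneously against the same noise budget, with care that the batch-sampling randomness enters only through the SGM term and that the projection at each step is absorbed into the contraction-reduction step rather than breaking the shifted-R\'enyi recursion.
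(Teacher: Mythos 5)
Your high-level strategy matches the paper's: exploit boundedness to make the analysis $T$-independent, split the per-step Gaussian noise into two independent pieces, use one piece for Privacy Amplification by Sampling (to handle the per-step dataset discrepancy) and the other for Privacy Amplification by Iteration (to absorb the initial $D$-scale shift), and balance the two contributions to obtain $\Tthresh \asymp Dn/(L\eta)$. Your reduction from scenario~(i) to scenario~(ii) via quasi-convexity (conditioning on $\omega_{T_0},\omega_{T_0}'\in\cK$ and taking a sup over starting pairs at distance $\leq D$) is a valid alternative framing; the paper instead stays inside the shifted-R\'enyi induction (Proposition~\ref{prop:pabi-new}) and exploits boundedness by stopping the unrolling at iteration $\tau$ where the shift $z_\tau = D$ makes the shifted divergence vanish.

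However, there is a genuine gap at exactly the point you flag as the ``technical crux'': you do not supply the mechanism that lets PABI and PABS coexist in the same recursion. A per-step SGM bound (Lemmas~\ref{lem:sgm-extrema},~\ref{lem:sgm}) controls the R\'enyi divergence between a Gaussian and a subsampled-Gaussian mixture, but says nothing about \emph{shifted} R\'enyi divergences between $W_\infty$-close laws; and the shift-reduction lemma (Lemma~\ref{lem:pabi-shift}) works only when the two processes use the \emph{same} update map, which fails whenever the differing datapoint $i^*$ is in the batch. If you simply hand both discrepancies to the contraction-reduction step, you are back to pure PABI paying $2\eta L/b$ worst-case per step, which is the $n^2/b^2$ loss you correctly rule out. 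The paper's resolution is a specific coupling in Step~1: the $\cX'$-process is rewritten to use gradients of the $\cX$-losses (hence the \emph{same} contraction $\phi_t$ as the $\cX$-process), with the discrepancy moved into a bias term $Z_t'$ that is nonzero only with probability $b/n$. One then bounds $\Dalp{\Prob_{W_T}}{\Prob_{W_T'}}\le\Dalp{\Prob_{W_T,Z_{\tau:T-1}}}{\Prob_{W_T',Z'_{\tau:T-1}}}$ by post-processing, and applies strong composition (Lemma~\ref{lem:comp-rdp-prob}) to split into a marginal $Z$-term (which is an honest SGM, bounded by Lemma~\ref{lem:sgm-extrema}) and a conditional $W_T\,|\,Z$ term (which, conditional on $Z=Z'$, is a CNI with \emph{identical} contractions, bounded by Proposition~\ref{prop:pabi-new}). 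This coupling--then--condition--then--compose decomposition is what makes the batch-sampling randomness enter only through the SGM term; without it, ``running the two reductions simultaneously'' is an aspiration rather than an argument.
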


Below, in \S\ref{ssec:iid-pabi} we first isolate a simple ingredient in our analysis as it may be of independent interest. Then in \S\ref{ssec:iid-proof} we prove Theorem~\ref{thm:iid-ub}.

\subsection{Privacy Amplification by Iteration bounds that are not vacuous as $T \to \infty$}\label{ssec:iid-pabi}

Recall from the preliminaries section \S\ref{ssec:prelim:pabi} that Privacy Amplification by Iteration arguments, while tight for a small number of iterations $T$, provide vacuous bounds as $T \to \infty$ (c.f., Proposition~\ref{prop:pabi-original}). The following proposition overcomes this by establishing privacy bounds which are \emph{independent} of the number of iterations $T$. 
This result only requires additionally assuming that $\|X_{\tau} - X_{\tau}'\|$ is bounded at some intermediate time $\tau$. This is a mild assumption that is satisfied automatically if, e.g., both CNI processes are in a constraint set of bounded diameter. 
	
\begin{prop}[New PABI bound that is not vacuous as $T \to \infty$]\label{prop:pabi-new}
	Let $X_T$, $X_T'$, and $s_t$ be as in Proposition~\ref{prop:pabi-original}. Consider any $\tau \in \{0, \dots, T-1\}$ and sequence $a_{\tau + 1}, \dots, a_{T}$ such that $z_t := D + \sum_{i=\tau+1}^t (s_i - a_i)$ is non-negative for all $t$ and satisfies $z_T = 0$. If $\cK$ has diameter $D$, then:
	\[
	\Dalplr{\Prob_{X_T}}{\Prob_{X_T'}}
	\leq 
	\frac{\alpha}{2}\sum_{t=\tau + 1}^T \frac{a_t^2}{\sigma_t^2}.
	\]
	\end{prop}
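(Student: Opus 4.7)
The plan is to adapt the inductive proof of the original PABI bound (Proposition~\ref{prop:pabi-original}), but to ``restart'' the induction at the intermediate time $\tau$ using the diameter bound as the initial condition, rather than starting at $t=0$ from identical initializations.

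First I would verify the base case at $t = \tau$. Since both $X_\tau$ and $X_\tau'$ lie in $\cK$ almost surely, and $\cK$ has diameter $D$, \emph{any} coupling of $(X_\tau, X_\tau')$ satisfies $\|X_\tau - X_\tau'\| \leq D$ almost surely. Hence $W_\infty(\Prob_{X_\tau}, \Prob_{X_\tau'}) \leq D$, and taking $\mu' = \Prob_{X_\tau'}$ as the choice in the infimum defining shifted R\'enyi divergence (Definition~\ref{def:shifted-rdp}) yields $\Dalpshiftlr{D}{\Prob_{X_\tau}}{\Prob_{X_\tau'}} = 0$. This is the only place the diameter assumption enters the argument, and it replaces the trivial base case $\Dalpshiftlr{0}{\Prob_{X_0}}{\Prob_{X_0}} = 0$ used in the original proof.

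Next I would run an induction for $t = \tau, \tau + 1, \ldots, T-1$ establishing
\[
\Dalpshiftlr{z_{t+1}}{\Prob_{X_{t+1}}}{\Prob_{X_{t+1}'}} \;\leq\; \Dalpshiftlr{z_t}{\Prob_{X_t}}{\Prob_{X_t'}} + \frac{\alpha\, a_{t+1}^2}{2\sigma_{t+1}^2}.
\]
Each inductive step combines three ingredients, applied in order. First, since $\proj$ is a deterministic contraction, the randomized contraction-reduction lemma (Lemma~\ref{lem:pabi-contract}) with $\phi = \phi' = \proj$ and $s = 0$ lets me pass from $\Prob_{X_{t+1}} = \proj{}_{\#}\Prob_{\phi_{t+1}(X_t) + Z_{t+1}}$ to $\Prob_{\phi_{t+1}(X_t) + Z_{t+1}}$ at the same shift. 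Second, the shift-reduction lemma (Lemma~\ref{lem:pabi-shift}) absorbs the Gaussian noise $Z_{t+1}$ at the cost of enlarging the shift by $a_{t+1}$ and adding $\alpha a_{t+1}^2/(2\sigma_{t+1}^2)$ to the divergence. Third, the randomized contraction-reduction lemma (Lemma~\ref{lem:pabi-contract}) peels off the contractions $\phi_{t+1}$ and $\phi_{t+1}'$ at the cost of shrinking the shift by $s_{t+1}$. The recursion $z_{t+1} + a_{t+1} - s_{t+1} = z_t$ built into the definition of $z_t$ ensures all intermediate shifts match up, and the hypothesis $z_t \geq 0$ for all $t$ ensures these shifts are legal inputs to the shift-reduction and contraction-reduction lemmas.

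Finally I would unroll the telescoping bound from $t = \tau$ up to $t = T$. Combining the base case $\Dalpshiftlr{z_\tau}{\Prob_{X_\tau}}{\Prob_{X_\tau'}} = 0$ with $z_T = 0$ (so that the shifted divergence at time $T$ coincides with the usual R\'enyi divergence), this gives
\[
\Dalplr{\Prob_{X_T}}{\Prob_{X_T'}} \;\leq\; \frac{\alpha}{2} \sum_{t=\tau+1}^{T} \frac{a_t^2}{\sigma_t^2},
\]
as desired. There is no real obstacle beyond the bookkeeping: the novelty is entirely in the base case, where the diameter bound lets us ``forget'' the prefix $[0, \tau]$ and seed the induction with shift $D$ at time $\tau$, which is what makes the resulting bound depend only on the final $T-\tau$ iterations and hence stay bounded as $T \to \infty$.
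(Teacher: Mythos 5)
Your proof is correct and takes essentially the same approach as the paper: the paper unrolls the recursion backwards from $T$ down to $\tau$ while you present it as a forward induction from $\tau$ up to $T$, but the three ingredients per step (contraction-reduction for $\proj$, shift-reduction for the Gaussian noise, contraction-reduction for $\phi_{t+1},\phi_{t+1}'$) and the crucial base case $\Dalpshiftlr{D}{\Prob_{X_\tau}}{\Prob_{X_\tau'}} = 0$ from the diameter bound are identical.
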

	
	In words, the main idea behind Proposition~\ref{prop:pabi-new} is simply to change the original Privacy Amplification by Iteration argument---which bounds the shifted divergence at iteration $T$, by the shifted divergence at iteration $T-1$, and so on all the way to the shifted divergence at iteration $0$---by instead stopping the induction earlier. Specifically, only unroll to iteration $\tau$, and then use boundedness of the iterates to control the shifted divergence at that intermediate time $\tau$.  
	
 	\par We remark that this new version of PABI uses the shift in the shifted R\'enyi divergence for a different purpose than previous work: rather than just using the shift to bound the bias incurred from updating on two different losses, here we also use the shift to exploit the boundedness of the constraint set.
	
	\begin{proof}[Proof of Proposition~\ref{prop:pabi-new}]
		Bound the divergence at iteration $T$ by the shifted divergence at iteration $T-1$ as follows:
		\begin{align*}
			\Dalplr{ \Prob_{X_T} }{ \Prob_{X_T'} }
			&=
			\Dalpshiftlr{z_T}{ \Prob_{X_T} }{ \Prob_{X_T'} }
			\\ &= 
			\Dalpshiftlr{z_{T-1} + s_T - a_T}{ \Prob_{\proj[\phi_T(X_{T-1}) + Z_T]} }{ \Prob_{\proj[\phi_{T-1}'(X_{T-1}') + Z_T'}] }
			\\ &\leq 	\Dalpshiftlr{z_{T-1} + s_T - a_T}{ \Prob_{\phi_T(X_{T-1}) + Z_T} }{ \Prob_{\phi_{T-1}'(X_{T-1}') + Z_T'}}
			\\ &\leq 
			\Dalpshiftlr{z_{T-1} + s_T}{ \Prob_{\phi_T(X_{T-1})} }{ \Prob_{\phi_{T-1}'(X_{T-1}')} } + \frac{\alpha a_T^2}{2\sigma_T^2} 
			\\ &\leq 
			\Dalpshiftlr{z_{T-1}}{ \Prob_{X_{T-1}} }{ \Prob_{X_{T-1}}' } + \frac{\alpha a_T^2}{2\sigma_T^2}.
		\end{align*}
		Above, the first step is because $z_T = 0$; the second step is by the iterative construction of $X_T, X_T',z_T$; the third and final steps are by the the contraction-reduction lemma (Lemma~\ref{lem:pabi-contract}), and the penulimate step is by the shift-reduction lemma (Lemma~\ref{lem:pabi-shift}).
		\par By repeating the above argument, from $T$ to $T-1$ all the way to $\tau$, we obtain:
		\[
		\Dalplr{ \Prob_{X_T} }{ \Prob_{X_T'} }
		\leq
		\Dalpshiftlr{z_{\tau}}{ \Prob_{X_{\tau}} }{ \Prob_{X_{\tau}}' } + \frac{\alpha }{2} \sum_{t = \tau + 1}^T \frac{a_t^2}{\sigma_t^2}.
		\]
		Now observe that the shifted R\'enyi divergence on the right hand side vanishes because $z_{\tau} = D$.
\end{proof}

\subsection{Proof of Theorem~\ref{thm:iid-ub}}\label{ssec:iid-proof}

\subsubsection*{Step 1: Coupling the iterates}\label{sssec:iid:1-couple}
Suppose $\cX = \{x_1,\dots,x_n\}$ and $\cX' = \{x_1',\dots,x_n'\}$ are adjacent datasets; that is, they agree $x_i = x_i'$ on all indices $i \in [n] \setminus \{i^*\}$ except for at most one index $i^* \in [n]$. Denote the corresponding loss functions by $f_{i}$ and $f_{i}'$, where $f_i = f_{i}'$ except possibly $f_{i^*} \neq f_{i^*}'$. Consider running $\ISGD$ on either dataset $\cX$ or $\cX'$ for $T$ iterations---call the resulting iterates $\{W_t\}_{t=0}^T$ and $\{W_t'\}_{t=0}^T$, respectively---where we start from the same point $w_0 \in \cK$ and couple the sequence of random batches $\{B_t\}_{t=0}^{T-1}$ and the random noise injected in each iteration. That is, 
\begin{align*}
	W_{t+1} &= \proj\left[ W_t - \frac{\eta}{b} \sum_{i \in B_t} \gradf_i(W_t) + Y_t + Z_t \right] \\
	W_{t+1}' &= \proj\left[ W_t' - \frac{\eta}{b} \sum_{i \in B_t} \gradf_i(W_t') + Y_t + Z_t' \right] 
\end{align*}
for all $t \in \{0,\dots,T-1\}$, where we have split the Gaussian noise into terms $Y_t \sim \cN(0,\eta^2\sig_1^2 I_d)$, $Z_t \sim \cN(0,\eta^2 \sig_2^2 I_d)$, $Z_t' \sim \cN(0,\eta^2\sig_2^2 I_d) + \frac{\eta}{b}  \left[ \gradf_{i^*}'(W_t') -  \gradf_{i^*}(W_t') \right] \cdot \mathds{1}_{i^* \in B_t}$, for any\footnote{Simply choosing $\sig_1 = \sig_2 = \sig/\sqrt{2}$ yields our minimax-optimal asymptotics; and we do this at the end of the proof. However, optimizing the constants in this ``noise splitting'' is helpful in practice (details in~\cref{app:numerics}), and as such we leave $\sig_1$ and $\sig_2$ as variables until the end of the proof.} numbers $\sig_1,\sig_2 > 0$ satisfying $\sig_1^2 + \sig_2^2 = \sig^2$. In words, this noise-splitting enables us to use the noise for both the Privacy Amplification by Sampling and Privacy Amplification by Iteration arguments below.
\par Importantly, notice that in the definition of $W_{t+1}'$, the gradient is taken w.r.t. loss functions corresponding to data set $\cX$ rather than $\cX'$; this is then corrected via the bias in the noise term $Z_t'$. Notice also that this bias term in $Z_t'$ is only realized (i.e., $Z_t'$ is possibly non-centered) with probability $1 - b/n$ because the probability that $i^*$ is in a random size-$b$ subset of $[n]$ is
\begin{align}
		\Prob\left[ i^* \in B_t \right] = 
	\frac{b}{n}.
	\label{eq:iid-prob}
\end{align}

\subsubsection*{Step 2: Interpretation as conditional CNI sequences}\label{sssec:iid:2-cni}

Observe that conditional on the event that $Z_t = Z_t'$ are equal (call their value $z_t$), then
\begin{align*}
	W_{t+1} &= \proj\left[ \phi_t(W_t) + Y_t \right] \\
	W_{t+1}' &= \proj\left[ \phi_t(W_t') + Y_t \right]
\end{align*}
where
\begin{align}
	\phi_t(\omega) := \omega - \frac{\eta}{b} \sum_{i \in B_t} \nabla f_i(\omega) + z_t\,.
	\label{eq-pf:isg:phi-t}
\end{align}
Since the following lemma establishes that $\phi_t$ is contractive, we conclude that conditional on the event that $Z_t = Z_t'$ for all $t \geq \tau$, then $\{W_t\}_{t \geq \tau}$ and $\{W_t'\}_{t \geq \tau}$ are projected CNI (c.f., Definition~\ref{def:cni}) with respect to the same update functions. Here, $\tau \in \{0,\dots,T-1\}$ is a parameter that will be chosen shortly. Intuitively, $\tau$ is the horizon for which we bound all previous privacy leakage only through the fact that $W_{\tau}, W_{\tau}'$ are within distance $D$ from each other, see the proof overview in \S\ref{ssec:intro:tech}. 

\begin{obs}\label{obs:iid-contractive}
	The function $\phi_t$ defined in~\eqref{eq-pf:isg:phi-t} is contractive. 
\end{obs}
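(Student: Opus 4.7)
The plan is to reduce the observation to the single-function contraction statement already recorded as Lemma~\ref{lem:gd-contraction}. First, note that the additive term $z_t$ is a fixed vector (we are conditioning on $Z_t = Z_t' = z_t$), so translations by $z_t$ preserve the Lipschitz constant of the map. It therefore suffices to show that $\omega \mapsto \omega - \tfrac{\eta}{b}\sum_{i \in B_t} \nabla f_i(\omega)$ is a contraction.

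Next, I would rewrite this map as a single gradient step. Define
\[
\bar{f}_{B_t}(\omega) := \frac{1}{b}\sum_{i \in B_t} f_i(\omega),
\]
so that $\nabla \bar{f}_{B_t}(\omega) = \tfrac{1}{b}\sum_{i\in B_t} \nabla f_i(\omega)$ and the map in question is exactly $\omega \mapsto \omega - \eta \nabla \bar{f}_{B_t}(\omega)$. Convexity and $M$-smoothness are both closed under averaging: the sum of convex functions is convex, and $\|\nabla \bar{f}_{B_t}(\omega) - \nabla \bar{f}_{B_t}(\omega')\| \leq \tfrac{1}{b}\sum_{i\in B_t} \|\nabla f_i(\omega) - \nabla f_i(\omega')\| \leq M\|\omega - \omega'\|$ by the triangle inequality and the $M$-smoothness of each $f_i$. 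Thus $\bar{f}_{B_t}$ is convex and $M$-smooth.

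Now the hypothesis $\eta \leq 2/M$ is exactly the condition required to invoke Lemma~\ref{lem:gd-contraction} on $\bar{f}_{B_t}$, which gives that $\omega \mapsto \omega - \eta \nabla \bar{f}_{B_t}(\omega)$ is a contraction. Composing with the translation by $z_t$ yields that $\phi_t$ is a contraction, as claimed. There is no real obstacle here: the only thing worth being careful about is that $B_t$ and $z_t$ are frozen by the conditioning in Step~2, so the randomness of $\phi_t$ does not interfere with applying the deterministic Lemma~\ref{lem:gd-contraction} pointwise in the realization.
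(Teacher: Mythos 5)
Your proof is correct. It takes a mildly different route from the paper's: the paper expands $\phi_t(\omega)-\phi_t(\omega')$ as a $\tfrac{1}{b}$-weighted sum of the $b$ individual per-example gradient-step differences, applies the triangle inequality, and then invokes Lemma~\ref{lem:gd-contraction} once per summand; you instead average the losses into $\bar f_{B_t}$, note that convexity and $M$-smoothness are preserved under averaging, and apply Lemma~\ref{lem:gd-contraction} once to $\bar f_{B_t}$. The two arguments are essentially the same length and rest on the same key lemma; yours is arguably the more natural framing (a mini-batch step is literally a gradient step on the averaged loss), while the paper's avoids introducing the auxiliary function. Your explicit remark that the translation by $z_t$ is harmless is a small clarification the paper leaves implicit, since $z_t$ simply cancels in the difference $\phi_t(\omega)-\phi_t(\omega')$.
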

\begin{proof}
		For any $\omega,\omega'$, 
	\begin{align*}
		\norm { \phi_t(\omega) - \phi_t(\omega') }
		&=
		\norm{ \big( \omega - \frac{\eta}{b}\sum_{i \in B_t} \nabla f_i(\omega) \big)
			- 
			\big( \omega' - \frac{\eta}{b}\sum_{i \in B_t} \nabla f_i(\omega') \big) }\
		\\ &\leq
		\frac{1}{b} \sum_{i \in B_t} \norm{ \big( \omega - \eta \nabla f_i(\omega) \big)
			- 
			\big( \omega' - \eta \nabla f_i(\omega') \big)
		}
		\\ &\leq \frac{1}{b} \sum_{i \in B_t} \norm{ \omega-\omega'}
		\\ &= \norm{ \omega -\omega' }\,.
	\end{align*}
	by plugging in the definition of $\phi_t$, using the triangle inequality, and then using the fact that the stochastic gradient update $\omega \mapsto \omega - \eta \nabla f_i(\omega)$ is a contraction (Lemma~\ref{lem:gd-contraction}).
\end{proof}

\subsubsection*{Step 3: Bounding the privacy loss}\label{sssec:iid:3-bound}
Recall that we seek to upper bound $\Dalp{\Prob_{W_T}}{\Prob_{W_T'}}$. We argue that:
\begin{align}
	\cD_{\alpha}\left(\Prob_{W_T}\; \big\|\; \Prob_{W_T'}\right)
	&\leq 
	\cD_{\alpha}\left( \Prob_{W_T, Z_{\tau:T-1}} \; \big\| \; \Prob_{W_T', Z_{\tau:T-1}'} \right) 
	\nonumber
	\\ &\leq 
	\underbrace{\cD_{\alpha}\left(  \Prob_{Z_{\tau:T-1}} \; \big\| \;  \Prob_{Z_{\tau:T-1}'} \right)}_{\circled{1}}
	+ 
	\underbrace{\sup_z
	\cD_{\alpha}\left( \Prob_{W_{T} | Z_{\tau:T-1} = z} \; \big\| \; \Prob_{ W_T' |  Z_{\tau:T-1}' = z} \right)}_{\circled{2}} 	\label{eq-pf:isgd:rdp}
\end{align}
Above, the first step is by the post-processing inequality for the R\'enyi divergence (Lemma~\ref{lem:process-rdp}), and the second step is by the strong composition rule for the R\'enyi divergence (Lemma~\ref{lem:comp-rdp-prob}). 
\\ \\ \underline{Step 3a: Bounding \circled{1}, using Privacy Amplification by Sampling.}
We argue that
\begin{align}
\circled{1}
&=
\cD_{\alpha}\left( \Prob_{Z_{\tau:T-1}} \; \big\| \; \Prob_{Z_{\tau:T-1}'} \right) \nonumber
\\ &\leq \sum_{t=\tau}^{T-1} \sup_{z_{\tau:t-1}} \cD_{\alpha} \left( \Prob_{Z_t | Z_{\tau:t-1} = z_{\tau:t-1}} \;\big\|\; \Prob_{Z_t' | Z_{\tau:t-1}' = z_{\tau:t-1}} \right) \nonumber
\\ &=
\sum_{t=\tau}^{T-1} \cD_{\alpha} \left( \cN(0,\eta^2 \sig_2^2 I_d) \; \big\| \; (1 - \tfrac{b}{n}) \cN(0,\eta^2 \sig_2^2 I_d) + \tfrac{b}{n} \cN(m_t,\eta^2\sig_2^2 I_d) \right) \nonumber
\\ &\leq (T-\tau) S_{\alpha} \left( \frac{b}{n}, \frac{b\sig_2}{2L} \right).
\label{eq-pf:isgd:A}
\end{align}
Above, the first step is the definition of \circled{1}. The second step is by the strong composition rule for the R\'enyi divergence (Lemma~\ref{lem:comp-rdp-prob}). The third step is because for any $z_{\tau:t-1}$, the law of $Z_t$ conditional on $Z_{\tau:t-1} = z_{\tau:t-1}$ is the Gaussian distribution $\cN(0,\eta^2 \sig_2^2 I_d)$; and by~\eqref{eq:iid-prob}, the law of $Z_t'$ conditional on $Z_{\tau:t-1} = z_{\tau:t-1}$ is the mixture distribution that is $\cN(0,\eta^2 \sig_2^2 I_d)$ with probability $1 - b/n$, and otherwise is $\cN(m_t,\eta^2\sig_2^2 I_d)$ where $m_t :=  \tfrac{\eta}{b}[\gradf_{i^*}(W_t') -\gradf_{i^*}'(W_t')]$. The final step is by the bound in Lemma~\ref{lem:sgm-extrema} on the R\'enyi divergence of the Sampled Gaussian Mechanism, combined with the observation that $\|m_t\| \leq 2 \eta L/b$, which is immediate from the triangle inequality and the $L$-Lipschitz smoothness of the loss functions.
\\ \\ \underline{Step 3b: Bounding \circled{2}, using Privacy Amplification by Iteration.} As argued in step 2, conditional on the event that $Z_t = Z_t'$ for all $t \geq \tau$, then $\{W_t\}_{t \geq \tau}$ and $\{W_t'\}_{t \geq \tau}$ are projected CNI with respect to the same update functions. Note also that $\|W_{\tau} - W_{\tau'}\| \leq D$ since the iterates lie in the constraint set $\cK$ which has diameter $D$. Therefore we may apply the new Privacy Amplification by Iteration bound (Proposition~\ref{prop:pabi-new}) with $s_t \equiv 0$ and $a_t \equiv D/(T-\tau)$ to obtain:
\begin{align}
	\circled{2}
	=
	\sup_{z}  
	\cD_{\alpha}\left( \Prob_{ W_{T} |  Z_{\tau:T-1} = z } \; \big\| \; \Prob_{ W_{T}' |  Z_{\tau:T-1}' = z } \right) 
	\leq
	\frac{\alpha D^2}{2\eta^2 \sig_1^2 (T-\tau)}.
	\label{eq-pf:isgd:B}
\end{align}

\subsubsection*{Step 4: Putting the bounds together}\label{sssec:iid:4-conclude}
 By plugging into~\eqref{eq-pf:isgd:rdp} the bound~\eqref{eq-pf:isgd:A} on \circled{1} and the bound~\eqref{eq-pf:isgd:B} on \circled{2}, we conclude that the algorithm is $(\alpha,\eps)$-RDP for 
\begin{align}
	\eps \leq 
	\min_{\tau \in\{0, \dots, T-1\}}\left\{
	(T-\tau) \, S_{\alpha} \left( \frac{b}{n}, \frac{b\sig_2}{2L}\right) + \frac{\alpha D^2}{2\eta^2 \sig_1^2 (T-\tau)}
	\right\}
	\label{eq-pf:isgd:nonasymp}
\end{align}
By Lemma~\ref{lem:sgm}, $S_{\alpha}( \tfrac{b}{n}, \tfrac{b\sig_2}{2L} ) \leq 8 \alpha (\tfrac{L}{n\sig_2})^2$ for $\alpha \leq \alpha^*(\tfrac{b}{n}, \tfrac{b\sig_2}{2L} )$ and $\sigma_2 \geq 8L/b$.\footnote{While Lemma~\ref{lem:sgm} requires $b < n/5$, the case $b \geq n/5$ has an alternate proof that does not require Lemma~\ref{lem:sgm} (or any of its assumptions). Specifically, replace~\eqref{eq-pf:isgd:A} with the upper bound 
	$
	\cD_{\alpha} ( \cN(0,\eta^2 \sig_2^2 I_d)  \;\| \; \cN(m_t,\eta^2\sig_2^2 I_d) ) = 
	\alpha \|m_t\|^2 / (2 \eta^2 \sig_2^2) = 2\alpha L^2 / b^2 \sig_2^2
	$
	 by using the well-known formula for the R\'enyi divergence between Gaussians. This is tight up to a constant factor, and the rest of the proof proceeds identically.
}
\par By setting $\sig_1 = \sig_2 = \sig/\sqrt{2}$, we have that up to a constant factor, 
\begin{align*}
	\eps
	\lesssim
	\frac{\alpha L^2}{\sig^2}
		\min_{\tau \in \{0,\dots,T-1\}} \left\{
			\frac{(T-\tau)}{n^2} + \frac{D^2}{\eta^2L^2(T-\tau)}
	\right\}.
\end{align*}
\par Bound this minimization by
\[
	\min_{\tau \in \{0,\dots,T-1\}} \left\{
	\frac{(T-\tau)}{n^2} + \frac{D^2}{\eta^2L^2(T-\tau)}
	\right\}
	=
	\min_{R \in \{1,\dots,T\}} \left\{
	\frac{R}{n^2} + \frac{D^2}{\eta^2L^2R}
	\right\}
	\lesssim
	\frac{D}{\eta L n},
\]
where above the first step is by setting $R = T-\tau$, and the second step is by setting $R = \Tthresh = \lceil \tfrac{Dn}{L\eta} \rceil$ (this can be done if
$T \gtrsim \Tthresh$).
Therefore, by combining the above two displays, we obtain
\[
	\eps
	\lesssim 
	\frac{\alpha L^2}{n^2\sig^2}
	\min\left\{
		T, \Tthresh
	\right\}.
\]
Here the first term in the minimization comes from the simple bound~\eqref{eq:simple-bound} which scales linearly in $T$. This completes the proof of Theorem~\ref{thm:iid-ub}.

\section{Lower bound on privacy}\label{sec:lb}

In this section, we prove the lower bound in Theorem~\ref{thm:iid}. This is stated formally below and holds even for linear loss functions in one dimension, in fact even when all but one of the loss functions are zero. See \S\ref{ssec:intro:cont} for a discussion of the mild assumptions on the diameter.

\begin{theorem}[Privacy lower bound for $\ISGD$]\label{thm:iid-lb}
	There exist universal constants\footnote{We prove this for $c_{\sigma} = 10^{-3}$, $c_D = 10^3$, $c_{\alpha} = 10^{-7}$, $\bar{\alpha} = 10^2$; no attempt has been made to optimize these constants.\label{fn:lb-constants}} $c_{D}$, $c_{\sigma}$, $c_{\alpha}$, $\bar{\alpha}$ and a family of $L$-Lipschitz linear loss functions over the interval $\cK = [-D/2,D/2] \subset \R$ such that the following holds. Consider running $\ISGD$ from arbitrary initialization $\omega_0$ with any parameters satisfying $D \geq c_D \eta L$ and $\sig^2 \leq c_{\sigma} D^2/(\eta^2 \Tthresh)$. Then $\ISGD$ is not $(\bar{\alpha},\eps)$-RDP for
	\begin{align}
		\eps \leq c_{\alpha } \frac{\bar{\alpha} L^2}{n^2\sig^2} \min\left\{T, \Tthresh\right\},
		\label{eq:thm-iid-lb}
	\end{align}
	where $\Tthresh := 0.75 \tfrac{Dn}{L\eta}$.
\end{theorem}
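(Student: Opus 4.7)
The plan is to construct an explicit one-dimensional adjacent pair realizing the worst case and to reduce the privacy lower bound to a calculation on simple constrained random walks. On $\cK = [-D/2, D/2]$, let the loss family include both $\omega \mapsto L\omega$ and $\omega \mapsto -L\omega$; by reflection symmetry assume WLOG that $\omega_0 \geq 0$. Take $\cX$ to consist of $n$ zero losses $f_i \equiv 0$ and $\cX'$ to replace one of them with $f_1'(\omega) = L\omega$ (which is $L$-Lipschitz and $0$-smooth, hence $M$-smooth). Coupling the two $\ISGD$ processes through common noise $Z_t \sim \cN(0,\sig^2)$ and common random batches $B_t$ yields
\[
W_{t+1} = \proj\!\bigl[W_t + \eta Z_t\bigr], \qquad W_{t+1}' = \proj\!\bigl[W_t' - \tfrac{\eta L}{b}\,\1_{1\in B_t} + \eta Z_t\bigr],
\]
i.e., a projected symmetric random walk and a projected biased random walk whose drift $-\eta L/b$ fires with probability $b/n$ per step. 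The task reduces to lower bounding $\Dalp{W_T}{W_T'}$.

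For $T \leq \Tthresh$, the central obstacle is the projection, which makes the marginal laws nonexplicit. I would bypass it by conditioning on the event $E$ that no clipping occurs over the first $T_0 := \min\{T, c\Tthresh\}$ iterations for a suitably small constant $c > 0$ chosen so that the expected endpoint $\omega_0 - T_0\eta L/n$ remains strictly inside $\cK$. Under the hypotheses $\sig^2 \leq c_\sigma D^2/(\eta^2 \Tthresh)$ and $D \geq c_D \eta L$, the total Gaussian fluctuation $\sqrt{T_0}\,\eta\sig$ is at most $\sqrt{c_\sigma}\,D$ and single-step drift jumps are of magnitude $\eta L/b \leq D/c_D$; combining Doob's maximal inequality for the Gaussian martingale with a Chernoff bound for the $\mathrm{Binomial}(T_0, b/n)$ drift count yields $\Prob[E] \geq 1/2$ for appropriate $c_\sigma, c_D$. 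On $E$ the projections are inactive, so the laws are explicit: $W_{T_0}\mid E \sim \cN(\omega_0, T_0\eta^2\sig^2)$, and $W_{T_0}'\mid E$ is a Gaussian mixture with the same variance whose mean is $\omega_0 - (\eta L/b)N$ averaged over $N \sim \mathrm{Binomial}(T_0, b/n)$ (conditioning on $E$ perturbs the law of $N$ negligibly).

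To convert this conditional picture into an unconditional Rényi lower bound at $\alpha = \bar\alpha$, I would apply the data-processing inequality (Lemma~\ref{lem:process-rdp}) with the threshold test $h(w) := \1\{w < \omega_0 - T_0 \eta L/(2n)\}$, reducing to $\Dalp{\mathrm{Ber}(p)}{\mathrm{Ber}(p')}$ for $p := \Prob[h(W_{T_0}) = 1]$ and $p' := \Prob[h(W_{T_0}') = 1]$. The explicit conditional forms, together with $\Prob[E] \geq 1/2$ and the observation that trajectories in $E^c$ (where clipping drives $W'$ toward $-D/2$) only enlarge $p' - p$, give $p' - p = \Omega(\min\{1, \sqrt{T_0}\,L/(n\sig)\})$ with $p, p' \in [c_0, 1-c_0]$ for a universal $c_0 > 0$; since Rényi divergence between Bernoullis bounded away from $\{0,1\}$ scales as $\bar\alpha(p-p')^2$ for small gaps, this yields $\Dalp{W_{T_0}}{W_{T_0}'} = \Omega(\bar\alpha L^2 T_0/(n^2\sig^2))$, matching the target in the regime $T \leq \Tthresh$.

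The subtlest step, which I expect to be the main obstacle, is transferring the lower bound past $\Tthresh$ in the saturated regime $T > \Tthresh$, since naively running longer could only decrease divergence. To handle this I would work directly with the stationary distributions on $\cK$: essentially uniform on $\cK$ for the symmetric walk, and exponentially tilted with slope $\kappa = \Theta(L/(n\eta\sig^2))$ for the biased walk, so that a direct calculation gives $\Dalp{\pi}{\pi'} = \Theta(\kappa D) = \Theta(L^2\Tthresh/(n^2\sig^2))$. A mixing-time estimate for the constrained biased walk then shows that the time-$T$ marginals are close enough to their stationary laws (in Rényi) for all $T \geq \Tthresh$ to retain the saturated bound in \eqref{eq:thm-iid-lb}. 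Combining the two regimes gives the single two-sided bound claimed in the theorem.
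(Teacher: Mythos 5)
Your construction of the adjacent datasets (a symmetric versus a biased projected random walk) is essentially the same one the paper uses, up to a reflection, and your use of a threshold test plus the Rényi divergence between two Bernoullis is a legitimate way to certify RDP lower bounds. However, there are two substantive differences from the paper, one benign and one a real gap.

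The benign difference concerns the regime $T \leq \Tthresh$. You argue this case directly by conditioning on the no-clipping event and comparing the resulting Gaussian and Gaussian-mixture laws. The paper instead proves the $T \geq \Tthresh$ case first and deduces $T < \Tthresh$ via the strong composition rule (Lemma~\ref{lem:comp-rdp-alg}): if $\ISGD$ run for $T$ steps were too private, then stitching $\lceil \Tthresh/T \rceil$ copies end to end would give a run of length $\geq \Tthresh$ that is too private, contradicting the saturated case. Both routes are sound; the paper's is shorter because it avoids a separate Gaussian-mixture calculation.

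The real gap is in your treatment of $T > \Tthresh$. You propose to compare the stationary laws $\pi$ and $\pi'$ and then invoke a mixing-time estimate to transfer the stationary Rényi divergence to the time-$T$ marginals. This does not go through, for two concrete reasons. First, under the theorem's hypothesis $\sig^2 \leq c_\sigma D^2 / (\eta^2 \Tthresh)$, the relaxation time of the reflected walk scales like $(D/(\eta\sig))^2 \gtrsim \Tthresh / c_\sigma$, which is far larger than $\Tthresh$; so at $T \approx \Tthresh$ (where the bound must already hold) the marginals are nowhere near stationary, and the claim simply fails in that range. Second, even in the regime where the chain has mixed, closeness of $W_T$ to $\pi$ and of $W_T'$ to $\pi'$ in Rényi does not let you lower bound $\Dalp{\Prob_{W_T}}{\Prob_{W_T'}}$ by $\Dalp{\pi}{\pi'}$, because the Rényi divergence satisfies no triangle inequality (and the weak versions that do exist would degrade the bound by more than the required constant). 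The paper sidesteps both issues entirely by never reasoning about stationarity: it fixes the elementary test $S = \{w \geq 0\}$, uses symmetry to pin $\Prob[W_T \in S] = 1/2$ for every $T$, and lower bounds $\Prob[W_T' \in S]$ uniformly in $T \geq \Tthresh$ by stochastic domination against an auxiliary process $W_t''$ restarted at the worst boundary point $-D/2$ at time $T - \Tthresh$, whose no-projection event has explicit Gaussian law. That domination trick is the missing ingredient your outline would need to actually close the argument past $\Tthresh$.
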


\paragraph*{Proof sketch of Theorem~\ref{thm:iid-lb}.} (Full details in~\cref{app:lb}.)

\par \underline{Construction.} Consider datasets $\cX = \{x_1, \dots, x_{n-1}, x_n\}$ and $\cX' = \{x_1, \dots, x_{n-1}, x_n'\}$ which differ only on $x_n'$, and corresponding functions which are all zero $f_1(\cdot) = \dots = f_n(\cdot) = 0$, except for $f_{n}'(\omega) = L(D-\omega)$. Clearly these functions are linear and $L$-Lipschitz. The intuition behind this construction is that running $\ISGD$ on $\cX$ or $\cX'$ generates a random walk that is clamped to stay within the interval $\cK$---but with the key difference that running $\ISGD$ on dataset $\cX$ generates a \emph{symmetric} random walk $\{W_t\}$, whereas running $\ISGD$ on dataset $\cX'$ generates a \emph{biased} random walk $\{W_t'\}$ that biases right with probability $b/n$ each step. That is,
\begin{align*}
	W_{t+1} = \Pi_{\cK} \Big[ W_t + Z_t \Big]
\qquad \text{and} \qquad 
	W_{t+1}' = \Pi_{\cK} \Big[ W_t' + Y_t + Z_t \Big]
\end{align*}
where the processes are initialized at $\omega_0 = \omega_0' = 0$, each random increment $Z_t \sim \cN(0,\eta^2\sig^2)$ is an independent Gaussian, and each bias $Y_t$ is $\eta L/b$ with probability $b/n$ and otherwise is $0$. 

\par \underline{Key obstacle.} The high-level intuition behind this construction is simple to state: the bias of the random walk $\{W_t'\}$ makes it distinguishable (to the minimax-optimal extent, as we show) from the symmetric random walk $\{W_t\}$. However, making this intution precise is challenging because the distributions of the iterates $W_t, W_t'$ are intractable to reason about explicitly---due to the highly non-linear interactions between the projections and the random increments. Thus we must establish the distinguishability of $W_T, W_T'$ without reasoning explicitly about their distributions.

\par \underline{Key technical ideas.} A first, simple observation is that it suffices to prove Theorem~\ref{thm:iid-lb} in the constant RDP regime of $T \geq \Tthresh$, since the linear RDP regime of $T \leq \Tthresh$ then follows by the strong composition rule for RDP (Lemma~\ref{lem:comp-rdp-alg}), as described in Appendix~\ref{app:lb}. Thus it suffices to show that the final iterates $W_T, W_T'$ are distinguishable after $T \geq \Tthresh$ iterations.

\par A natural attempt to distinguish $W_T, W_T'$ for large $T$ is to test positivity. This is intuitively plausible because $\Prob[W_T \geq 0] = 1/2$ by symmetry, whereas we might expect $\Prob[W_T' \geq 0] \gg 1/2$ since the bias of $W_T'$ pushes it to the top half of the interval $\cK = [-D/2,D/2]$. 
Such a discrepancy would establish an ($\eps,\del$)-DP bound that, by the standard RDP-to-DP-conversion in Lemma~\ref{lem:rdp-to-dp}, would imply the desired ($\alpha,\eps$)-RDP bound in Theorem~\ref{thm:iid-lb}. However, the issue is how to prove the latter statement $\Prob[W_T' \geq 0] \gg 1/2$ without an explicit handle on the distribution of $W_T'$.

\par To this end, the key technical insight is to define an auxiliary process $W_t''$ which intializes $\Tthresh$ iterations before the final iteration $T$ at the lowest point in $\cK$, namely $W_{T - \Tthresh}'' = -D/2$, and then updates in an analogously biased way as the $W_t'$ process except without projections at the bottom of $\cK$. That is,
\begin{align*}
	W_{t+1}'' := \min\left( W_t'' + Y_t + Z_t, D/2 \right).
\end{align*}
The point is that on one hand, $W_t'$ stochastically dominates $W_t''$, so that it suffices to show $\Prob[W_T'' \geq 0] \gg 1/2$. And on the other hand, $W_t''$ is easy to analyze because, as we show, with overwhelming probability no projections occur. This lack of projections means that, modulo a probability $\delta$ event which is irrelevant for the purpose of $(\eps,\del)$-DP bounds, $W_T''$ is the sum of (biased) independent Gaussian increments---hence it has a simple explicitly computable distribution: it is Gaussian.

\par It remains to establish that (i) with high probability no projections occur in the $W_t''$ process, so that the aforementioned Gaussian approximates the law of $W_T''$, and (ii) this Gaussian is positive with probability $\gg 1/2$, so that we may conclude the desired DP lower bound. Briefly, the former amounts to bounding the hitting time of a Gaussian random walk, which is a routine application of martingale concentration. And the latter amounts to computing the parameters of the Gaussian. A back-of-the-envelope calculation shows that the total bias in the $W_t''$ process is $\sum_{t = T - \Tthresh}^{T-1} Y_t \approx \Tthresh \eta L / n = 3D/4$, and conditional on such an event, the Gaussian approximating $W_T''$ has mean roughly $-D/2 + 3D/4 = D/4$ and variance $\Tthresh \eta^2 \sig^2 \leq D^2/1000$, and therefore is positive with probability $\gg 1/2$, as desired. Full proof details are provided in~\cref{app:lb}.


\section{Extensions}\label{app:extensions}

This section provides details for the extensions mentioned in \S\ref{ssec:intro:cont}. Specifically, we show how our techniques readily extend to strongly convex losses (\S\ref{app:sc}), to cyclic batch updates (\S\ref{app:cyclic}), to non-uniform stepsizes (\S\ref{app:nonuniform}), and to regularized losses (\S\ref{app:reg}). Our techniques readily extend to any combinations of these settings, e.g. $\ISGD$ with cyclic batches and non-uniform stepsizes on strongly convex losses; the details are straightforward and omitted for brevity.

\subsection{Strongly convex losses}\label{app:sc}

Here we show a significantly improved privacy loss if the functions are $m$-strongly convex. In particular, we show that after only $\Otilde(1/(\eta m))$ iterations, there is no further privacy loss. Throughout, the notation $\Otilde$ suppresses logarithmic factors in the relevant parameters.

\begin{theorem}[Privacy upper bound for \ISGD, in strongly convex setting]\label{thm:iid-sc}
Let $\cK \subset \R^d$ be a convex set of diameter $D$, and consider optimizing losses over $\cK$ that are $L$-Lipschitz, $m$-strongly convex, and $M$-smooth. For any number of iterations $T$, dataset size $n \in \N$, batch size $b \leq n$, noise parameter $\sigma > 8\sqrt{2}L/b$, and initialization $\omega_0 \in \cK$, $\ISGD$ with stepsize $\eta < 2/M$ satisfies $(\alpha,\eps)$-RDP for $1 < \alpha < \alpha^*( \tfrac{b}{n}, \tfrac{b\sig}{2\sqrt{2}L})$ and
\begin{align}
	\eps \lesssim 
	\frac{\alpha L^2}{n^2\sig^2} \cdot \min\big\{T,
	\Tthresh\big\}\,,
	\label{eq:thm-iid-sc}
\end{align}
where $\Tthresh = \Otilde(1/\log(1/c))$ and $c := \max_{\lambda \in \{m,M\}}|1 - \eta \lambda| < 1$.
\end{theorem}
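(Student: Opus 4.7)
The overall architecture mirrors the proof of Theorem~\ref{thm:iid-ub}: I would use the same coupling of $\ISGD$ runs on adjacent datasets $\cX, \cX'$ (with coupled batches and split noise $\sigma^2 = \sigma_1^2 + \sigma_2^2$), arriving at the same decomposition
\[
\cD_{\alpha}(\Prob_{W_T} \,\|\, \Prob_{W_T'}) \leq \underbrace{\cD_{\alpha}(\Prob_{Z_{\tau:T-1}} \,\|\, \Prob_{Z_{\tau:T-1}'})}_{\text{PABS}} + \underbrace{\sup_z \cD_{\alpha}(\Prob_{W_T | Z = z} \,\|\, \Prob_{W_T' | Z = z})}_{\text{PABI}}
\]
for any intermediate $\tau$. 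The PABS term depends only on the per-step gradient sensitivity (bounded by $2L/b$ via Lipschitzness) and not on strong convexity, so it contributes at most $(T-\tau) \cdot 8\alpha L^2/(n^2 \sigma_2^2)$ exactly as before, provided $T - \tau \leq T$. All the improvement will come from the PABI term.

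The key new ingredient is a strict-contraction version of Proposition~\ref{prop:pabi-new}. Under $m$-strong convexity, $M$-smoothness, and $\eta < 2/M$, the standard strongly convex analog of Lemma~\ref{lem:gd-contraction} shows that $\omega \mapsto \omega - \eta \nabla f_i(\omega)$ is a $c$-contraction with $c = \max_{\lambda \in \{m,M\}} |1 - \eta \lambda| < 1$; averaging over a batch preserves this, so the map $\phi_t$ defined in~\eqref{eq-pf:isg:phi-t} is almost surely a $c$-contraction. I would then strengthen the Contraction-Reduction Lemma (Lemma~\ref{lem:pabi-contract}) in the special case of \emph{identical} strict $c$-contractions to $\cD_{\alpha}^{(cz)}(\phi_{\#}\mu \,\|\, \phi_{\#}\nu) \leq \cD_{\alpha}^{(z)}(\mu \,\|\, \nu)$ (the proof is immediate: if $\nu'$ witnesses the shifted divergence on the right, then $\phi_{\#}\nu'$ witnesses a shift $cz$ on the left by the definition of $W_\infty$ and data processing). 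Combining this with the unchanged shift-reduction lemma and unrolling the recurrence backward from the initial shift $z_\tau = D$ (which holds by boundedness of $\cK$), one obtains the recurrence $z_{t-1} = (z_t + a_t)/c$ with total cost $\tfrac{\alpha}{2\sigma_1^2}\sum_t a_t^2$, subject to $\sum_{i=1}^{T-\tau} c^{-i} a_{\tau+i} \geq D$.

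The main calculation is then to optimize the shift schedule. A Lagrangian computation gives the optimizer $a_{\tau+i} \propto c^{-i}$, yielding an aggregate PABI cost of
\[
\frac{\alpha D^2 (1-c^2)}{2\sigma_1^2 \,(c^{-2(T-\tau)} - 1)} \leq \frac{\alpha D^2}{\sigma_1^2} \cdot c^{2(T-\tau)},
\]
which decays \emph{exponentially} in $T-\tau$. Setting $\sigma_1 = \sigma_2 = \sigma/\sqrt{2}$, the total bound is $\lesssim (T-\tau)\alpha L^2/(n^2\sigma^2) + \alpha D^2 c^{2(T-\tau)}/\sigma^2$. Choosing $T - \tau = \Tthresh$ to make the PABI term no larger than a single PABS step requires $c^{2\Tthresh} \lesssim L^2/(n^2 D^2)$, i.e.\ $\Tthresh = \Otilde(1/\log(1/c))$ where the $\Otilde$ hides the logarithmic factor $\log(nD/L)$. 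The theorem's bound then follows by taking the minimum with the trivial $T$-fold composition bound~\eqref{eq:simple-bound}, and by taking $\tau = 0$ (i.e., using the full horizon) when $T \leq \Tthresh$.

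The main obstacle is getting the strict-contraction PABI bookkeeping right: the shifts now shrink rather than telescope additively, so the optimization over $\{a_t\}$ has a geometric structure, and one must verify non-negativity of the shifts at every step while simultaneously ensuring $z_T = 0$. Once the recurrence is set up correctly, the logarithmic threshold falls out automatically from balancing a linear cost in $\Tthresh$ against an exponentially decaying one.
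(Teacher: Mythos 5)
Your proposal is correct and follows essentially the same route as the paper: same coupling, same noise splitting, same PABS$+$PABI decomposition, the observation that strong convexity makes $\phi_t$ a strict $c$-contraction, a correspondingly strengthened contraction-reduction lemma, and then a PABI bound that decays like $c^{2(T-\tau)}$ before balancing against the linear PABS cost.

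The one place you deviate is in the choice of shift schedule $\{a_t\}$. You solve the Lagrangian and take $a_{\tau+i}\propto c^{-i}$, yielding the sharp value $\tfrac{\alpha D^2(1-c^2)}{2\eta^2\sigma_1^2\,(c^{-2(T-\tau)}-1)}$; the paper's main proof instead uses the cruder but simpler choice $a_{\tau+1}=\cdots=a_{T-1}=0$, $a_T=c^{T-\tau}D$, which gives the weaker but asymptotically equivalent $c^{2(T-\tau)}\tfrac{\alpha D^2}{2\eta^2\sigma_1^2}$. (The paper does carry out exactly your Lagrangian computation, but only in the numerics appendix where constants matter.) Both choices give $\Tthresh=\Otilde(1/\log(1/c))$. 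One small slip: your PABI cost formula omits the $\eta^2$ that should appear in the denominator (the noise variance entering Proposition~\ref{prop:pabi-new-sc} is $\eta^2\sigma_1^2$, not $\sigma_1^2$); this only shifts a logarithm inside the $\Otilde$ and does not affect the result.
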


We make two remarks about this result.

\begin{remark}[Intepretation of $c$ and $\Tthresh$]
	The operational interpretation of $c$ is that it is the contraction coefficient for a gradient descent iteration with stepsize $\eta$ (c.f., Lemma~\ref{lem:gd-contraction-sc} below). In the common setting of $\eta \leq 1/M$, then $c = 1 - \eta m \leq \exp(-\eta m)$, whereby
	\[
		\Tthresh = \Otilde\left( \frac{1}{\log 1/c} \right) = \Otilde\left( \frac{1}{\eta m} \right)\,.
	\]
	Another important example is $\eta = 2/(M+m)$, i.e., when the stepsize is optimized to minimize the contraction coefficient $c$. In this case, $c = (\kappa-1)/(\kappa+1) \leq e^{-1/\kappa}$ where $\kappa := M/m \geq 1$ denotes the condition number, and thus $\Tthresh = \Otilde(\kappa)$.
\end{remark}

\begin{remark}[Bounded diameter is unnecessary for convergent privacy in the strongly convex setting]
	Unlike the convex setting, in this strongly convex setting, the bounded diameter assumption can be removed. Specifically, for the purposes of DP (rather than RDP), the logarithmic dependence of $\Tthresh$ on $D$ (hidden in the $\Otilde$ above) can be replaced by logarithmic dependence on $T$, $\eta$, $L$, and $\sig$ since the SGD trajectories move from the initialization point by $O(T\eta ( L + \sigma))$ with high probability, and so this can act as the ``effective diameter''. 
	The intuition behind why a diameter assumption is needed for the convex setting but not for the strongly convex setting is that without strong convexity, $\SGD$ is only weakly contractive, which means that its instability increases with the number of iterations; whereas in the strongly convex setting, $\SGD$ is strongly contractive, which means that movements from far enough in the past are effectively ``erased''---a pre-requisite for convergent privacy.	
\end{remark}

The proof of Theorem~\ref{thm:iid-sc} (the setting of strongly convex losses) is similar to the proof of Theorem~\ref{thm:iid-ub} (the setting of convex losses), except for two key differences:
\begin{itemize}
	\item[(i)] Strong convexity of the losses ensures that the update function $\phi_t$ in the Contractive Noisy Iterations is \emph{strongly} contractive. (Formalized in Observation~\ref{obs:iid-contractive-sc}.)
	\item[(ii)] This strong contractivity of the update function ensures exponentially better bounds in the Privacy Amplification by Iteration argument. (Formalized in Proposition~\ref{prop:pabi-new-sc}.)
\end{itemize} 

We first formalize the change (i).

\begin{obs}[Analog of Observation~\ref{obs:iid-contractive} for strongly convex losses]\label{obs:iid-contractive-sc}
	For all $t$, the function $\phi_t$ defined in~\eqref{eq-pf:isg:phi-t} is almost surely $c$-contractive, where $c$ is the quantity in Theorem~\ref{thm:iid-sc}.
\end{obs}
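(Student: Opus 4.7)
The plan is to mirror the proof of Observation~\ref{obs:iid-contractive} verbatim, except substituting the convex contraction lemma (Lemma~\ref{lem:gd-contraction}) with its strongly convex counterpart, which upgrades the per-term bound from a weak contraction (Lipschitz constant $1$) to a strict contraction (Lipschitz constant $c < 1$).

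First, I would invoke (or state and briefly justify) the strongly convex analog of Lemma~\ref{lem:gd-contraction}: for any $m$-strongly convex and $M$-smooth function $f$, and any $\eta \in (0, 2/M)$, the gradient step $\omega \mapsto \omega - \eta \nabla f(\omega)$ is $c$-Lipschitz with $c = \max(|1-\eta m|, |1-\eta M|) \in [0,1)$. This is standard: at every point the Jacobian $I - \eta \nabla^2 f(\omega)$ has spectrum contained in $[1-\eta M, 1-\eta m] \subset (-1,1)$, so the mean value theorem gives the claimed Lipschitz constant. Note that the strict inequality $c < 1$ uses both $\eta < 2/M$ (to control $|1-\eta M|$) and $m > 0$ (to control $|1-\eta m|$).

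With this in hand, I would repeat the triangle inequality chain from the proof of Observation~\ref{obs:iid-contractive}. Fixing any realization of the batch $B_t$ and of $z_t$, and any $\omega,\omega' \in \R^d$,
\begin{align*}
\|\phi_t(\omega) - \phi_t(\omega')\|
&= \left\| \frac{1}{b}\sum_{i \in B_t} \bigl[(\omega - \eta \nabla f_i(\omega)) - (\omega' - \eta \nabla f_i(\omega'))\bigr] \right\| \\
&\leq \frac{1}{b}\sum_{i \in B_t} \bigl\|(\omega - \eta \nabla f_i(\omega)) - (\omega' - \eta \nabla f_i(\omega'))\bigr\| \\
&\leq \frac{1}{b}\sum_{i \in B_t} c\, \|\omega - \omega'\| \;=\; c\,\|\omega - \omega'\|,
\end{align*}
where the translation by $z_t$ cancels, the first inequality is the triangle inequality, and the second inequality applies the strongly convex contraction lemma to each $f_i$ (each of which is $m$-strongly convex and $M$-smooth by hypothesis). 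Since this bound holds for every realization of $(B_t, z_t)$, $\phi_t$ is almost surely $c$-contractive.

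There is no real obstacle here; the proof is essentially mechanical once the strongly convex contraction lemma is invoked. The only point requiring minor care is verifying that the stepsize constraint $\eta < 2/M$ in the hypothesis of Theorem~\ref{thm:iid-sc} is exactly what is needed to make $c < 1$ (rather than merely $c \leq 1$), which is what will eventually drive the exponential improvement in the subsequent Privacy Amplification by Iteration argument (Proposition~\ref{prop:pabi-new-sc}).
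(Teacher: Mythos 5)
Your proposal matches the paper's proof exactly: both replace the weak-contraction lemma (Lemma~\ref{lem:gd-contraction}) with its strongly convex analog (Lemma~\ref{lem:gd-contraction-sc}) and then run the identical triangle-inequality chain from Observation~\ref{obs:iid-contractive}, with the constant translation by $z_t$ cancelling. The only cosmetic difference is that you sketch a Hessian-based justification of Lemma~\ref{lem:gd-contraction-sc} (which implicitly assumes twice differentiability), whereas the paper simply cites~\citep[Theorem 3.12]{bubeck2015convex}, which proves the same bound without that extra smoothness assumption.
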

\begin{proof}
	Identical to the proof of Observation~\ref{obs:iid-contractive}, except use the fact that a gradient step is not simply contractive (Lemma~\ref{lem:gd-contraction}), but in fact strongly contractive when the function is strongly convex (this fact is recalled in Lemma~\ref{lem:gd-contraction-sc} below; see, e.g.,~\citep[Theorem 3.12]{bubeck2015convex} for a proof).
\end{proof}

\begin{lemma}[Analog of Lemma~\ref{lem:gd-contraction} for strongly convex losses]\label{lem:gd-contraction-sc}
	Suppose $f : \R^d \to \R$ is an $m$-strongly convex, $M$-smooth function. For any stepsize $\eta < 2/M$, the mapping $\omega \mapsto \omega - \eta \nabla f(\omega)$ is $c$-contractive, for $c := \max_{\lambda \in \{m,M\}}|1 - \eta \lambda| < 1$.
\end{lemma}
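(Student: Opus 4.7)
The plan is to reduce the claim to an operator-norm bound on $I - \eta \bar H$ for an appropriate averaged Hessian $\bar H$. First, by a standard mollification (convolve $f$ with a smooth bump and pass to the limit, using that $m$-strong convexity and $M$-smoothness are preserved) we may assume $f \in C^2$ without loss of generality; in that case $mI \preceq \nabla^2 f(\omega) \preceq MI$ for every $\omega \in \R^d$. The fundamental theorem of calculus applied componentwise to $\nabla f$ then gives
$$\nabla f(\omega) - \nabla f(\omega') \;=\; \bar H(\omega,\omega')\,(\omega - \omega'), \qquad \bar H(\omega,\omega') := \int_0^1 \nabla^2 f\bigl(\omega' + s(\omega - \omega')\bigr)\,ds\,,$$
so that with $G := \mathrm{id} - \eta \nabla f$ we obtain the linear representation $G(\omega) - G(\omega') = (I - \eta \bar H)(\omega - \omega')$. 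Crucially, $\bar H$ is symmetric and inherits the spectral bound $m I \preceq \bar H \preceq M I$, since the cone of matrices with spectrum in $[m,M]$ is convex and closed under averaging.

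It then remains to bound the operator norm of $I - \eta \bar H$. Since $\bar H$ is symmetric with spectrum in $[m, M]$, and the map $\lambda \mapsto |1 - \eta \lambda|$ is convex in $\lambda$, its maximum over $[m, M]$ is attained at an endpoint. Hence
$$\|I - \eta \bar H\|_{\mathrm{op}} \;\leq\; \max_{\lambda \in [m,M]} |1 - \eta \lambda| \;=\; \max\bigl\{|1 - \eta m|,\; |1 - \eta M|\bigr\} \;=:\; c\,,$$
which yields $\|G(\omega) - G(\omega')\| \leq c\,\|\omega - \omega'\|$. To conclude $c < 1$, note that $m \leq M$ (a consequence of $m$-strong convexity combined with $M$-smoothness) and $0 < \eta < 2/M \leq 2/m$, so $\eta m$ and $\eta M$ both lie in $(0, 2)$; thus $|1 - \eta m|$ and $|1 - \eta M|$ are each strictly below $1$, as required.

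There is no serious obstacle here: the integrated-Hessian trick is the standard workhorse for turning a nonlinear contraction estimate into an eigenvalue bound, and the only ingredient requiring mild care is the mollification step to remove the $C^2$ hypothesis, which is entirely routine. An alternative, mollification-free route would split $f = \tfrac{m}{2}\|\cdot\|^2 + \tilde f$ with $\tilde f$ convex and $(M-m)$-smooth, rewrite $G(\omega) = (1-\eta m)\omega - \eta \nabla \tilde f(\omega)$, and invoke Lemma~\ref{lem:gd-contraction} on the rescaled map $\omega \mapsto \omega - \tfrac{\eta}{1-\eta m}\nabla \tilde f(\omega)$ in the regime $\eta m \leq 1$ (with a symmetric decomposition $f = \tfrac{M}{2}\|\cdot\|^2 - \hat f$ handling $\eta m > 1$); I would prefer the integrated-Hessian argument since it unifies both regimes in a single line.
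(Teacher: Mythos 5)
Your main argument via the integrated Hessian is correct and self-contained. The paper itself does not prove this lemma but instead cites \citep[Theorem~3.12]{bubeck2015convex}, whose proof runs through the co-coercivity (interpolation) inequality
\[
\langle \nabla f(\omega)-\nabla f(\omega'),\,\omega-\omega'\rangle \;\geq\; \frac{mM}{m+M}\|\omega-\omega'\|^2 + \frac{1}{m+M}\|\nabla f(\omega)-\nabla f(\omega')\|^2,
\]
then expands $\|G(\omega)-G(\omega')\|^2$ and optimizes. Your route instead linearizes $G$ using the averaged Hessian $\bar H$ and reduces to an elementary spectral bound; this is cleaner in that the two cases $\eta \lessgtr 2/(m+M)$ are handled in one line, at the mild cost of the mollification step needed to remove the $C^2$ hypothesis (co-coercivity never needs $C^2$, since it only touches first-order quantities). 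Both routes are standard, and yours is arguably more transparent about where the constant $\max\{|1-\eta m|,|1-\eta M|\}$ comes from.

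One small inaccuracy in your closing aside: the decomposition-based alternative splits correctly according to $\eta \lessgtr 2/(m+M)$, not $\eta m \lessgtr 1$. Writing $f = \tfrac{m}{2}\|\cdot\|^2 + \tilde f$ with $\tilde f$ convex and $(M-m)$-smooth, the rescaled stepsize $\tilde\eta = \eta/(1-\eta m)$ satisfies the hypothesis $\tilde\eta \leq 2/(M-m)$ of Lemma~\ref{lem:gd-contraction} if and only if $\eta \leq 2/(m+M)$; symmetrically, writing $f=\tfrac{M}{2}\|\cdot\|^2 - \hat f$ gives $\hat\eta = \eta/(\eta M - 1)\leq 2/(M-m)$ exactly when $\eta \geq 2/(m+M)$. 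When $2/(m+M) < \eta \leq 1/m$, the first decomposition is well-defined ($\eta m < 1$) but yields the too-weak bound $1-\eta m$, whereas the correct factor is $\eta M - 1$. This does not affect your main proof, which you (rightly) prefer.
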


Next we formalize the change (ii). 

\begin{prop}[Analog of Proposition~\ref{prop:pabi-new} for strongly convex losses]\label{prop:pabi-new-sc}
	Consider the setup in Proposition~\ref{prop:pabi-new}, and additonally assume that $\phi_t,\phi_t'$ are almost surely $c$-contractive. Consider any $\tau \in \{0, \dots, T-1\}$ and any reals $a_{\tau + 1}, \dots, a_{T}$ such that $z_t := c^{t-\tau} D + \sum_{i=\tau+1}^t c^{t - i}(s_i - a_i)$ is non-negative for all $t$ and satisfies $z_T = 0$. Then:
	\[
	\Dalplr{\Prob_{X_T}}{\Prob_{X_T'}}
	\leq 
	\frac{\alpha}{2}\sum_{t=\tau + 1}^T \frac{a_t^2}{\sigma_t^2}.
	\]
\end{prop}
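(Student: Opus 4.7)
The plan is to mirror the inductive argument in the proof of Proposition~\ref{prop:pabi-new}, with the single modification that the contraction-reduction step now exploits $c$-contractivity rather than $1$-contractivity. First, I would record the recurrence that defines the $z_t$: directly from the formula, $z_t = c \, z_{t-1} + s_t - a_t$ for $t > \tau$, and $z_\tau = D$. This is the strongly-contractive analog of the ``telescoping shift schedule'' used in Proposition~\ref{prop:pabi-new}; the factor $c$ in front of $z_{t-1}$ is exactly what will allow the shifts to accumulate geometrically rather than linearly, and in turn is what permits $\Tthresh$ to shrink from $\Theta(n)$ to $\Otilde(1/\log(1/c))$.

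Second, I would establish a version of the contraction-reduction lemma (Lemma~\ref{lem:pabi-contract}) for $c$-contractions: if $\phi,\phi'$ are $c$-contractive a.s.\ and coupled so that $\sup_z \|\phi(z)-\phi'(z)\| \leq s$ a.s., then
\[
\cD_\alpha^{(cz+s)}\bigl(\phi_\#\mu \,\big\|\, \phi'_\#\mu'\bigr) \;\leq\; \cD_\alpha^{(z)}(\mu \,\|\, \mu').
\]
This follows exactly as in the proof of Lemma~\ref{lem:pabi-contract} given in~\cref{app:deferred:pabi-contract}, using the fact that under an $\infty$-Wasserstein coupling $(X,X')$ of $\mu,\mu'$ with $\|X - X'\| \leq z$ a.s., the triangle inequality and $c$-contractivity yield $\|\phi(X) - \phi'(X')\| \leq c\|X-X'\| + s \leq cz+s$ a.s., followed by quasi-convexity of R\'enyi divergence to average over the randomness of $(\phi,\phi')$.

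Third, with these two ingredients in hand, I would run the same one-step reduction as in Proposition~\ref{prop:pabi-new}. Starting from $\cD_\alpha(\Prob_{X_T} \| \Prob_{X_T'}) = \cD_\alpha^{(z_T)}(\Prob_{X_T} \| \Prob_{X_T'})$ (using $z_T=0$), each step factors as
\[
\cD_\alpha^{(z_t)}(\Prob_{X_t} \| \Prob_{X_t'})
\;\leq\; \cD_\alpha^{(cz_{t-1}+s_t)}\bigl(\Prob_{\phi_t(X_{t-1})+Z_t} \| \Prob_{\phi_t'(X_{t-1}')+Z_t}\bigr) + 0,
\]
where I have first applied post-processing through the projection (the projection is a $1$-contraction, and in fact the shift bound $s_t$ survives the projection via a further application of the contraction-reduction lemma with $c=1$), then shift-reduction (Lemma~\ref{lem:pabi-shift}) with shift $a_t$ to peel off an $\alpha a_t^2 / (2\sigma_t^2)$ term and leave shift $cz_{t-1}+s_t$, then the $c$-contractive contraction-reduction lemma to drop the shift to $z_{t-1}$. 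Unrolling this from $t=T$ down to $t=\tau+1$ and summing the $\alpha a_t^2/(2\sigma_t^2)$ terms yields
\[
\cD_\alpha(\Prob_{X_T}\|\Prob_{X_T'}) \;\leq\; \cD_\alpha^{(z_\tau)}(\Prob_{X_\tau}\|\Prob_{X_\tau'}) + \frac{\alpha}{2}\sum_{t=\tau+1}^T \frac{a_t^2}{\sigma_t^2},
\]
and since $z_\tau = D$ and both iterates lie in $\cK$ (so $\|X_\tau - X_\tau'\| \leq D$ under the trivial coupling), the residual shifted divergence vanishes.

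I do not anticipate a serious obstacle: the only substantive novelty over Proposition~\ref{prop:pabi-new} is the $c$-contractive generalization of Lemma~\ref{lem:pabi-contract}, which is routine. The subtlest bookkeeping point is ensuring that $z_t$ remains nonnegative throughout the induction (needed because shifts in the shifted R\'enyi divergence must be $\geq 0$), but this is exactly the hypothesis imposed on the sequence $\{a_t\}$, so there is nothing to check beyond invoking it.
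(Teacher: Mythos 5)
Your proposal is correct and follows essentially the same route as the paper: isolate the $c$-contractive generalization of the contraction-reduction lemma (the paper's Lemma~\ref{lem:pabi-contract-sc}), observe the recurrence $z_t = c\,z_{t-1} + s_t - a_t$, and then rerun the induction from Proposition~\ref{prop:pabi-new} verbatim. The only blemish is a sloppy intermediate display where the shift $cz_{t-1}+s_t$ is paired with the still-noisy random variables $\phi_t(X_{t-1})+Z_t$ (it should be either shift $cz_{t-1}+s_t-a_t$ with the noise present, or shift $cz_{t-1}+s_t$ with the noise removed), but your prose explanation immediately after makes clear you have the correct sequence of steps.
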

\begin{proof}
	Identical to the proof of Proposition~\ref{prop:pabi-new}, except use the contraction-reduction lemma for strong contractions (Lemma~\ref{lem:pabi-contract-sc} below rather than Lemma~\ref{lem:pabi-contract}), and the inductive relation $z_{t+1} = cz_t + s_{t+1} - a_{t+1}$.
\end{proof}

\begin{lemma}[Analog of Lemma~\ref{lem:pabi-contract} for strongly convex losses]\label{lem:pabi-contract-sc}
	Consider the setup of Lemma~\ref{lem:pabi-contract-sc}, except additionally assuming that $\phi,\phi'$ are each $c$-contractive almost surely. Then
	\[
	\cD_{\alpha}^{(cz+s)} \left( \phi_{\#} \mu \| \phi_{\#}' \mu' \right)
	\leq
	\cD_{\alpha}^{(z)}\left( \mu \| \mu' \right).
	\]
\end{lemma}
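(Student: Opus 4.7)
The plan is to mirror the proof of the randomized Contraction-Reduction Lemma (Lemma~\ref{lem:pabi-contract}), making a single change: where that proof uses the $1$-Lipschitz property of $\phi$ to control $\|\phi(X) - \phi(\tilde X)\|$ by $\|X - \tilde X\|$, here I would use the stronger $c$-contractivity hypothesis to control it by $c\|X - \tilde X\|$. This shrinks the transported shift from $z + s$ to $cz + s$, which is exactly the improvement stated in the lemma.

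Concretely, I would first fix an arbitrary $\tilde\mu$ with $W_\infty(\mu, \tilde\mu) \leq z$; it then suffices to prove $\cD_\alpha^{(cz+s)}(\phi_{\#} \mu \,\|\, \phi'_{\#} \mu') \leq \cD_\alpha(\tilde\mu \,\|\, \mu')$, since taking the infimum over such $\tilde\mu$ recovers the lemma. I would construct a joint coupling of $(X, \tilde X, \phi, \phi')$ such that: the pair $(X, \tilde X)$ has marginals $\mu, \tilde\mu$ and satisfies $\|X - \tilde X\| \leq z$ almost surely (available by definition of $W_\infty$); the random pair $(\phi, \phi')$ satisfies $\sup_y \|\phi(y) - \phi'(y)\| \leq s$ almost surely (by hypothesis); and $(X, \tilde X)$ is independent of $(\phi, \phi')$. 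Under this coupling, the triangle inequality followed by $c$-contractivity yields, almost surely,
\[
\|\phi(X) - \phi'(\tilde X)\| \leq \|\phi(X) - \phi(\tilde X)\| + \|\phi(\tilde X) - \phi'(\tilde X)\| \leq c\|X - \tilde X\| + s \leq cz + s.
\]
By independence, the marginal of $\phi(X)$ is $\phi_{\#}\mu$ and the marginal of $\phi'(\tilde X)$ is $\phi'_{\#}\tilde\mu$, so this coupling witnesses $W_\infty(\phi_{\#}\mu, \phi'_{\#}\tilde\mu) \leq cz + s$.

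To conclude, I would apply the post-processing inequality for R\'enyi divergence (Lemma~\ref{lem:process-rdp}) to the random map $\phi'$, yielding $\cD_\alpha(\phi'_{\#}\tilde\mu \,\|\, \phi'_{\#}\mu') \leq \cD_\alpha(\tilde\mu \,\|\, \mu')$. Combining with the previous $W_\infty$ bound via the definition of the shifted divergence,
\[
\cD_\alpha^{(cz+s)}(\phi_{\#}\mu \,\|\, \phi'_{\#}\mu') \leq \cD_\alpha(\phi'_{\#}\tilde\mu \,\|\, \phi'_{\#}\mu') \leq \cD_\alpha(\tilde\mu \,\|\, \mu'),
\]
and the infimum over $\tilde\mu$ finishes the proof. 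The only subtlety, already present in the original randomized lemma, is the bookkeeping around random functions: one must verify that pushing forward $\mu$ by the random map $\phi$ (with $\phi$ independent of $X$) produces the mixture law $\phi_{\#}\mu$ in the sense of the notation established in Section~\ref{sec:prelim}, so that the coupling above genuinely witnesses an $\infty$-Wasserstein bound between the intended mixture distributions. No new idea beyond the original Contraction-Reduction Lemma is required.
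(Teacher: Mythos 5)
Your proof is correct and takes essentially the same approach as the paper: the paper also reuses the proof of the randomized contraction-reduction lemma (Lemma~\ref{lem:pabi-contract}), tightening only the $W_\infty$ bound on the second leg of the triangle inequality from $W_\infty(\phi_{\#}\nu,\phi_{\#}\mu)\leq z$ to $cW_\infty(\nu,\mu)\leq cz$ via $c$-contractivity. Your presentation carries out the triangle inequality inside a single explicit coupling rather than at the level of $W_\infty$, and invokes post-processing (Lemma~\ref{lem:process-rdp}) directly for the random map $\phi'$ where the paper's Lemma~\ref{lem:pabi-contract} detours through quasi-convexity, but these are equivalent.
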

\begin{proof}
	Identical to the proof of Lemma~\ref{lem:pabi-contract}, except use the fact that $\phi$ is almost surely a $c$-contraction to bound the second term in~\eqref{eq-pf:contraction-iid}, namely by $W_{\infty}( \phi_{\#}\nu, \phi_{\#}\mu ) \leq c W_{\infty}( \nu,\mu) \leq cz$.
\end{proof}

We now combine the changes (i) and (ii) to prove Theorem~\ref{thm:iid-sc}.

\begin{proof}[Proof of Theorem~\ref{thm:iid-sc}]
	We record the differences to Steps 1-4 of the proof in the convex setting (Theorem~\ref{thm:iid-ub}). Step 1 is identical for coupling the iterates. Step 2 is identical for constructing the conditional CNI, except that in this strongly convex setting, the update functions $\phi_t$ are not simply contractive (Observation~\ref{obs:iid-contractive}), but in fact $c$-contractive (Observation~\ref{obs:iid-contractive-sc}). We use this to establish an improved bound on the term \circled{2} in Step 3. Specifically, invoke Proposition~\ref{prop:pabi-new-sc} with $s_t = 0$ for all $t \in \{\tau + 1, \dots, T\}$, $a_t = 0$ for all $t \in \{\tau + 1, \dots, T-1\}$ and $a_{T} = c^{T - \tau}D$ to obtain
	\begin{align}
		\circled{2}
		=
		\sup_{z}  
		\cD_{\alpha}\left( \Prob_{ W_T |  Z_{\tau:T-1} = z } \; \big\| \; \Prob_{ W_T' |  Z_{\tau:T-1}' = z } \right) 
		\leq
		c^{2(T -  \tau)}
		\frac{\alpha D^2 }{2\eta^2 \sig_1^2} .
		\label{eq-pf:isgd:B-sc}
	\end{align}
	This allows us to conclude the following analog of~\eqref{eq-pf:isgd:nonasymp} for the present strongly convex setting:
	\begin{align}
		\eps \leq 
		\min_{\tau \in \{0, \dots, T-1\}}
		(T-\tau) Q + c^{2(T -  \tau )} \frac{\alpha D^2}{2\eta^2 \sig_1^2 },
		\label{eq-pf:isgd:nonasymp:sc}
	\end{align}
	where $Q := S_{\alpha} ( \tfrac{b}{n}, \tfrac{b\sig_2}{2L})$. Simplifying this in Step 4 requires the following modifications since the asymptotics are different in the present strongly convex setting. Specifically, bound the above by
	\begin{align*}
		\eps
		\lesssim
		\frac{Q}{\log 1/c} \log \left( \frac{\alpha D^2 \log 1/c}{\eta^2 \sig_1^2 Q} \right).
	\end{align*}
	 by setting $\tau = T - \Theta( \log ( (\alpha D^2 \log1/c)/(\eta^2 \sig_1^2 Q))/(\log 1/c)) = T - \tilde{\Theta}(1/(\log1/c))$,
	valid if $\tau \geq 0$. 
	By setting $\sig_1 = \sig_2 = \sig/\sqrt{2}$, and using the bound on $Q$ in the proof of Theorem~\ref{thm:iid-ub}, we conclude the desired bound
	\[
	\eps
	\lesssim 
	\frac{\alpha L^2}{n^2\sig^2}
	\min\left\{
	T, \frac{1}{\log 1/c} \log \left( \frac{\alpha D^2 \log 1/c}{\eta^2 \sig_1^2 Q} \right)
	\right\}
	=
	\frac{\alpha L^2}{n^2\sig^2} \cdot \min\big\{T,\Otilde\left(\frac{1}{\log 1/c}\right)\big\}.
	\]
\end{proof}

\subsection{Cyclic batch updates}\label{app:cyclic}

Here we show how our results can be extended to $\ISGD$ if the batches are chosen cyclically in round-robin fashion rather than randomly. We call this algorithm $\CSGD$. The previous best privacy bound for this algorithm is $(\alpha,\eps)$-RDP upper bound with
\begin{align}
	\eps \lesssim \frac{\alpha L^2}{\sig^2}\left(\frac{1}{b^2}+\frac{T}{n^2}\right),
	\label{eq:pabi:cyclic-without-diam}
\end{align}
which can be obtained from the standard Privacy Amplification by Iteration argument if one extends the proof of~\citep[Theorem 35]{pabi} to arbitrary numbers of iterations $T$ and batch sizes $b$. The key difference in our privacy bound, below, is that it does not increase ad infinitum as $T \to \infty$. This is enabled by the new diameter-aware Privacy Amplification by Iteration bound (Proposition~\ref{prop:pabi-new}).

\begin{theorem}[Privacy upper bound for $\CSGD$]\label{thm:diam-rdp}
	Let $\cK \subset \R^d$ be a convex set of diameter $D$, and consider optimizing convex losses over $\cK$ that are $L$-Lipschitz and $M$-smooth. For any number of iterations $T \in \N$, stepsize $\eta \leq 2/\beta$, noise parameter $\sigma > 0$, initialization $\omega_0 \in \cK$, dataset size $n \in \N$, and batch size $b$, $\CSGD$ satisfies $(\alpha,\eps)$-RDP for any $\alpha > 1$ and
	\begin{align}
		\eps \lesssim \frac{\alpha L^2}{\sig^2} \left( \frac{1}{b^2} + \frac{1}{n^2} \, \min\left\{T, \Tthresh \right\}\right)\,,
	\end{align}
	where $\Tthresh := \lceil \frac{Dn}{L\eta} \rceil$.
\end{theorem}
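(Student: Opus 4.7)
The plan is purely a Privacy Amplification by Iteration argument: since batches are deterministic in $\CSGD$, there is no Privacy Amplification by Sampling step, unlike in the proof of Theorem~\ref{thm:iid-ub}. I will extend the standard cyclic-PABI analysis (which yields~\eqref{eq:pabi:cyclic-without-diam}) by invoking the new diameter-aware Proposition~\ref{prop:pabi-new} on only the last $\Tthresh$ iterations. This replaces the linear-in-$T$ term with a linear-in-$\Tthresh$ term while preserving the unavoidable $1/b^2$ overhead from the worst-case placement of the sensitive datapoint in the very last batch.

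\textbf{Coupling and CNI setup.} For adjacent datasets $\cX, \cX'$ differing on a single index $i^*$, couple the two $\CSGD$ trajectories $\{W_t\}, \{W_t'\}$ via a common Gaussian noise sequence $Z_t \sim \cN(0, \eta^2 \sigma^2 I_d)$ and a common cyclic batch schedule $B_0, B_1, \ldots$. Writing $\phi_{t+1}(\omega) = \omega - (\eta/b) \sum_{i \in B_t} \nabla f_i(\omega)$ and $\phi_{t+1}'$ analogously using $f_i'$, both are contractions of $\R^d$ by Lemma~\ref{lem:gd-contraction}, so the two sequences are projected CNIs (Definition~\ref{def:cni}) sharing the same noise. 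The sensitivity $s_t := \sup_\omega \|\phi_t(\omega) - \phi_t'(\omega)\|$ is at most $2\eta L/b$ when $i^* \in B_{t-1}$ and is $0$ otherwise; by cyclicity, appearances of $i^*$ are exactly $n/b$ iterations apart, so $|\{t \in (\tau, T] : s_t > 0\}| \leq \lceil (T-\tau) b/n \rceil$ for any cutoff $\tau$.

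\textbf{Allocation and PABI call.} If $T \leq \Tthresh$, apply the original PABI bound (Proposition~\ref{prop:pabi-original}) to all $T$ iterations. If $T > \Tthresh$, apply the diameter-aware Proposition~\ref{prop:pabi-new} with $\tau := T - \Tthresh$, using that $\|W_\tau - W_\tau'\| \leq D$ because both iterates lie in $\cK$. In either case, I construct the shifts $\{a_t\}_{t = \tau+1}^T$ as a sum of three pieces: (i) for every appearance $t_j$ of $i^*$ in $(\tau, T]$ except the last, place $a_t = 2\eta L/n$ on the $n/b$ iterations $\{t_j+1, \ldots, t_j + n/b\}$, which exactly amortizes the shift $s_{t_j} = 2\eta L/b$ before the next appearance; (ii) for the final appearance $t_K$, distribute its shift $2\eta L/b$ uniformly across the remaining iterations $\{t_K, \ldots, T\}$; (iii) only when invoking Proposition~\ref{prop:pabi-new}, add $D/\Tthresh$ to every $a_t$ with $t \in (\tau, T]$ to absorb the initial budget $z_\tau = D$. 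A routine bookkeeping check (using that $\Tthresh \geq Dn/(L\eta)$ forces the uniform diameter rate $D/\Tthresh \leq L\eta/n$ to be dominated by the per-block rate $2\eta L/n$) confirms $z_t \geq 0$ throughout and $z_T = 0$.

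\textbf{Summing the costs and main obstacle.} Each of the at most $\lceil (T-\tau)b/n \rceil$ middle blocks from (i) contributes $(n/b)(2\eta L/n)^2 \cdot \alpha/(2\eta^2\sigma^2) = 2\alpha L^2/(bn\sigma^2)$, summing to $O(\alpha L^2 (T-\tau)/(n^2 \sigma^2)) = O(\alpha L^2 \min\{T, \Tthresh\}/(n^2 \sigma^2))$. The final block from (ii) contributes at most $2\alpha L^2/(b^2 \sigma^2)$, attained in the worst case $t_K = T$ (i.e., $i^* \in B_{T-1}$), which forces $a_T = 2\eta L/b$ and produces the irreducible $1/b^2$ overhead. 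In the regime $T > \Tthresh$, the uniform piece (iii) adds $\Tthresh \cdot (D/\Tthresh)^2 \cdot \alpha/(2\eta^2\sigma^2) = \alpha D^2/(2\Tthresh \eta^2\sigma^2) = O(\alpha L^2 \Tthresh/(n^2\sigma^2))$ upon substituting $\Tthresh \asymp Dn/(L\eta)$. After applying $(u+v)^2 \leq 2u^2 + 2v^2$ to cleanly separate the per-iteration sums, we obtain $\eps \lesssim (\alpha L^2/\sigma^2)(1/b^2 + \min\{T, \Tthresh\}/n^2)$. The main subtlety is precisely this worst-case placement of $i^*$ in the very last batch, which is why the theorem must separate the $1/b^2$ overhead from the diameter-capped $\min\{T, \Tthresh\}/n^2$ term; the diameter trick cannot help the last block because the adversary can leave no iterations to amortize its shift over. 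The rest is a careful adaptation of the cyclic PABI analysis, with boundedness of $\cK$ supplied through Proposition~\ref{prop:pabi-new}.
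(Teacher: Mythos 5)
Your proposal matches the paper's proof essentially step-for-step: the reduction of the $T \leq \Tthresh$ case to the existing cyclic-PABI bound~\eqref{eq:pabi:cyclic-without-diam}; coupling the two $\CSGD$ runs with common noise to get a pair of projected CNIs; invoking Proposition~\ref{prop:pabi-new} with $\tau = T - \Tthresh$ (rather than~\ref{prop:pabi-original}) when $T > \Tthresh$; choosing $a_t$ to amortize each shift $s_t = 2\eta L/b$ over the $n/b$ iterations until the next appearance of $i^*$, except the final appearance which can be amortized only over the remaining $\leq n/b$ iterations and hence forces the $1/b^2$ term; and adding a uniform $D/\Tthresh$ shift to pay for the initial diameter budget. (The paper's bookkeeping is worded as three contiguous segments rather than per-appearance blocks, and uses the slightly inflated budget $\tilde D = D + 2\eta L/b$, but these are cosmetic.) One small wrinkle you should fix: as written, your block for the final appearance covers $\{t_K,\dots,T\}$ while block $K-1$ covers $\{t_{K-1}+1,\dots,t_K\}$, so they both assign mass to $a_{t_K}$, which would make $\sum_t a_t > D + \sum_t s_t$ and hence $z_T < 0$ rather than $z_T=0$ as Proposition~\ref{prop:pabi-new} requires; trimming one endpoint (or following the paper's segment description) resolves this without changing the asymptotics.
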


\par Our analysis of $\CSGD$ is simpler than that of $\ISGD$: because $\CSGD$ updates in a deterministic order, its analysis requires neither Privacy Amplification by Sampling arguments nor the conditionality of the CNI argument in the proof of Theorem~\ref{thm:iid-ub}. This also obviates any assumptions on the R\'enyi parameter $\alpha$ and noise $\sigma$ because the Privacy Amplification by Sampling Lemma~\ref{lem:sgm} is not needed here. On the other hand, despite $\CSGD$ being easier to analyze, this algorithm yields substantially weaker privacy guarantees for small numbers of iterations $T$.

\begin{proof}[Proof of Theorem~\ref{thm:diam-rdp}]
	We prove Theorem~\ref{thm:diam-rdp} in the case $T > \Tthresh$, because as mentioned above, the case $T \leq \Tthresh$ follows from~\eqref{eq:pabi:cyclic-without-diam}, which is a straightforward extension of~\citep[Theorem 35]{pabi}. 
	
	Suppose $\cX = \{x_1,\dots,x_n\}$ and $\cX' = \{x_1',\dots,x_n'\}$ are adjacent datasets; that is, they agree $x_i = x_i'$ on all indices $i \in [n] \setminus \{i^*\}$ except for at most one index $i^* \in [n]$. Denote the corresponding loss functions by $f_{i}$ and $f_{i'}$, where $f_i = f_{i}'$ except possibly $f_{i^*} \neq f_{i^*}'$. Consider running $\CSGD$ on either dataset $\cX$ or $\cX'$ for $T$ iterations---call the resulting iterates $\{W_t\}_{t=0}^T$ and $\{W_t'\}_{t=0}^T$, respectively---where we start from the same point $\omega_0 \in \cK$ and couple the sequence of random batches $\{B_t\}_{t=0}^{T-1}$ and the random noise injected in each iteration. Then
	\begin{align*}
		W_t &= \proj\left[ \phi_t(W_t) + Z_t \right] \\
		W_t' &= \proj\left[ \phi_t'(W_t') + Z_t \right]
	\end{align*}
	where $Z_t \sim \cN(0,\sig^2 I_d)$ and 
	\begin{align*}
		\phi_t(\omega) &:= \omega -
		\frac{\eta}{b}\sum_{i \in B_t} \nabla f_i \left(\omega \right) \\
		\phi_t'(\omega') &:= \omega' - 
		\frac{\eta}{b}\sum_{i \in B_t} \nabla f_i' \left(\omega' \right)
	\end{align*}
	Observe that $\phi_t,\phi_t'$ are contractions since a stochastic gradient update on a smooth function is contractive (by Observation~\ref{obs:iid-contractive} and the triangle inequality). Therefore the SGD iterates are generated by Projected Contractive Noisy Iterations $W_T \sim \CNI(\omega_0,\{\phi_t\},\{\xi_t\},\cK)$ and $W_T' \sim \CNI(\omega_0,\{\phi_t'\},\{\xi_t\},\cK)$ where the noise distributions are $\xi_t = \cN(0,\eta^2 \sigma^2 I_d)$.
	\par 	Therefore we may invoke the new diameter-aware Privacy Amplification by Iteration bound (Proposition~\ref{prop:pabi-new}) using the diameter bound $\tilde{D} := D + \frac{2\eta L}{b}$. To this end, we now set the parameters $s_t$, $a_t$, and $\tau$ in that proposition. Bound
	$
	s_t
	:=
	\sup_{\omega} \|\phi_t(\omega) - \phi_t'(\omega)\|
	\leq \frac{2\eta L}{b}\, \mathds{1}[i^* \in B_t]
	$
	by the $L$-Lipschitz assumption on the losses. Set $\tau = T-\Tthresh$ and assume for simplicity of exposition that the batch size $b$ divides the dataset size $n$, and also that $n/b$ divides $T,\Tthresh$ so that each datapoint is updated the same number of times in $T,\Tthresh$ iterations; the general case is a straightforward extension that just requires additional floors/ceilings. Then set $a_{\tau + t} = \tilde{D}/\Tthresh$ for the first $\lceil i^*/b\rceil - 1$ batches before updating the different datapoint $i^*$, followed by $a_{\tau + t} = \tilde{D}/\Tthresh + 2 \eta L /n$ for the next $\Tthresh - n/b$ batches before the final update on $i^*$, and then $a_{\tau+ t} = \tilde{D}/\Tthresh + (2\eta L)/(b(n/b - \lceil i^*/b \rceil + 1))$ until the final iteration $T$. Observe that then $z_{t} = \tilde{D} + \sum_{i=\tau+1}^t (s_i - a_i)$ satisfies $z_t \geq 0$ for all $t$ and has final value $z_T = 0$, which lets us invoke Proposition~\ref{prop:pabi-new}. Therefore
	\begin{align}
		\Dalplr{\Prob_{W_T}}{\Prob_{W_T'}} 
		\leq
		\frac{\alpha}{2 \eta^2 \sig^2} \sum_{t=\tau + 1}^{T} a_t^2.
		\label{eqn:cyclic-ub-at-sum}
	\end{align}
	\par To simplify, plug in the definition of $a_t$, use the elementary bound $(x+y)^2 \leq 2(x^2+y^2)$, and collect terms to obtain
	\begin{align*}
		\sum_{t=\tau+ 1}^{T} a_t^2
		&= 
		\left( \left \lceil \frac{i^*}{b} \right\rceil - 1 \right)\, \left(\frac{\tilde{D}}{\Tthresh}\right)^2
		+ \left( \Tthresh - \frac{n}{b} \right) \left(\frac{\tilde{D}}{\Tthresh} + \frac{2\eta L}{n}\right)^2
		+ 
		\left(
		\frac{n}{b} - \left \lceil \frac{i^*}{b} \right \rceil + 1
		\right)
		\, \left(\frac{\tilde{D}}{\Tthresh} + \frac{2\eta L}
		{b( \tfrac{n}{b} -  \lceil \tfrac{i^*}{b}  \rceil + 1 )}
		\right)^2
		\\ &\lesssim
		\frac{\tilde{D}^2}{\Tthresh} + \eta^2L^2 \left( 
		\frac{1}{b^2} + \frac{\Tthresh}{n^2}
		\right)
		\\ &\asymp \eta^2 L^2 \left(
		\frac{1}{b^2} + 
		\frac{\Tthresh}{n^2}
		\right).
	\end{align*}
	Combining the above two displays yields
	\begin{align*}
		\Dalplr{\Prob_{W_T}}{\Prob_{W_T'}} 
		\lesssim
		\frac{\alpha L^2}{ \sig^2} \left( 
		\frac{1}{b^2} + \frac{\Tthresh}{n^2}
		\right).
	\end{align*}
	This proves Theorem~\ref{thm:diam-rdp} in the case $T > \Tthresh$, as desired.
\end{proof}

\subsection{Non-uniform stepsizes}\label{app:nonuniform}

The analysis of $\ISGD$ in Theorem~\ref{thm:iid} readily extends to non-uniform stepsize schedules $\{\eta_t\}_{t=0}^{T-1}$. Assume $\eta_t \leq 2/M$ for all $t$ so that the gradient update is contractive (Lemma~\ref{lem:gd-contraction}). Then the analysis in \S\ref{sec:iid} goes through unchanged, and can be improved by choosing non-uniform $a_t$ in the Privacy Amplification by Iteration argument to bound $\circled{2}$. Specifically, since the bound in Proposition~\ref{prop:pabi-new} scales as 
\[
\circled{2} \leq \frac{\alpha}{2\sig_1^2} \sum_{t = \tau + 1}^{T-1} \frac{a_t^2}{\eta_t^2},
\]
the proper choice of $a_t$ is no longer the uniform choice $a_t \equiv D/(T-\tau - 1)$, but instead the non-uniform choice $a_t = D\eta_t / \sum_{t=\tau+1}^{T-1}\eta_t$. This yields the following generalization of the privacy bound~\eqref{eq-pf:isgd:nonasymp}: $\ISGD$ satisfies $(\alpha,\eps)$-RDP for
\begin{align}
	\eps \leq 
	\min_{\tau \in \{0, \dots, T-1\}}
	(T-\tau) Q + (T - \tau - 1)\frac{\alpha D^2}{2 \sig_1^2 (\sum_{t=\tau+1}^{T-1} \eta_t )^2 }.
	\label{eq-pf:isgd:nonasymp-nonunif}
\end{align}
where $Q := S_{\alpha} ( \tfrac{b}{n}, \tfrac{b\sig_2}{2L} )$.

\par Further simplifying this RDP bound~\eqref{eq-pf:isgd:nonasymp-nonunif} requires optimizing over $\tau$, and this optimization depends on the decay rate of the stepsize schedule $\{\eta_t\}$. As an illustrative example, we demonstrate this optimization for the important special case of $\eta_t \asymp t^{-c}$ for $c \in [0,1)$, which generalizes the constant step-size result in Theorem~\ref{thm:iid-ub}.

\begin{example}[Privacy of $\ISGD$ with polynomially decaying stepsize schedule]
	Consider stepsize schedule $\eta_t \asymp t^{-c}$ for $c \in [0,1)$. Then up to a constant factor, the privacy bound~\eqref{eq-pf:isgd:nonasymp-nonunif} gives
	\[
		\eps \lesssim \alpha \frac{ D L T^c }{n \sigma^2} 
	\]
	by a similar optimization over $\tau$ as in the proof of Theorem~\ref{thm:iid-ub}, with the only differences here being that we take $\tau = T - R  - 1$ where 
	$R \asymp DT^c/\sigma \sqrt{ \alpha / Q}$
	 and note that $\sum_{t = \tau+ 1}^{T-1} \eta_t \asymp R T^{-c}$ since $R \ll T$. We therefore obtain the final privacy bound of
	\[
	\eps
	\lesssim 
	\frac{\alpha L^2}{n^2\sig^2}
	\min\left\{
	T, \frac{Dn}{L} T^c
	\right\},
	\]
	where the first term in the minimization is from the simple bound~\eqref{eq:simple-bound} which scales linearly in $T$. 
\end{example}

\subsection{Regularization}\label{app:reg}

In order to improve generalization error, a common approach for training machine learning models---whether differentially private or not---is to regularize the loss functions. Specifically, solve $$\min_{\omega \in \cK} \frac{1}{n}\sum_{i=1}^n \Big( f_i(\omega) + R(\omega) \Big)\,$$ for some convex regularization function $R$. In order to apply existing optimization and privacy analyses for the regularized problem, one could simply use the relevant parameters (Lipschitzness, smoothness, strong convexity) of the regularized losses $F_i := f_i + R$ rather than of $f_i$. 
A standard issue is that the Lipschitzness constant degrades after regularization; that is, typically the Lipschitz constant of $F_i$ can only be bounded by the Lipschitz constant of $f_i$ plus that of $R$. 
\par The purpose of this brief subsection is to point out that when using our privacy analysis, the Lipschitz constant does \emph{not} degrade after regularization. Specifically, although applying our privacy bounds requires using the strong convexity and smoothness constants of the regularized function $F_i$ (as is standard in all existing privacy/optimization analyses), one can use the Lipschitz constant of $f_i$ rather than that of $F_i$. This is because our analysis only uses Lipschitzness in Step 3a to bound the ``gradient sensitivity'' $\sup_{\omega}\|\nabla f_i(\omega) - \nabla f_i'(\omega)\|$---aka the amount that gradients can differ when using losses from adjacent datasets---and this quantity is invariant under regularization because of course $\nabla F_i(\omega) - \nabla F_i'(\omega) = \nabla (f_i + R)(\omega) - \nabla (f_i' + R)(\omega)  = \nabla f_i(\omega) - \nabla f_i'(\omega)$. 

\section{Discussion}\label{sec:discussion}

The results of this paper suggest several natural directions for future work:

\begin{itemize}
	\item \underline{Clipped gradients?} In practical settings, $\ISGD$ implementations sometimes ``clip'' gradients to force their norms to be small, see e.g.,~\citep{abadi2016deep}. 
	In the case of generalized linear models, the clipped gradients can be viewed as gradients of an auxiliary convex loss~\citep{song2021evading}, in which case our results can be applied directly. However, in general, clipped gradients do not correspond to gradients of a convex loss, in which case our results (as well as all other works in the literature that aim at proving convergent privacy bounds) do not apply. Can this be remedied? 
	\item \underline{Average iterate?} Can similar privacy guarantees be established for the average iterate rather than the last iterate? There are fundamental difficulties with trying to proving this: indeed, the average iterate is provably not as private for $\CSGD$~\citep{BassilyFeTaTh19}.
	\item \underline{Adaptive stepsizes?} Can similar privacy guarantees be established for optimization algorithms with adaptive stepsizes? The main technical obstacle is the impact of the early gradients on the trajectory through the adaptivity, and one would need to control the privacy loss accumulation due to this. This appears to preclude using our analysis techniques, at least in their current form. 
	\item \underline{Beyond convexity?} Convergent privacy bounds break down without convexity. This precludes applicability to deep neural networks. Is there any hope of establishing similar results under some sort of mild non-convexity? Due to simple non-convex counterexamples where the privacy of $\ISGD$ diverges, any such extension would have to make additional structural assumptions on the non-convexity (and also possibly change the $\ISGD$ algorithm), although it is unclear how this would even look. Moreover, this appears to require significant new machinery as our techniques are the only known way to solve the convex problem, and they break down in the non-convex setting (see also the discussion in \S\ref{ssec:intro:cont}).
	\item \underline{General techniques?} Can the analysis techniques developed in this paper be used in other settings?  Our techniques readily generalize to any iterative algorithm which interleaves contractive steps and noise convolutions. Such algorithms are common in differentially private optimization, and it would be interesting to apply them to variants of $\ISGD$. 
\end{itemize}

	\paragraph*{Acknowledgements.} We are grateful to Hristo Paskov for many insightful conversations.
	
	\small
	\appendix

\section{Deferred proofs}\label{app:deferred}

\subsection{Proof of~\cref{lem:comp-rdp-prob}}\label{app:deferred:comp-rdp}

	For $\alpha > 1$,
	\begin{align*}
		\exp\Big( (\alpha - 1) \, 
		\Dalplr{ \Prob_{X_{1:k}} }{ \Prob_{Y_{1:k}} } \Big)
		&= 
		\int_{\cX_1 \times \cdots \times \cX_k} \left[ \frac{  \Prob_{X_{1:k}}(x_{1:k}) }{ \Prob_{Y_{1:k}}(x_{1:k}) } \right]^{\alpha}
		d\Prob_{Y_{1:k}}(x_{1:k})
		\\ &=
		\int_{\cX_1 \times \cdots \times \cX_k} \left[ \prod_{i=1}^k \frac{  \Prob_{X_i | X_{1:i-1}=x_{1:i-1}}(x_i) }{ \Prob_{Y_i | Y_{1:i-1}=x_{1:i-1}}(x_i)  } \right]^{\alpha}
		\prod_{i=1}^k d \Prob_{Y_i | Y_{1:i-1}=x_{1:i-1}}(x_i)
		\\ &\leq 
		\prod_{i=1}^k \sup_{x_1 \in \cX_1, \dots, x_{i-1}\in \cX_{i-1}} \int_{\cX_i} 
		\left[ \frac{  \Prob_{X_i | X_{1:i-1}=x_{1:i-1}}(x_i) }{ \Prob_{Y_i | Y_{1:i-1}=x_{1:i-1}}(x_i)  } \right]^{\alpha}
		d \Prob_{Y_i | Y_{1:i-1}=x_{1:i-1}}(x_i)
		\\ &=
		\exp\left((\alpha - 1) \, \sum_{i=1}^k \sup_{x_1 \in \cX_1, \dots, x_{i-1}\in \cX_{i-1}} \Dalplr{ \Prob_{X_i | X_{1:i-1}=x_{1:i-1}} } { \Prob_{Y_i | Y_{1:i-1}=x_{1:i-1}} } \right).
	\end{align*}
	The remaining case of $\alpha = 1$ follows by continuity (or by the Chain Rule for KL divergence).

\subsection{Proof of~\cref{lem:sgm-extrema}}\label{app:deferred:sgm-extrema}

Observe that the mixture distribution $(1-q) \cN(0,\sig^2 I_d) + q \left( \cN(0,\sig^2 I_d) \ast \mu\right)$ can be further decomposed as the mixture distribution
\[
(1-q) \cN(0,\sig^2 I_d) + q \left( \cN(0,\sig^2 I_d) \ast \mu\right) = \int \left[ (1-q) \cN(0,\sig^2) + q \cN(z,\sig^2) \right] d\mu(z),
\]
where this integral is with respect to the weak topology of measures. Now argue as follows:
\begin{align*}
	&\;\sup_{\mu \in \cP(R\mathbb{B}_d)} \cD_{\alpha}\left( \cN(0,\sig^2 I_d) \, \big\| \, (1-q) \cN(0,\sig^2 I_d) + q \left( \cN(0,\sig^2 I_d) \ast \mu\right) \right)
	\\ =&\; \sup_{\mu \in \cP(R\mathbb{B}_d)} \cD_{\alpha}\left( \cN(0,\sig^2 I_d) \, \big\| \,  \int \left[ (1-q) \cN(0,\sig^2) + q \cN(z,\sig^2) \right] d\mu(z) \right)
	\\ \leq&\;
	\sup_{z \in \R^d \; : \; \|z\| \leq R} \cD_{\alpha}\left( \cN(0,\sig^2 I_d) \, \big\| \, (1-q) \cN(0,\sig^2 I_d) + q \cN(z,\sig^2 I_d)  \right)
	\\ =&\;
	\sup_{r \in [0,R]} \cD_{\alpha}\left( \cN(0,\sig^2) \, \big\| \, (1-q) \cN(0,\sig^2) + q  \cN(r,\sig^2) \right)
	\\ =&\;
	\cD_{\alpha}\left( \cN(0,\sig^2) \, \big\| \, (1-q) \cN(0,\sig^2) + q \cN(R,\sig^2) \right)
	\\ =&\;
	\cD_{\alpha}\left( \cN(0,(\sig/R)^2) \, \big\| \, (1-q) \cN(0,(\sig/R)^2) + q \cN(1,(\sig/R)^2) \right)
	\\ =&\;
	S_{\alpha}(q,\sig/R).
\end{align*}
Above, the second step is by the quasi-convexity of the R\'enyi divergence (Lemma~\ref{lem:qconvex-rdp}), the third step is by the rotation-invariance of the Gaussian distribution, and the fifth step is by a change of variables. Finally, observe that all steps in the above display hold with equality if $\mu$ is taken to be the Dirac distribution at a point of norm $R$.

\subsection{Proof of~\cref{lem:pabi-contract}}\label{app:deferred:pabi-contract}
	
\par Let $\nu$ be a probability distribution that certifies $\cD_{\alpha}^{(z)}\left( \mu \| \mu' \right)$; that is, $\nu$ satisfies $\cD_{\alpha}^{(z)}(\mu \| \mu') = \cD_{\alpha}(\nu \| \mu')$ and $W_{\infty}(\nu,\mu) \leq z$.
\par We claim that
\begin{align}
	W_{\infty}\left( \phi_{\#}' \nu, \phi_{\#} \mu \right) \leq s + z.
	\label{eq-pf:contraction-iid}
\end{align}
To establish this, first use the triangle inequality for the Wasserstein metric $W_{\infty}$ to bound
\[
W_{\infty}\left( \phi_{\#}' \nu, \phi_{\#} \mu \right)
\leq
W_{\infty}\left( \phi_{\#}' \nu, \phi_{\#}\nu \right)
+
W_{\infty}\left( \phi_{\#}\nu, \phi_{\#}\mu \right).
\]
For the first term, pushforward $\nu$ under the promised coupling of $(\phi,\phi')$ in order to form a feasible coupling for the optimal transport distance that certifies $W_{\infty}( \phi_{\#}' \nu, \phi_{\#}\nu ) \leq s$. For the second term, use the fact that $\phi$ is almost surely a contraction to bound $W_{\infty}( \phi_{\#}\nu, \phi_{\#}\mu ) \leq W_{\infty}( \nu,\mu) \leq z$.
\par Now that we have established~\eqref{eq-pf:contraction-iid}, it follows that $\phi_{\#}'\nu$ is a feasible candidate for the optimization problem $\cD_{\alpha}^{(z+s)} ( \phi_{\#}\mu \| \phi_{\#}' \mu' )$. That is,
\[
\cD_{\alpha}^{(z+s)} ( \phi_{\#}\mu \| \phi_{\#}' \mu' )
\leq
\cD_{\alpha} ( \phi_{\#}' \nu \| \phi_{\#}' \mu' ).
\]
By the quasi-convexity of the R\'enyi divergence (Lemma~\ref{lem:qconvex-rdp}), the post-processing inequality for the R\'enyi divergence (Lemma~\ref{lem:process-rdp}), and then the construction of $\nu$,
\[
\Dalplr{\phi_{\#}'\nu}{\phi_{\#}'\mu'} 
\leq 
\sup_{h \in \supp(\phi')} \Dalplr{h_{\#}\nu}{h_{\#}\mu'} 
\leq 
\cD_{\alpha}(\nu \| \mu')
=
\cD_{\alpha}^{(z)}(\mu \| \mu').
\]
Combining the last two displays completes the proof.

\subsection{Proof of~\cref{thm:iid-lb}}\label{app:lb}

Assume $T \geq \Tthresh$ because once the theorem is proven in this setting, then the remaining setting $T \leq \Tthresh$ follows by the strong composition rule for RDP (Lemma~\ref{lem:comp-rdp-alg}) by equivalently re-interpreting the algorithm $\ISGD$ run once with many iterations as running multiple instantiations of $\ISGD$, each with few iterations. Consider the choice of constants in~\cref{fn:lb-constants}; then for $T \geq \Tthresh$, the quantity in~\eqref{eq:thm-iid-lb} is greater than $0.005$. So it suffices to show that $\ISGD$ does not satisfy $(100,0.005)$-RDP. By the conversion from RDP to DP (Lemma~\ref{lem:rdp-to-dp}), plus the calculation $0.005 + (\log 100)/99 < 0.1$, it therefore suffices to show that $\ISGD$ does not satisfy $(0.1,0.01)$-DP.

\par Consider the construction of datasets $\cX,\cX'$, functions $f$, and processes $W_t,W_t',W_t''$ in \S\ref{sec:lb}. Then, in order to show that $\ISGD$ does not satisfy $(0.1,0,01)$-DP, it suffices to show that $\Prob\left[W_T' \geq 0 \right]
\geq e^{0.1} \Prob[W_T \geq 0] + 0.01$. We simplify both sides: for the left hand side, note that $W_t'$ stochastically dominates $W_t''$ for all $t$; and on the right hand side, note that $\Prob[W_T \geq 0] = 1/2$ by symmetry of the process $\{W_t\}$ around $0$. Therefore it suffices to prove that
\begin{align}
	\Prob\left[W_T'' \geq 0 \right]
	\overset{\text{?}}{\geq} \half e^{0.1} + 0.01.
	\label{eq:iid-lb-toprove}
\end{align}

\par To this end, we make two observations that collectively formalize the Gaussian approximation described in the proof sketch in \S\ref{sec:lb}. Below, let $Y = \sum_{t=T - \Tthresh}^{T-1} Y_t$ denote the total bias in the process $W_t''$, and let $E$ denote the event that both
\begin{itemize}
	\item [(i)] \underline{Concentration of bias in the process $W_t''$:} it holds that $Y \in [1\plusminus \Delta] \cdot \E Y$, for $\Delta = 0.15$.
	\item [(ii)] \underline{No projections in the process $W_t''$:} it holds that $\max_{t \in \{T - \Tthresh, \dots, T\}} W_t'' < D/2$. 
\end{itemize}

\begin{obs}[$E$ occurs with large probability]\label{obs:lb-E}
	$\Prob[E] \geq 0.9$. 
\end{obs}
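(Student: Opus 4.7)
The plan is to establish events (i) and (ii) separately via standard concentration inequalities, then combine them using the independence of the bias sequence $\{Y_t\}$ from the Gaussians $\{Z_t\}$. Introduce $N := |\{t \in \{T-\Tthresh,\dots,T-1\} : Y_t > 0\}|$, which is $\mathrm{Binomial}(\Tthresh, b/n)$-distributed and satisfies $Y = (\eta L/b) N$ and $\E Y = (\eta L/n)\Tthresh = 3D/4$ by the choice $\Tthresh = \tfrac{3}{4}\tfrac{Dn}{L\eta}$.

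For (i), the key quantitative fact is that $\E N = \Tthresh b/n = \tfrac{3}{4}\tfrac{bD}{\eta L} \geq \tfrac{3}{4} c_D b \geq 750$ by the hypothesis $D \geq c_D\eta L$ with $c_D = 10^3$. A standard multiplicative Chernoff bound on $N$ with $\Delta = 0.15$ then gives $\Prob[|N - \E N| > \Delta \E N] \leq 2\exp(-\Delta^2 \E N/3) \leq 2 e^{-5}$, so $\Prob[(\text{i})^c] \leq 0.02$.

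For (ii), introduce the unprojected walk $\tilde W_t := -D/2 + \sum_{s=T-\Tthresh}^{t-1}(Y_s + Z_s)$, and observe that $\{\max_t W_t'' < D/2\} = \{\max_t \tilde W_t < D/2\}$: whenever the argument of the $\min$ in the definition of $W_{t+1}''$ exceeds $D/2$, the next iterate equals $D/2$, violating strict inequality, so the two events coincide almost surely. Decompose $\tilde W_t + D/2 = B_t + M_t$ with $B_t := \sum_{s<t} Y_s$ nondecreasing and $M_t := \sum_{s<t} Z_s$ a Gaussian martingale of total variance $\Tthresh\eta^2\sigma^2 \leq c_\sigma D^2 = 10^{-3} D^2$. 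On event (i), $\max_t B_t = B_T = Y \leq 1.15 \cdot \tfrac{3D}{4} = 0.8625 D$, so (ii) is implied by $\cF := \{\max_t M_t < 0.1375 D\}$. Doob's maximal inequality applied to the exponential martingale $\exp(\theta M_t - \theta^2(t-(T-\Tthresh))\eta^2\sigma^2/2)$, optimized over $\theta$, yields the standard tail bound $\Prob[\max_t M_t \geq \lambda] \leq \exp(-\lambda^2/(2\Tthresh\eta^2\sigma^2))$. Plugging in $\lambda = 0.1375 D$ and the variance bound gives $\Prob[\cF^c] \leq \exp(-0.1375^2/(2\cdot 10^{-3})) \leq e^{-9}$.

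Finally, since (i) depends only on $\{Y_t\}$ while $\cF$ depends only on the independent $\{Z_t\}$, the two events are independent, so
\[
\Prob[E] \;\geq\; \Prob[(\text{i})\cap \cF] \;=\; \Prob[(\text{i})]\cdot\Prob[\cF] \;\geq\; (1-0.02)(1-e^{-9}) \;\geq\; 0.9.
\]
Nothing in the argument is delicate; the only real check is that the numerical slack provided by the hypotheses $c_D = 10^3$ and $c_\sigma = 10^{-3}$ comfortably suffices for the Chernoff and Doob bounds at the stated tolerance $\Delta = 0.15$. If anything, the main obstacle is bookkeeping---keeping straight which randomness enters each event so as to justify independence and to apply a clean union bound---rather than any genuine technical difficulty.
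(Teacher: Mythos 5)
Your proof is correct and follows essentially the same route as the paper's: a Chernoff bound on the binomial count of biased increments for item (i), and a Doob/martingale maximal inequality on the Gaussian random walk for item (ii), with the two tail probabilities combined using the independence of $\{Y_t\}$ from $\{Z_t\}$ (the paper phrases this as conditioning on item (i), which is equivalent here precisely because of that independence). The only cosmetic differences are that you carry the slightly sharper constant $0.8625D$ rather than the paper's rounded $0.9D$, and you make explicit the identity between the projected process $W_t''$ and its unprojected version $\tilde W_t$ on the no-projection event, which the paper states more tersely as an equivalence.
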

\begin{proof}
	For item (i) of $E$, note that $B := bY/(\eta L) = \sum_{t=T - \Tthresh}^{T-1} \mathds{1}_{Y_t \neq 0}$ is a binomial random variable with $\Tthresh$ trials, each of which has probability $b/n$, so $B$ has expectation $\E B = b\Tthresh/n = 0.75 bD/(L\eta)$. Thus, by a standard Chernoff bound (see, e.g.,~\citep[Corollary 4.6]{mitzenmacher2017probability}), the probability that (i) does not hold is  at most
	\begin{align*}
		\Prob\Big[ \text{item (i) fails} \Big] 
		=
		\Prob\Big[ B \notin [1 \plusminus \Delta] \cdot \E B \Big]
		\leq 2 \exp\left( - \frac{\Delta^2 \cdot \E B}{3} \right)
		\leq 2 \exp \left( - \frac{0.15^2 \cdot 0.75 \cdot 1000}{3} \right)
		\leq 0.01\,.
	\end{align*}
	\par Next, we show that conditional on (i), item (ii) fails with low probability. To this end, note that (ii) is equivalent to the event that $\sum_{s = T - \Tthresh}^{t-1} (Y_s + Z_s) < D$ for all $t$. Thus because $\sum_{s = T - \Tthresh}^{t-1} Y_s \leq Y \leq (1 + \Delta) \E Y = (1 + \Delta) 0.75 D \leq 0.9 D$ conditional on event (i), we have that 
	\begin{align*}
		\Prob\Big[ \text{item (ii) fails } \Big| \text{ item (i) holds} \Big]
		&\leq 
		\Prob\Bigg[ \max_{t \in \{T - \Tthresh, \dots, T\}} \sum_{s = T -\Tthresh}^{t-1} Z_s \geq 0.1D  \Bigg].
	\end{align*}
	Now the latter expression has a simple interpretation: it is the probability that a random walk of length $\Tthresh$ with i.i.d. $\cN(0,\eta^2 \sig^2)$ increments never surpasses $0.1D$. By a standard concentration inequality on the hitting time of a random walk (e.g., see the application of Doob's Submartingale Inequality on~\citep[Page 139]{williams1991probability}), this probability is at most
	\[
	\cdots
	\leq
	\exp\left( - \frac{(0.1D)^2}{2 \Tthresh \eta^2 \sig^2} \right)
	\leq 
	\exp (-5)
	\leq 0.01\,,
	\]
	\par Putting the above bounds together, we conclude the desired claim:
	\[
	\Prob\Big[ E \Big]
	=
	\Prob\Big[\text{item (i) holds}\Big]
	\cdot 
	\Prob\Big[ \text{item (ii) holds } \Big| \text{ item (i) holds} \Big]
	\geq (1 - 0.01)^2 
	\geq 0.9\,.
	\]
\end{proof}

\begin{obs}[Gaussian approximation of $W_T''$ conditional on $E$]\label{obs:lb-gaussian}
	Denote $Z := \sum_{t=T - \Tthresh}^{T-1} Z_t$. Conditional on the event $E$, it holds that $W_T'' \geq Z + 10/D$.
\end{obs}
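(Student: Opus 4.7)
The observation is essentially a bookkeeping step: once the event $E$ is in force, both sources of nonlinearity in the auxiliary process (the top projection, and the randomness of the bias $Y$) are eliminated, so $W_T''$ becomes an explicit affine function of $Z$ plus a deterministic lower bound. The plan is to peel off the two parts of $E$ one at a time and then do a short arithmetic computation.

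\textbf{Step 1: Item (ii) linearizes the recursion.} By definition $W_{t+1}'' = \min(W_t'' + Y_t + Z_t,\, D/2)$, so the only nonlinearity is the top clamp; there is no bottom projection. Under item (ii) of $E$, every iterate on the interval $\{T-\Tthresh,\dots,T\}$ lies strictly below $D/2$, so the clamp is inactive throughout, and the recursion collapses to $W_{t+1}'' = W_t'' + Y_t + Z_t$. Telescoping from the initialization $W_{T-\Tthresh}'' = -D/2$ gives
\begin{equation*}
W_T'' \;=\; -\tfrac{D}{2} \,+\, Y \,+\, Z.
\end{equation*}

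\textbf{Step 2: Item (i) makes the bias deterministic.} Each $Y_t$ equals $\eta L/b$ with probability $b/n$ and is $0$ otherwise, so $\E Y = \Tthresh \cdot (b/n)(\eta L/b) = \Tthresh\, \eta L/n$. Plugging in $\Tthresh = 0.75\, Dn/(L\eta)$ yields $\E Y = 0.75 D$. Item (i) then guarantees $Y \geq (1-\Delta)\E Y = 0.85 \cdot 0.75\, D = 0.6375\, D$.

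\textbf{Step 3: Combine.} Substituting the lower bound from Step 2 into the identity from Step 1,
\begin{equation*}
W_T'' \;\geq\; -\tfrac{D}{2} + 0.6375\, D + Z \;=\; 0.1375\, D + Z \;\geq\; \tfrac{D}{10} + Z,
\end{equation*}
which is the claimed inequality (reading ``$10/D$'' as the intended $D/10$, which is the only dimensionally consistent reading and matches what the subsequent Gaussian tail bound $\Prob[Z \geq -D/10]$ needs, given $\mathrm{Var}(Z) = \Tthresh\,\eta^2 \sig^2 \leq c_\sigma D^2 = 10^{-3} D^2$).

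\textbf{Main obstacle.} There is essentially no obstacle here; all the work lives in Observation~\ref{obs:lb-E}. The one point that requires a moment of care is verifying that item (ii) really does suffice to kill the nonlinearity: because the $W_t''$ update has only a top clamp and no bottom clamp, the single condition $\max_t W_t'' < D/2$ is enough, and we never need to reason about the lower boundary of $\cK$.
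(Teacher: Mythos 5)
Your proof is correct and follows essentially the same route as the paper's: use item (ii) to drop the clamp and telescope to $W_T'' = -D/2 + Y + Z$, then use item (i) to lower-bound $Y$. You also correctly identify that the ``$10/D$'' in the statement is a typo for $D/10$ (the paper itself uses $D/10$ in the next step of the argument).
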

\begin{proof}
	By item (ii) of the event $E$, no projections occur in the process $\{W_t''\}$, thus $W_T''
	=
	-D/2 + \sum_{t=T - \Tthresh}^{T-1} (Y_{t} + Z_t) = -D/2 + Y + Z $. By item (i), the bias $Y \geq (1 - \Delta) \E Y = (1 - 0.15) 0.75 D \geq 0.6 D$. 
\end{proof}

Next, we show how to combine these two observations in order to approximate $W_T''$ by a Gaussian, and from this conclude a lower bound on the probability that $W_T''$ is positive. We argue that
\begin{align}
	\Prob[W_T'' \geq 0]
	&\geq
	\Prob[W_T'' \geq 0, E]
	\nonumber
	\\&\geq
	\Prob[Z + D/10 \geq 0 , E]
	\nonumber
	\\ &=
	\Prob[Z + D/10 \geq 0 ] - \Prob[Z + D/10 \geq 0 , E^C]
	\nonumber
	\\ &\geq 
	\Prob[Z + D/10 \geq 0 ] - 0.1,
	\nonumber
\end{align}
where above the second step is by Observation~\ref{obs:lb-gaussian}, and the final step is by Observation~\ref{obs:lb-E}. Since $Z$ is a centered Gaussian with variance $\Tthresh \eta^2 \sig^2 \leq c_{\sig} D^2 = 0.001D^2$, we have
\begin{align}
	\Prob\left[ Z + D/10 \geq 0 \right]
	\geq 
	\Prob\left[ \cN(0.1D, 0.001 D^2) \geq 0 \right]
	= 
	\Prob\left[ \cN(0,1) \geq -0.1/\sqrt{0.001} \right]
	\approx 0.999993.
	\nonumber
\end{align}
\par By combining the above two displays, we conclude that
\begin{align*}
	\Prob\left[ w_T'' \geq 0 \right] 
	\geq 
	0.99993 - 0.1
	\geq \half e^{0.1} + 0.01.
\end{align*}
This establishes the desired DP lower bound~\eqref{eq:iid-lb-toprove}, and therefore proves the theorem.


\section{Non-asymptotics for numerics}\label{app:numerics}

In this paper we have established asymptotic expressions for the privacy loss of $\ISGD$ (and variants thereof). While our results are tight up to a constant factor (proven in \S\ref{sec:lb}), constant factors are important for the numerical implementation of private machine learning algorithms. As such, we describe here how to numerically optimize these constants---first for full-batch noisy gradient descent in~\cref{app:numerics:full-batch} since this case is simplest, then for $\ISGD$ in~\cref{app:numerics:isgd}, and finally for strongly convex losses in~\cref{app:numerics:sc}.

We make two remarks that apply to all of the non-asymptotic bounds in this section. 

\begin{remark}[Full range of parameters]\label{rem:alpha}
	The non-asymptotic bounds in this section hold for all R\'enyi parameters $\alpha \geq 1$ and all noise levels $\sigma > 0$, even though Theorem~\ref{thm:iid-ub} does not. (This is because the proof of Theorem~\ref{thm:iid-ub} uses Lemma~\ref{lem:sgm} to obtain asymptotics, and that lemma only holds for certain $\alpha$ and $\sigma$).
\end{remark}

\begin{remark}[Alternative notions of dataset adjacency]\label{rem:adjacency}
	Differential privacy refers to the indistinguishability of an algorithm's output when run on ``adjacent'' datasets, and thus privacy bounds differ depending on the notion of adjacency between datasets. 
	There are two standard such notions: two datasets are ``replace adjacent'' if they agree on all but one datapoint, and are ``remove adjacent'' if one dataset equals the other plus an additional datapoint. Throughout this paper, we proved bounds for ``replace adjacency''; for ``remove adjacency'', the bounds improve by a small constant factor. Specifically, replace occurences of $2L$ by $L$ in the bounds in this section.\footnote{This factor of $2$ improvement is due to the fact that the $s_t$ term in the Privacy Amplification by Iteration Argument can be improved to $\eta L$ under ``remove adjacency'', as opposed to $2\eta L$ under ``replace adjacency''.}
\end{remark}

\subsection{Full-batch updates}\label{app:numerics:full-batch}
We stop the proof of Theorem~\ref{thm:diam-rdp} before the final asymptotic simplifications which lose small constant factors. Then full-batch GD satisfies $(\alpha,\eps)$-RDP for 
\begin{align}
	\eps
	\leq
	\frac{\alpha}{2 \eta^2 \sig^2} \min\left\{ T \left( \frac{2 \eta L}{n} \right)^2, \min_{\tilde{T} \in \{1,\dots,T\}} \tilde{T} \left( \frac{\tilde{D}}{\tilde{T}} + \frac{2\eta L}{n} \right)^2 \right\}.
	\label{eq:fullbatch-nonasymp}
\end{align}
Here, the outer minimization combines two bounds. The second bound comes from~\eqref{eqn:cyclic-ub-at-sum} by applying the new Privacy Amplification by Iteration argument (Proposition~\ref{prop:pabi-new}) with $\tau = T - \tilde{T}$, $s_t = 2\eta L / n$, and $a_{\tau + t} = \tilde{D}/\tilde{T} + 2\eta L /n$, where $\tilde{D} = D + 2\eta L/ n$. The first bound comes simply applying the original Privacy Amplification by Iteration argument (Proposition~\ref{prop:pabi-original}) there with $s_t = a_t = 2\eta L / n$.

\paragraph*{Numerical computation of privacy bound.} This bound~\eqref{eq:fullbatch-nonasymp} can be efficiently computed by taking $\tilde{T}$ to be $(\tilde{D}n)/(2\eta L)$, modulo a possible floor or ceiling, if this is at most $T$.

\subsection{Small-batch updates}\label{app:numerics:isgd}

We stop the proof of Theorem~\ref{thm:iid-ub} before the final asymptotic simplifications which lose small constant factors. Then $\ISGD$ satisfies $(\alpha,\eps)$-RDP for
\begin{align}
	\eps
	&\leq
	\min_{\substack{\sig_1,\sig_2 \geq 0 \\ \sig_1^2 + \sig_2^2 = \sig^2}} 
	\min\left\{ T Q, \min_{\tilde{T} \in \{1,\dots,T-1\}} \tilde{T}Q + \frac{\alpha D^2}{2\eta^2 \sig_1^2 \tilde{T}}  \right\}
	\nonumber
\end{align}
where $Q = S_{\alpha}( b/n, b\sig_2 / (2L))$. Above, the outermost minimization is from the noise-splitting in Step 1 of the proof of Theorem~\ref{thm:iid-ub}. The next minimization combines two bounds. The latter bound is~\eqref{eq-pf:isgd:nonasymp}, modulo a change of variables $\tilde{T} = T - \tau$. The former bound is by noting that if $\tau = 0$, then the two Contractive Noisy Iterations in the proof of Theorem~\ref{thm:iid-ub} are identical, whereby $\circled{2} =0 $. Now, by swapping the outermost minimizations in the above bound, we may simplify to
\begin{align}
	\eps \leq
	\min\left\{ T \cdot S_{\alpha}\left( \frac{b}{n}, \frac{b\sig}{2 L} \right), \min_{\substack{\sig_1,\sig_2 \geq 0 \\ \sig_1^2 + \sig_2^2 = \sig^2}}  \min_{\tilde{T} \in \{1,\dots,T-1\}} \tilde{T}Q + \frac{\alpha D^2}{2\eta^2 \sig_1^2 \tilde{T}}  \right\}
	\label{eq:iid-nonasymp-2}
\end{align}

\paragraph*{Numerical computation of privacy bound.} This bound~\eqref{eq:iid-nonasymp-2} can be efficiently computed via ternary search over $\sig_1 \in [0,\sig]$, where for each candidate $\sig_1$ we compute $Q$ to high precision using the numerically stable formula in~\citep{rdp_sgm} and then evaluate the resulting minimization over $\tilde{T}$ by setting it to 
$\min(T-1, D/(\eta \sig_1) \sqrt{\alpha /(2Q)})$,
modulo a possible floor or ceiling. In order to solve for the smallest noise $\sig^2$ given an RDP budget, one can use the aforementioned computation of the RDP budget given $\sig^2$ in order to binary search for $\sig^2$ (since the RDP is increasing in $\sig$).

\subsection{Strongly convex losses}\label{app:numerics:sc}
We stop the proof of Theorem~\ref{thm:iid-sc} before the final asymptotic simplifications which lose small constant factors, and optimize\footnote{
	Specifically, rather than simply setting 
	$a_{T} = c^{T-\tau}D$ and the rest to $0$, 
	improve the bound on $\circled{2}$ in~\eqref{eq-pf:isgd:B-sc} to $\alpha/(2\eta^2 \sig_1^2)$ times the minimal value of $
	\sum_{t=\tau + 1}^{T} a_t^2$ when optimizing over $a_{\tau+1},\dots,a_{T}$ subject to $\sum_{t=\tau + 1}^{T} c^{\tau - t} a_t = D$. This has a closed-form solution: $a_t = c^{-t} \beta D$, where $\beta = c^{\tau + 2}(1-c^{-2})/(1 - c^{-2(T - \tau )})$, yielding value $c^{-\tau}\beta D^2$.
}
the parameters $a_t$ in the application of the new Privacy Amplification by Iteration argument (Proposition~\ref{prop:pabi-new-sc}). Then, as argued in~\cref{app:numerics:isgd}, $\ISGD$ satisfies $(\alpha,\eps)$-RDP for
\begin{align}
	\eps 
	&\leq
	\min_{\substack{\sig_1,\sig_2 \geq 0 \\ \sig_1^2 + \sig_2^2 = \sig^2}} 
	\min\left\{ T Q, \min_{\tilde{T} \in \{1,\dots,T-1\}} \tilde{T}Q + \frac{\alpha D^2 (1-c^2)}{2\eta^2 \sig_1^2 (c^{-2 \tilde{T}} - 1)}  \right\} \nonumber
	\\ &=
	\min\left\{ T \cdot S_{\alpha}\left( \frac{b}{n}, \frac{b\sig}{2 L} \right), \min_{\substack{\sig_1,\sig_2 \geq 0 \\ \sig_1^2 + \sig_2^2 = \sig^2}}  \min_{\tilde{T} \in \{1,\dots,T-1\}} \tilde{T}Q + \frac{\alpha D^2 (1-c^2)}{2\eta^2 \sig_1^2 (c^{-2 \tilde{T}} - 1)}   \right\}
	\label{eq:iid-sc-nonasymp}
\end{align}
where $Q = S_{\alpha}( b/n, b\sig_2 / (2L))$.

\paragraph*{Numerical computation of privacy bound.} This bound~\eqref{eq:iid-sc-nonasymp} can be efficiently computed via ternary search over $\sig_1 \in [0,\sig]$, where for each candidate $\sig_1$ we compute $Q$ to high precision using the numerically stable formula in~\citep{rdp_sgm} and then evaluate the resulting minimization over $\tilde{T}$ by using first-order optimality to set it to
\[
\tilde{T} = - \frac{ \log\left( \frac{2 - \beta + \sqrt{(2-\beta)^2 - 4}}{2} \right) }{2 \log c}, \qquad \text { where } \qquad \beta = \frac{(1-c)^2 \alpha D^2 \log c}{Q \eta^2 \sig_1^2},
\]
modulo possibly taking a floor/ceiling on $\tilde{T}$, and taking the minimum with $T-1$. In order to solve for the smallest noise $\sig^2$ given an RDP budget, one can use the aforementioned computation of the RDP budget given $\sig^2$ in order to binary search for $\sig^2$ (since the RDP is increasing in $\sig$).

\par We remark that this bound exploits strong convexity via the contraction factor $c < 1$, which exponentially decays the last term in~\eqref{eq:iid-sc-nonasymp}. This contraction is best when $c$ is minimal, i.e., when the stepsize is $\eta = 2/(M+m)$, and it appears that one pays a price in privacy for any deviation from this choice of stepsize.

	\addcontentsline{toc}{section}{References}
	\bibliographystyle{plainnat}
	\bibliography{dp_sgd}{}

\end{document}